\documentclass[a4paper,11pt]{article}
\usepackage{fullpage} 

\usepackage{tikz}
\usetikzlibrary{trees}

\usepackage{diagbox} 
\usepackage{slashbox} 
\usepackage{tabularx} 
\usepackage{bbm} 
\usepackage{pifont}  

\usepackage{amsmath}
\usepackage{amssymb}
\usepackage{amsthm}
\usepackage{mathtools}
\usepackage{mathdots}
\usepackage{appendix}
\usepackage{graphicx}
\usepackage{enumerate}
\usepackage{microtype}
\usepackage{mleftright}
\usepackage{mdframed}
\usepackage{authblk}
\usepackage[most]{tcolorbox}
\usepackage[english]{babel}
\usepackage[utf8]{inputenc}
\usepackage{algorithm}
\usepackage[noend]{algpseudocode}
\usepackage{xspace}
\usepackage{tikz}
\usetikzlibrary{arrows.meta,positioning}
\usepackage{pgfplots}
\pgfplotsset{width=8cm,compat=1.9}
\usepackage{mathdots}
\usepackage{url}
\usepackage{forest}
\forestset{default preamble={for tree={s sep+=1cm}}}
\usepackage{caption,subcaption}
\usepackage[noadjust]{cite}
\usepackage{hyperref}

\newtheorem{theorem}{Theorem}[section]

\newtcolorbox{boxtheorembox}{colback=white,colframe=black,boxrule=0.8pt}

\newenvironment{boxtheorem}[1][]%
{\begin{boxtheorembox}\begin{theorem}[#1]}%
{\end{theorem}\end{boxtheorembox}}

\tcbset{
    rounded corners,
    colback = white,
    before skip = 0.2cm,
    after skip = 0.5cm,
    boxrule = 1.5pt,
    arc = 5pt
}

\makeatletter
\renewcommand{\ALG@name}{Protocol}
\makeatother
\allowdisplaybreaks

\graphicspath{ {./figures/} }

\usepackage{algorithm}
\usepackage[noend]{algpseudocode}

\DeclarePairedDelimiter{\ceil}{\lceil}{\rceil}
\DeclarePairedDelimiter{\floor}{\lfloor}{\rfloor}

\newtheorem*{theorem*}{Theorem}
\newtheorem{proposition}[theorem]{Proposition}
\newtheorem{lemma}[theorem]{Lemma}
\newtheorem{corollary}[theorem]{Corollary}
\newtheorem*{corollary*}{Corollary}

\theoremstyle{definition}
\newtheorem{definition}[theorem]{Definition}

\newcommand{\cH}{\mathcal{H}}
\newcommand{\cX}{\mathcal{X}}
\newcommand{\cY}{\mathcal{Y}}
\newcommand{\cS}{\mathcal{S}}
\newcommand{\cF}{\mathcal{F}}

\newcommand{\cB}{\mathcal{B}}

\newcommand{\cT}{\mathcal{T}}

\newcommand{\ERM}{\operatorname{ERM}}

\newcommand{\Lrn}{\mathsf{Lrn}}

\newcommand{\Maj}{\mathsf{Maj}}

\newcommand{\Ber}{\mathrm{Ber}}
\newcommand{\Bin}{\mathrm{Bin}}

\newcommand{\eps}{\varepsilon}

\newcommand{\etoe}{\mathsf{e2e}}
\newcommand{\ct}{\mathsf{CoT}}

\newcommand{\VC}{\mathtt{VC}}
\newcommand{\LD}{\mathtt{L}}

\newcommand{\Nat}{\mathtt{Nat}}

\newcommand{\ARL}{\mathrm{ATdim}}

\newcommand{\tree}{\mathbf{T}}

\newcommand{\ATdim}{\mathrm{ATdim}}
\newcommand{\lin}{\operatorname{lin}}
\newcommand{\Ldim}{\mathrm{Ldim}}

\newcommand{\Dim}{\mathrm{Dim}}

\newif\ifanonymous
\anonymousfalse

\begin{document}
\title{Sample Complexity of Autoregressive Reasoning: Chain-of-Thought vs.\ End-to-End}

\author[1]{Steve Hanneke}
\author[2]{Idan Mehalel}
\author[3]{Shay Moran}
\affil[1]{Purdue University}
\affil[2]{The Hebrew University}
\affil[3]{Technion and Google Research}


\maketitle

\begin{abstract}
Modern large language models generate text autoregressively, producing tokens one at a time. 
To study the learnability of such systems, Joshi et al.~(COLT 2025) introduced a PAC-learning framework for \emph{next-token generators}, the primitive underlying autoregressive models. 
In this framework, an unknown next-token generator maps a sequence of tokens to the next token and is iteratively applied for $T$ steps, producing a chain of tokens whose final token constitutes the model’s output. 
The learning task is to learn the input-output mapping induced by this autoregressive process.
Depending on the available supervision, training examples may reveal only the final output (\emph{End-to-End supervision}) or the entire generated chain (\emph{Chain-of-Thought supervision}). 
This raises two natural questions: how the sample complexity depends on the generation length $T$, and how much Chain-of-Thought supervision can reduce this dependence.

In this work we give a nearly complete answer to both questions by uncovering a \emph{taxonomy of how the sample complexity scales with $T$}.
For End-to-End learning, we show that the landscape is remarkably rich: subject to mild conditions, essentially any growth rate $r(T)$ between constant and linear can arise as the sample complexity, and combined with the linear upper bound of Joshi et al., this yields an essentially complete characterization.
In contrast, under Chain-of-Thought supervision we show that the sample complexity is \emph{independent of~$T$}, demonstrating that access to intermediate reasoning steps can eliminate the dependence on the generation length altogether. 
Our analysis introduces new combinatorial tools, and as corollaries we resolve several open questions posed by Joshi et al.~regarding the dependence of learnability on the generation length and the role of Chain-of-Thought supervision.
\end{abstract}




\section{Introduction} \label{sec:intro}





Imagine trying to learn mathematics from a master. One way is to observe only the final products: the statements they prove and the polished proofs they eventually present. Another is to also witness the process that led there: the intermediate ideas, the false starts, the examples they test, the approaches they abandon, and the partial structures they build along the way. In the first mode, one sees only the input-output behavior of expertise. In the second, one also sees the internal trajectory that connects the two. Intuitively, the latter seems far more informative. It reveals not just what the expert concluded, but how the conclusion was reached, and it is natural to expect that such richer supervision could substantially accelerate learning.

A closely related distinction has become central in modern machine learning, especially in the context of large language models. These models generate text autoregressively, producing one token at a time, and often solve complex tasks by generating intermediate reasoning steps before arriving at a final answer. This intermediate reasoning is commonly referred to as a \emph{chain of thought}. In practice, one may train or supervise such models in at least two different ways: either by providing supervision only on the final answer, or by also revealing the intermediate chain of thought that leads to it. The latter form of supervision has attracted considerable attention, both empirically and conceptually, as a possible explanation for improved reasoning performance.

Motivated by this distinction, Joshi et al.~\cite{joshi2025theory} introduced an elegant PAC-style framework for studying autoregressive generation in a clean and formal way. In their model, the basic object is a \emph{next-token generator}: a function that maps a partial sequence to the next token. Iterating such a generator produces a trajectory of intermediate tokens, which may be viewed as a chain of thought, and the final token is then interpreted as the end-to-end output. This framework makes it possible to ask, in precise statistical terms, how the sample complexity of learning depends on the length of the generated reasoning process, and to compare the power of end-to-end supervision with that of chain-of-thought supervision.

\subsection{The autoregressive learning model}

We now turn to the formal model introduced by Joshi et al.~\cite{joshi2025theory}. Fix an alphabet $\Sigma$, and let $\Sigma^\star$ denote the set of all finite strings over $\Sigma$. A \emph{next-token generator} is a function
\[
f:\Sigma^\star \to \Sigma,
\]
which maps each finite string $x\in\Sigma^\star$ to a single symbol $f(x)\in\Sigma$. Such a function naturally induces an autoregressive generation process. For every next-token generator $f$, define its \emph{apply-and-append} map
\[
\bar f:\Sigma^\star\to\Sigma^\star,
\qquad
\bar f(x)=x\circ f(x),
\]
where $x\circ f(x)$ is the string obtained by appending the symbol $f(x)$ to the end of $x$. Iterating this operation $T$ times yields the $T$-step continuation of the input. Writing
\[
\bar f^{\,T}
=
\underbrace{\bar f\circ\cdots\circ \bar f}_{T\text{ times}},
\]
we define the \emph{$T$-step chain of thought} of $f$ on $x$ to be the length-$T$ suffix of $\bar f^{\,T}(x)$, denoted
\[
f^{\mathsf{CoT},T}(x)\in\Sigma^T,
\]
and the corresponding \emph{end-to-end output} to be its last symbol,
\[
f^{\mathsf{e2e},T}(x)\in\Sigma.
\]
Thus, each base class $\mathcal F$ of next-token generators induces two derived classes,
\[
\mathcal F^{\mathsf{CoT},T}
:=
\{f^{\mathsf{CoT},T}:f\in\mathcal F\},
\qquad
\mathcal F^{\mathsf{e2e},T}
:=
\{f^{\mathsf{e2e},T}:f\in\mathcal F\}.
\]
The class $\mathcal F^{\mathsf{CoT},T}$ records the full $T$-step autoregressive trajectory, whereas $\mathcal F^{\mathsf{e2e},T}$ records only the final generated symbol.

We are interested in PAC learning the end-to-end prediction task induced by $\mathcal F$. Concretely, let $\mathcal D$ be an unknown distribution over $\Sigma^\star$, and let $f_\star\in\mathcal F$ be an unknown target generator. In both learning regimes, the learner receives i.i.d.\ \emph{labeled training examples} whose inputs are drawn from $\mathcal D$. The difference between the regimes lies only in the amount of label information revealed during training.

In the \emph{end-to-end} regime, each training example has the form
\[
\bigl(x, f_\star^{\mathsf{e2e},T}(x)\bigr),
\]
so the learner observes only the final generated symbol. Thus, this is exactly the standard PAC learning problem for the induced class $\mathcal F^{\mathsf{e2e},T}$.

In the stronger \emph{chain-of-thought supervision} regime, each training example has the form
\[
\bigl(x, f_\star^{\mathsf{CoT},T}(x)\bigr),
\]
so the learner observes the entire $T$-symbol autoregressive trajectory generated from $x$, and in particular also its final symbol.

At test time, however, the task is identical in both regimes: given a fresh unlabeled example~$x\sim\mathcal D$, the learner must predict the final symbol $f_\star^{\mathsf{e2e},T}(x)$. Accordingly, performance is always measured by the end-to-end error\footnote{Correctness in this model is defined with respect to a single target output. This is especially natural in the binary setting: when $\Sigma=\{0,1\}$, one may interpret input strings as prompts posing yes/no questions to an autoregressive expert, and such questions presumably have a unique correct binary answer.  For larger output spaces, however, there may naturally be several correct outputs -- for example, several valid proofs of the same theorem. In such settings, a more realistic formulation would use a more general loss function that assigns zero loss to multiple outputs, akin to multilabel prediction; we discuss this further in Section~\ref{sec:open-questions}.}
\[
\Pr_{x\sim\mathcal D}\bigl[h(x)\neq f_\star^{\mathsf{e2e},T}(x)\bigr].
\]

Our goal is to understand how the complexity of autoregressive learning scales with the generation length $T$. In particular, we ask how the sample complexity, namely the number of training examples needed to achieve a prescribed accuracy and confidence, depends on $T$ in the end-to-end and chain-of-thought supervision regimes, and to what extent chain-of-thought supervision can reduce this complexity.

\paragraph{Organization.}
Section~\ref{sec:main-results} presents our main results.
Sections~\ref{sec:overview}, \ref{sec:related}, and \ref{sec:prel} then provide a technical overview, discuss related work, and review the technical background, respectively.
The remaining sections contain the full proofs and additional results: Section~\ref{sec:e2e-proofs} treats the end-to-end setting, Section~\ref{sec:cot-proofs} treats learning with chain-of-thought supervision, and Section~\ref{sec:non-vc-proofs} contains additional results. 

\section{Main Results} \label{sec:main-results}

\paragraph{Summary of our results.}
We begin by summarizing the picture established by \cite{joshi2025theory}.
In the chain-of-thought (CoT) regime, they showed that the sample complexity grows at most logarithmically with the generation length $T$, whereas in the end-to-end (e2e) regime it grows at most linearly in $T$. In the e2e setting, they also exhibited classes for which this linear dependence is attained. In contrast, in the CoT setting, it remained open whether the logarithmic dependence on $T$ is ever necessary. They further proved that for classes of finite Littlestone dimension, the e2e sample complexity is at most logarithmic in $T$.
These results left several natural questions open: whether CoT truly requires any dependence on $T$ at all; whether intermediate rates between constant and linear are attainable in the e2e setting; and whether some dimension, perhaps refining Littlestone dimension, characterizes sublinear e2e behavior.

In this work, we resolve these questions as follows:

\begin{itemize}
    \item In the CoT regime, we show that for every base class of finite VC dimension, the sample complexity is independent of $T$. This closes the gap left by \cite{joshi2025theory} in the CoT setting by showing that the sample complexity is in fact independent of~$T$.

    \item In the e2e regime, we establish a taxonomy of achievable growth rates, showing that, subject to mild conditions, every rate between constant and linear can occur. This fills the main gap left open by \cite{joshi2025theory} concerning the attainability of intermediate rates, and in particular rules out a dichotomy between logarithmic and linear behavior.

    \item We use a diagonalization argument to prove that, in full generality, there is no dimension-based characterization of sublinear e2e rates. At the same time, we introduce a new tree-based combinatorial dimension that yields a sufficient condition for logarithmic growth. Thus, the question left open by \cite{joshi2025theory} has a negative answer in general, while in the logarithmic regime we refine and strengthen their Littlestone-dimension-based sufficient condition.
\end{itemize}

\paragraph{Roadmap for this section.}
In Section~\ref{sec:questions}, we introduce the formal questions and basic definitions. 
Section~\ref{sec:cot-intro} studies the chain-of-thought supervision regime, and Section~\ref{sec:e2e-intro} studies the end-to-end regime. 
Finally, Section~\ref{sec:open-questions} contains additional results and open questions.

\subsection{Questions and definitions} \label{sec:questions}
We focus on the binary case $\Sigma=\{0,1\}$. This corresponds to a setting in which the target autoregressive generator ultimately produces a binary answer after generating $T-1$ intermediate tokens. In particular, one may think of prompts as yes/no questions answered by an expert through an autoregressive reasoning process. Our results and techniques extend more generally to any finite token space $\Sigma$, but restricting to the binary setting keeps the presentation cleaner. The case of infinite $\Sigma$ lies beyond the scope of this paper.

Let us briefly recall the notion of sample complexity in PAC learning. For a domain~$\mathcal X$ and a hypothesis class $\mathcal H\subseteq\{0,1\}^{\mathcal X}$, let
\(m_{\mathcal H}(\epsilon,\delta)\)
denote the smallest number of i.i.d.\ labeled examples sufficient to ensure that, for every distribution on $\mathcal X$ and every target hypothesis in $\mathcal H$, the learner outputs with probability at least $1-\delta$ a predictor whose error on a fresh random example is at most $\epsilon$.

The key combinatorial parameter governing this quantity is the \emph{VC dimension}. Namely,~$\VC(\mathcal H)$ is the largest integer $d$ for which there exist points $x_1,\dots,x_d\in\mathcal X$ that are shattered by $\mathcal H$, meaning that every binary labeling of these points is realized by some hypothesis in $\mathcal H$. For classes of finite VC dimension, this parameter characterizes the sample complexity up to universal constant factors:
\[
m_{\mathcal H}(\epsilon,\delta)
=
\Theta\!\left(\frac{d+\log(1/\delta)}{\epsilon}\right).
\]
We refer to~\cite{vapnik:74,blumer1989learnability,hanneke2016optimal,shalev2014understanding} for proofs and further background. In particular, this characterization applies directly to the end-to-end problem, since it is simply the ordinary PAC learning problem for the induced class $\mathcal F^{\mathsf{e2e},T}$. Accordingly, in the end-to-end regime, the problem reduces to understanding how the VC dimension of the induced class $\mathcal F^{\mathsf{e2e},T}$ scales with the generation length $T$.

The chain-of-thought supervision regime is different. There, the learner is still tested only on its ability to predict the final token, but during training it receives additional information in the form of the full chain of thought. Thus, this is not formally a standard PAC classification problem: the training examples contain side information that is unavailable at test time. In this sense, chain-of-thought supervision breaks the usual symmetry between training and test examples, and its sample complexity is therefore not captured directly by the classical VC characterization.

This leads to the main questions studied in this work. Suppose that the base class $\mathcal F$ has finite VC dimension.\footnote{Throughout the paper, we restrict attention to base classes of finite VC dimension. This is a natural assumption, and in Section~\ref{sec:open-questions} we further justify it by showing that allowing arbitrary base classes can lead to highly pathological behavior. For example, there exist classes for which, for every even~$T$, the problem is trivially end-to-end learnable - indeed, the corresponding class has VC dimension $0$, so no training examples are needed at all - whereas for every odd~$T$ it is not learnable even under chain-of-thought supervision.}

\smallskip
\noindent
\emph{How does the sample complexity in the chain-of-thought supervision regime scale with the generation length $T$?}

\smallskip
\noindent
\emph{How does the sample complexity in the end-to-end regime scale with $T$?}

\smallskip
By comparing the answers in the two regimes, we aim to understand to what extent chain-of-thought supervision can reduce the sample complexity of autoregressive PAC learning.

\subsection{Chain-of-thought supervision} \label{sec:cot-intro}

We begin with the chain-of-thought supervision regime. Our first main result shows that, once the base class has finite VC dimension, chain-of-thought supervision eliminates any dependence on the generation length $T$.
\begin{boxtheorem}[CoT sample complexity is independent of $T$]\label{thm:cot-main-intro}
For every class $\mathcal F$ of binary next-token generators with $\VC(\mathcal F)<\infty$, there exists a constant $c=c(\mathcal F)$ such that for every $T\ge 1$ and $\eps,\delta>0$, the chain-of-thought supervision sample complexity satisfies
\[
m_{\mathcal F}^{\mathsf{CoT},T}(\epsilon,\delta)
\le
\frac{c}{\epsilon}
\left(
\log\frac{1}{\epsilon}
+
\log\frac{1}{\delta}
\right).
\]
\end{boxtheorem}
\noindent This improves over the previous bound of~\cite{joshi2025theory}, which incurred a logarithmic dependence on~$T$. Thus, for classes of finite VC dimension, the theorem yields a sample complexity upper bound in the chain-of-thought regime that is completely independent of the generation length.
The constant obtained by our proof is
\[
c = \tilde O(dd^\star),
\]
where $d$ and $d^\star$ denote the VC and dual VC dimensions of $\mathcal F$, and in particular $d^\star \le 2^{d+1}$~\cite{assouad1983densite}. 
For many natural classes, $d^\star$ is of the same order as $d$, for example for linear classifiers.

As for lower bounds, note that already in the case $T=1$, the problem reduces to ordinary PAC learning of the base class $\mathcal F$. Hence, the lower bound
\[
\Omega\!\left(\frac{d+\log(1/\delta)}{\epsilon}\right)
\]
follows from the classical VC characterization recalled in the previous subsection. Thus, two gaps remain between our upper bound and the optimal one suggested by the case $T=1$: our bound contains an additional factor of $\log(1/\epsilon)$, and the constant $c$ carries an extra dependence on the dual VC dimension. It would be interesting to determine whether the bound
\(
O\!\left(\frac{d+\log(1/\delta)}{\epsilon}\right)
\)
always holds in the chain-of-thought regime. In Section~\ref{sec:open-questions} we show that this is indeed the case for several natural families, including autoregressive classes induced by linear predictors.

\subsection{End-to-end learning} \label{sec:e2e-intro}
We now turn to the end-to-end regime. As explained above, this is an ordinary PAC classification problem for the induced class $\cF^{\etoe,T}$. Accordingly, its sample complexity is characterized, up to universal constant factors, by the VC dimension of $\cF^{\etoe,T}$. \emph{Thus, the main question in the end-to-end regime is to understand how $\VC(\cF^{\etoe,T})$ scales with the generation length $T$.}

\subsubsection{A taxonomy of possible growth rates} \label{sec:e2e-taxonomy-intro}
The work of~\cite{joshi2025theory} established the general upper bound
\[
\VC(\cF^{\etoe,T}) = O(\VC(\cF)\cdot T),
\]
and asked whether this linear dependence can be refined. Our first result gives a broad taxonomy of the possible growth rates. To state the result, we introduce a mild regularity condition on growth rates. We say that a function
\(r:\mathbb{N}_+\to\mathbb{N}_+\)
is a \emph{monotone-subadditive rate} if
\begin{enumerate}
    \item $r$ is monotone non-decreasing; that is, $r(T_1)\leq r(T_2)$ for all $T_1\leq T_2$.
    \item $r$ is subadditive; that is, $r(T_1 + T_2) \leq r(T_1) + r(T_2)$ for all $T_1, T_2 \in \mathbb{N}$.
\end{enumerate}
This class includes essentially all standard growth rates encountered in learning theory, such as~$T$, $\ceil*{T^c}$ for constants $0<c<1$, polylogarithmic rates, etc.

\begin{boxtheorem}[Taxonomy of end-to-end growth rates]\label{thm:e2e-taxonomy-intro}
For every monotone-subadditive rate $r$, there exists a class $\cF$ of binary next-token generators such that
\[
r(T) \leq \VC(\cF^{\etoe,T}) \leq r(T) + r(1) \leq 2r(T)
\qquad\text{for all }T\ge 1.
\]
\end{boxtheorem}
Thus, essentially all natural rates between $\Theta(1)$ and $\Theta(T)$ can arise in the end-to-end regime, revealing a rich taxonomy of possible dependences on the generation length~$T$. We stress that the notion of a monotone-subadditive rate is only a sufficient condition for the theorem, not a necessary one: the same conclusion may hold for more intricate growth rates as well. Nevertheless, the definition already covers essentially all standard growth rates encountered in learning theory.

\subsubsection{Can a dimension characterize sublinear rates?} \label{sec:dimension-question}
Our taxonomy result refines the picture developed in~\cite{joshi2025theory}. That work identified two benchmark regimes for the growth of $\VC(\cF^{\etoe,T})$: on the one hand, finite Littlestone dimension implies a logarithmic upper bound; on the other hand, there are classes for which the growth is linear in $T$. Moreover,~\cite{joshi2025theory} showed that linear growth is the largest possible. Theorem~\ref{thm:e2e-taxonomy-intro} complements this picture by showing that essentially every standard sublinear growth rate between these two extremes can occur.

This naturally raises a structural question suggested by~\cite{joshi2025theory}: is there a combinatorial dimension that distinguishes between base classes $\cF$ for which $\VC(\cF^{\etoe,T})$ grows sublinearly in $T$ and those for which it does not? Equivalently, can one hope for a dimension theory that captures exactly when the end-to-end sample complexity grows sublinearly with the generation length?

We address this question from two complementary directions. First, we show that in full generality there can be no dimension that characterizes sublinear rates. Second, we introduce below a natural tree-based dimension that yields a strictly broader sufficient condition for logarithmic growth than the Littlestone dimension. To formulate the first statement, we must specify what it means for a dimension to characterize sublinear behavior in a non-tautological way. Indeed, without some quantitative requirement, one could define a completely vacuous ``dimension'' by declaring
\[
\Dim(\cF)=
\begin{cases}
0 & \text{if }\VC(\cF^{\etoe,T})=o(T),\\
\infty & \text{otherwise}.
\end{cases}
\]
Such a tautological definition would separate sublinear from non-sublinear growth, but would provide no effective information about the actual rate when it is sublinear. In contrast, standard combinatorial dimensions in learning theory do not merely distinguish finite from infinite behavior; they also yield quantitative bounds. This motivates the following definition.

\begin{definition}[Characterizing sublinear rates] \label{def:sublinear-rates-intro}
Let $\Dim$ be any function that assigns to each base class $\cF$ either a natural number or $\infty$. We say that $\Dim$ \emph{characterizes sublinear rates} if there exists a function \(M:\mathbb{N}\times\mathbb{N}\to\mathbb{R}_{\ge 0}\) such that for every base class $\cF$, the following hold:
\begin{enumerate}
    \item $\Dim(\cF)<\infty$ if and only if the function $T\mapsto \VC(\cF^{\etoe,T})$ is sublinear in $T$;
    \item if $\Dim(\cF)<\infty$, then for every $T\ge 1$,
    \[
    \VC(\cF^{\etoe,T}) \le M(\Dim(\cF),T);
    \]
    \item for every fixed $d\in\mathbb{N}$, the function $T\mapsto M(d,T)$ is sublinear in $T$.
\end{enumerate}
\end{definition}

The second and third conditions ensure that whenever $\Dim(\cF)$ is finite, it comes with an effective sublinear upper bound on the growth of $\VC(\cF^{\etoe,T})$. In this sense, the definition rules out tautological dimensions and aligns with the role played by standard combinatorial dimensions in learning theory, such as VC and Littlestone dimension, which come with quantitative guarantees.

Our next result shows that no such dimension exists.

\begin{theorem}\label{thm:no-dimension-sublinear-intro}
There is no dimension that characterizes sublinear rates.
\end{theorem}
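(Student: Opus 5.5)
We argue by contradiction, using a diagonalization over the function $M$. Suppose $\Dim$ characterizes sublinear rates with witnessing function $M$. For each $d\in\mathbb{N}$ the map $T\mapsto M(d,T)$ is sublinear by condition~3. Our goal is a single monotone-subadditive rate $r$ that is sublinear but eventually exceeds every $M(d,\cdot)$ infinitely often. Concretely, we want for every $d$ infinitely many $T$ with $r(T)>M(d,T)$, or at least one such $T$ for each $d$. Given such an $r$, Theorem~\ref{thm:e2e-taxonomy-intro} yields a class $\cF$ with $r(T)\le \VC(\cF^{\etoe,T})\le 2r(T)$. Then $T\mapsto\VC(\cF^{\etoe,T})$ is sublinear, so condition~1 forces $\Dim(\cF)=d_0<\infty$. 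Condition~2 then gives $\VC(\cF^{\etoe,T})\le M(d_0,T)$ for all $T$, contradicting $r(T)>M(d_0,T)$ for some $T$.

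The key step is constructing $r$. First we majorize: set $g(T)=\max_{d\le \log T}M(d,T)$, or simply $\max_{d\le k}M(d,T)$ at stage $k$. A finite maximum of sublinear functions is sublinear. We then pick a rapidly increasing sequence $T_1<T_2<\cdots$ such that $\max_{d\le k}M(d,T)\le T/k^2$ for all $T\ge T_k$. Next we define $r$ as a concave-like piecewise-linear rate of slope roughly $1/k$ on long stretches. The stretches are chosen long enough that at some point $T'_k\ge T_k$ in the $k$-th stretch we have $r(T'_k)\ge T'_k/(2k)>T'_k/k^2\ge M(d,T'_k)$ for all $d\le k$ (and $k\ge 3$). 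Since the slopes $1/k\to0$, we get $r(T)=o(T)$.

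To guarantee monotone subadditivity and integrality, we take $r(T)=\lceil \phi(T)\rceil$ with $\phi$ concave, nondecreasing, $\phi(0)=0$, and $\phi(T)\ge1$. A concave $\phi$ with $\phi(0)=0$ is subadditive, and the ceiling preserves both subadditivity ($\lceil a+b\rceil\le\lceil a\rceil+\lceil b\rceil$) and monotonicity. Concretely, let $\phi$ be piecewise linear with slopes $1,1/2,1/3,\dots$ on consecutive intervals. The breakpoints are chosen inductively so that the interval of slope $1/k$ is so long that $\phi(T)\ge T/(2k)$ at its right end. This is achievable because the accumulated value from earlier pieces is nonnegative, and a long enough slope-$1/k$ segment makes $\phi(T)/T$ approach $1/k$ from above. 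We also make the right end at least $T_k$.

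I expect the main obstacle to be this bookkeeping: simultaneously ensuring that $r$ beats all $M(d,\cdot)$ with $d\le k$ at the $k$-th checkpoint, while $r$ remains monotone-subadditive and sublinear. Once the inductive choice of breakpoints is handled via concavity, the remaining steps are a direct application of Theorem~\ref{thm:e2e-taxonomy-intro} and the three conditions of Definition~\ref{def:sublinear-rates-intro}.
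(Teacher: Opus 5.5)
Your argument is correct and is essentially the paper's proof: the paper also diagonalizes against the family $T\mapsto M(d,T)$ by building a piecewise-linear rate with $\tilde r(T)/T$ non-increasing (it interpolates through the points $(T_d,T_d/2^d)$ with $T_d=\max\{4T_{d-1},N_d\}$, which is the same mechanism as your concave slopes $1,1/2,1/3,\dots$), rounds it up with a ceiling to get a monotone-subadditive rate, and invokes the taxonomy result to contradict condition~2 at a checkpoint. Beating every $M(d,\cdot)$ with $d\le k$ at the $k$-th checkpoint, as you do, is slightly tidier bookkeeping and also covers $\Dim(\cF)=0$; drop the $\max_{d\le\log T}$ aside, which need not be sublinear in general and which you never use.
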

The proof combines Theorem~\ref{thm:e2e-taxonomy-intro} with a diagonalization argument; we discuss this further in the proof overview section.

\paragraph{A weaker sufficient condition for logarithmic growth.}

We now turn to a second structural result for end-to-end learning. While Theorem~\ref{thm:no-dimension-sublinear-intro} rules out a dimension-theoretic characterization of all sublinear rates, one may still hope for natural sufficient conditions implying specific rates. Our next result gives such a condition for logarithmic growth, thereby refining the Littlestone-based logarithmic bound of~\cite{joshi2025theory}.

The starting point is the following observation. Fix a base class $\cF$, a prompt $x\in\Sigma^\star$, and a generation length $T$. Applying each $f\in\cF$ to $x$ for $T$ steps yields a family of $T$-bit strings,
\[
\{f^{\mathsf{CoT},T}(x):f\in\cF\}\subseteq \{0,1\}^T.
\]
These strings induce a rooted binary tree of depth $T$: its vertices are all prefixes of strings in the above family, and edges correspond to extension by $0$ or by $1$. Intuitively, the richer this induced tree is, the more diverse the autoregressive behavior of the class can be on the prompt $x$.

To quantify this richness, we look for large perfect binary trees inside the induced prefix tree. Concretely, if $\tree_1$ and $\tree_2$ are rooted binary trees, we say that $\tree_1$ is a \emph{leveled subtree} of $\tree_2$ if there exists an injective map $\iota:V(\tree_1)\to V(\tree_2)$ such that:
\begin{enumerate}
    \item for every pair of vertices $u,v\in V(\tree_1)$, the vertex $u$ is a left descendant, respectively right descendant, of $v$ in $\tree_1$ if and only if $\iota(u)$ is a left descendant, respectively right descendant, of $\iota(v)$ in $\tree_2$; and
    \item $u$ and $v$ lie on the same level in $\tree_1$ if and only if $\iota(u)$ and $\iota(v)$ lie on the same level in $\tree_2$.
\end{enumerate}
See Figure~\ref{fig:leveled-subtree} for an illustration.

\begin{figure}[t]
\centering
\begin{tikzpicture}[
    scale=0.88,
    every node/.style={circle, draw, inner sep=0.7pt, minimum size=3.4mm},
    hl/.style={circle, draw, fill=black!18, inner sep=0.7pt, minimum size=3.4mm},
    edge from parent/.style={draw, line width=0.4pt}
]

\node[hl] (r) at (0,0) {};

\node (a) at (-2.0,-0.9) {};
\node (b) at ( 2.0,-0.9) {};
\draw (r)--(a);
\draw (r)--(b);

\node[hl] (c) at (-3.0,-1.8) {};
\node      (d) at (-1.0,-1.8) {};
\node[hl] (e) at ( 1.0,-1.8) {};
\node      (f) at ( 3.0,-1.8) {};
\draw (a)--(c);
\draw (a)--(d);
\draw (b)--(e);
\draw (b)--(f);

\node (g) at (-3.6,-2.7) {};
\node (h) at (-2.4,-2.7) {};
\node (i) at (-1.0,-2.7) {};
\node (j) at ( 0.4,-2.7) {};
\node (k) at ( 1.6,-2.7) {};
\node (l) at ( 3.0,-2.7) {};
\draw (c)--(g);
\draw (c)--(h);
\draw (d)--(i);
\draw (e)--(j);
\draw (e)--(k);
\draw (f)--(l);

\node[hl] (m) at (-3.6,-3.6) {};
\node[hl] (n) at (-2.4,-3.6) {};
\node      (o) at (-1.0,-3.6) {};
\node[hl] (p) at ( 0.4,-3.6) {};
\node[hl] (q) at ( 1.6,-3.6) {};
\node      (s) at ( 3.0,-3.6) {};
\draw (g)--(m);
\draw (h)--(n);
\draw (i)--(o);
\draw (j)--(p);
\draw (k)--(q);
\draw (l)--(s);

\end{tikzpicture}
\caption{A rooted binary tree with a highlighted perfect leveled binary subtree of depth $2$.}
\label{fig:leveled-subtree}
\end{figure}
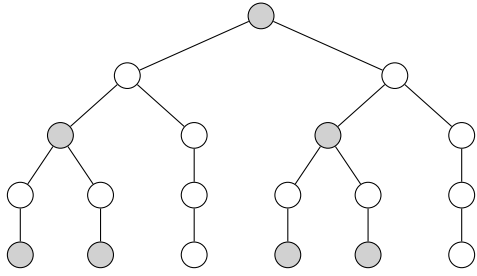

For a prompt $x\in\Sigma^\star$ and generation length $T$, let $G_{\cF,x,T}$ denote the induced prefix tree of the family \(\{f^{\mathsf{CoT},T}(x): f\in\cF\}.\)

\begin{definition}[Autoregressive tree dimension] \label{def:art-intro}
Let $\cF \subseteq \Sigma^{\Sigma^\star}$. The \emph{autoregressive tree dimension} of $\cF$, denoted $\ATdim(\cF)$, is the largest integer $d$ for which there exist a prompt $x \in \Sigma^\star$ and a generation length $T\ge 0$ such that $G_{\cF,x,T}$ contains a perfect leveled binary subtree of depth $d$. If such subtrees exist for arbitrarily large $d$, we define $\ATdim(\cF)=\infty$.
\end{definition}

This dimension is designed to capture the combinatorial expansion of the family of possible generations produced by the class on a fixed prompt. In particular, finite autoregressive tree dimension implies that the number of distinct $T$-step generations grows only polynomially with~$T$, and this in turn yields the following logarithmic upper bound.

\begin{theorem}[Autoregressive tree dimension implies logarithmic growth]\label{thm:artdim-log-intro}
For every base class~$\cF$ of binary next-token generators,
\[
\VC(\cF^{\etoe,T})
=
O\!\bigl(\ATdim(\cF)\cdot \VC(\cF)\cdot \log T\bigr).
\]
\end{theorem}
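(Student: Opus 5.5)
We plan to bound the growth function of $\cF^{\etoe,T}$ on any $m$ prompts $x_1,\dots,x_m$ and show it is below $2^m$ unless $m = O(k d\log T)$, where $k=\ATdim(\cF)$ and $d=\VC(\cF)$. The argument has two ingredients. First, finite autoregressive tree dimension forces each prompt to admit only polynomially many $T$-step generations. Second, finite VC dimension controls how many of these per-prompt generations can be combined consistently across prompts.

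\emph{Step 1 (per-prompt count).} Fix a prompt $x$ and consider the prefix tree $G_{\cF,x,T}$. Its leaves are exactly the distinct strings $f^{\ct,T}(x)$. We claim that a rooted binary tree of depth $T$ with no perfect leveled binary subtree of depth $k+1$ has at most $\sum_{i\le k}\binom{T}{i}\le (T+1)^k$ leaves. This is a Sauer--Shelah-type statement for trees, which we prove by induction on $(T,k)$.
\begin{itemize}
\item If the root has a single child, we recurse on the depth-$(T-1)$ subtree.
\item If the root has two children, we need the bound $L(T,k)\le L(T-1,k)+L(T-1,k-1)$.
\end{itemize}
The leveled condition is what makes the branching case delicate: the two subtrees must not both contain depth-$k$ leveled subtrees whose vertices sit at matching levels. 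We plan to handle this with a Sauer-style ``shift'' argument over the set of levels. Let $S(f)\subseteq[T]$ be the set of levels at which the path $f^{\ct,T}(x)$ branches inside the tree. Then we show the family of leaves ``shatters'' level sets in the sense of a perfect leveled subtree: a depth-$k$ perfect leveled subtree corresponds to a $k$-set of levels at which all $2^k$ branch patterns occur, with each level of the subtree placed at a single depth of $G$. Viewed this way, counting leaves reduces to the Sauer--Shelah lemma applied to the indicator vectors of the branch levels. We expect this reduction to be the main obstacle. The subtlety is that the same-level requirement forces all branchings at one subtree level to occur at one common depth, and we must verify that the induction or shifting preserves this. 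We would try induction on $T$ via a splitting at the first branching level, similar to the proof of the bound on Littlestone classes in terms of mistake trees.

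\emph{Step 2 (combining prompts).} Each trajectory is determined by its step-by-step tokens, and $f^{\ct,T}(x)$ is determined by the values of $f$ on the $T$ prefixes $\bar f^{\,t}(x)$. We proceed in three moves.
\begin{itemize}
\item Take the union $P$ of all prefixes $x_i\circ s$ with $s$ a prefix of a realizable trajectory. By Step 1, $|P|\le m T (T+1)^k$.
\item Apply Sauer--Shelah to $\cF$ restricted to $P$. This gives at most $(e|P|/d)^d$ behaviors of $\cF$ on $P$.
\item The behavior of $f$ on $P$ determines all of $f^{\etoe,T}(x_1),\dots,f^{\etoe,T}(x_m)$.
\end{itemize}
Hence the number of e2e labelings is at most $(e m T^{k+1}/d)^{d}$. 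Actually the Step 1 count alone gives $(T+1)^{km}$ combined labelings, which is too weak, so it is the VC step that yields the bound.

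\emph{Step 3 (solving).} Shattering requires $2^m\le (emT^{k+1}/d)^d$. This gives $m=O(d\log(mT^{k+1}/d))$, which in turn gives $m=O(dk\log T + d\log d)$. The $d\log d$ term is absorbed into the $O(\cdot)$ when $T\ge 2$; the case $T=1$ is simply the base class, with VC dimension $d$. This yields $\VC(\cF^{\etoe,T})=O(\ATdim(\cF)\,\VC(\cF)\log T)$.
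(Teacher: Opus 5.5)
Your Steps 2 and 3 match the paper's argument: count the behaviours of $\cF$ on the union of realized prefixes via Sauer--Shelah, then solve $2^m\le (mT)^{O(kd)}$. The gap is Step 1, the only new ingredient. You leave it unproved, and neither route you sketch closes it.

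Consider the tree whose leaves are $000,001,100,111$. Its branching nodes are the root, the node $1$ at depth $1$, and the node $00$ at depth $2$, so its largest perfect leveled subtree has depth $1$. Both root subtrees branch, but at different depths. So splitting at the root does not show that one of them has smaller leveled depth, and the recursion $L(T,k)\le L(T-1,k)+L(T-1,k-1)$ is unjustified.

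The Sauer--Shelah reduction also fails. The map sending a leaf to its set $S(f)$ of branching levels is not injective; in a complete tree every leaf has the same set. If you instead apply Sauer--Shelah to the leaf strings themselves, the same example VC-shatters coordinates $\{1,3\}$ even though it has no leveled subtree of depth $2$. A leveled subtree implies VC-shattering of coordinates, but not conversely. So this route bounds the leaf count by a different, possibly larger, parameter than $\ATdim$.

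The missing idea, which is how the paper proves its Sauer-type lemma, is to induct by removing the \emph{deepest} level rather than the root. Assume all leaves lie at depth $T$. Let $\tree_1$ be $\tree$ truncated at depth $T-1$, and let $\tree_2\subseteq\tree_1$ consist of the branches ending at depth-$(T-1)$ nodes that have two children in $\tree$. Then $|\cB(\tree)|=|\cB(\tree_1)|+|\cB(\tree_2)|$.

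Suppose $\tree_2$ contained a perfect leveled subtree of depth $k$. Move each of its leaves down inside $\tree_2$ to a depth-$(T-1)$ node, and attach that node's two children. All the new vertices sit at the common depth $T$, so the same-level constraint holds automatically, and $\tree$ would contain a leveled subtree of depth $k+1$. Hence $|\cB(\tree_2)|\le\binom{T-1}{\le k-1}$, and Pascal's rule closes the induction at $\binom{T}{\le k}$.

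A root split can be rescued, but only with a Pajor-type strengthening. Bound the number of leaves by the number of level sets $A$ that carry a perfect leveled subtree. A set $A$ carried by both root subtrees lifts to $\{0\}\cup(A+1)$ in $\tree$, so the union-plus-intersection count goes through.
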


The autoregressive tree dimension is always at most the Littlestone dimension, and can be arbitrarily smaller. Moreover, it may be finite even when the Littlestone dimension is infinite. Thus, Theorem~\ref{thm:artdim-log-intro} yields a strictly broader sufficient condition for logarithmic growth than the one obtained in~\cite{joshi2025theory}: there are classes with infinite Littlestone dimension but finite autoregressive tree dimension, and hence logarithmic end-to-end growth is not implied by the earlier result but does follow from ours. See Section~\ref{sec:art} for further details and comparisons with classical dimensions.

\subsection{Open questions and additional results} \label{sec:open-questions}
We begin with the chain-of-thought supervision regime. Our main bound in Theorem~\ref{thm:cot-main-intro} is independent of the generation length $T$, but it is unlikely to be optimal in its dependence on the accuracy parameter $\epsilon$ and on the complexity of the base class. In particular, our proof incurs an additional multiplicative dependence on the dual VC dimension, which in general may be exponentially larger than the VC dimension. This leads to the following natural question: if $\VC(\cF)=d<\infty$, is it always possible to obtain the optimal bound
\(O\!\left(\frac{d+\log(1/\delta)}{\epsilon}\right)\) for chain-of-thought supervision?

Our next result confirms this for a broad family of natural classes, namely those admitting a stable sample compression scheme. We briefly recall the relevant notion from~\cite{bousquet2020proper}. Let $\mathcal H\subseteq\{0,1\}^{\mathcal X}$ be a concept class. A learning rule $A$ is called a \emph{stable sample compression scheme of size $k$} for $\mathcal H$ if, for every realizable\footnote{That is, a labeled dataset $S=((x_i,y_i))_{i=1}^m \in (\mathcal X\times\{0,1\})^m$ that is consistent with some hypothesis $h\in\mathcal H$, namely $h(x_i)=y_i$ for all $i\in[m]$.} dataset $S$, there exists a subdataset $K\subseteq S$ of size at most~$k$ such that:
\begin{enumerate}
    \item for every dataset $S'$ satisfying $K\subseteq S'\subseteq S$, one has $A(S')=A(S)$;
    \item the hypothesis $A(S)$ is consistent with~$S$.
\end{enumerate}
In words, the output of the learning rule is determined by a small subdataset $K$, and this dependence is stable under removal of examples outside $K$. Stable compression schemes arise in many natural settings, and in particular in learning algorithms based on linear/convex optimization. A basic example is maximum-margin linear classification in $\mathbb{R}^d$, where the output classifier is determined by at most $d+1$ support vectors, see Figure~\ref{fig:svm-stable-compression}.
\begin{figure}[t]
\centering
\begin{tikzpicture}[scale=1.0]

\draw[thick] (0,-2.2) -- (0,2.2);

\draw[gray] (-0.45,0) -- (0.45,0);
\draw[gray] (0.45,0) -- (0.45,0.55);
\draw[gray] (0.45,0) -- (0.45,-0.55);

\filldraw[red!80!black] (-3.2,1.3) circle (2.2pt);
\filldraw[red!80!black] (-2.4,-1.5) circle (2.2pt);
\filldraw[red!80!black] (-2.7,0.1) circle (2.2pt);
\filldraw[red!80!black] (-1.9,1.9) circle (2.2pt);
\filldraw[red!80!black] (-0.45,0.0) circle (2.2pt); 

\filldraw[blue!80!black] (0.45,0.55) circle (2.2pt); 
\filldraw[blue!80!black] (0.45,-0.55) circle (2.2pt); 
\filldraw[blue!80!black] (1.6,1.5) circle (2.2pt);
\filldraw[blue!80!black] (2.1,0.0) circle (2.2pt);
\filldraw[blue!80!black] (1.8,-1.0) circle (2.2pt);
\filldraw[blue!80!black] (3.0,-1.6) circle (2.2pt);

\draw[line width=0.5pt] (-0.45,0) circle (0.18);
\draw[line width=0.5pt] (0.45,0.55) circle (0.18);
\draw[line width=0.5pt] (0.45,-0.55) circle (0.18);

\end{tikzpicture}
\caption{A geometric illustration of stable compression for maximum-margin linear classification. In $\mathbb{R}^d$, the output classifier is determined by the (at most) $d+1$ of support vectors (circled), while removing the remaining sample points does not change the separator.}
\label{fig:svm-stable-compression}
\end{figure}
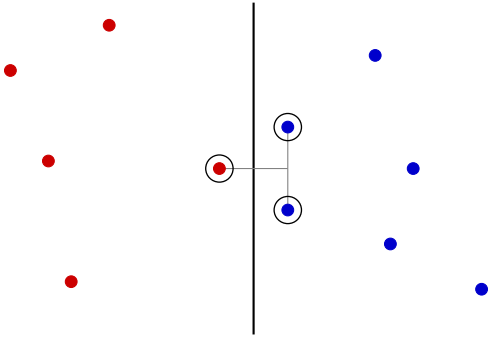
This notion yields the following consequence for chain-of-thought learning.

\begin{proposition}[Stable compression and CoT sample complexity]\label{prop:stable-compression-cot-intro}
Let $\cF$ be a base class of binary next-token generators, and suppose that $\cF$ admits a stable sample compression scheme of size $k$. Then for every $T\ge 1$, the chain-of-thought supervision sample complexity satisfies
\[
m_{\cF}^{\mathsf{CoT},T}(\epsilon,\delta)
=
O\!\left(\frac{k+\log(1/\delta)}{\epsilon}\right).
\]
\end{proposition}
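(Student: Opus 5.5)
The plan is to lift the stable compression scheme for $\cF$ to a stable compression scheme at the level of CoT-labeled examples. Then I would invoke the generalization bound for stable compression schemes of Bousquet et al.~\cite{bousquet2020proper}, which gives error $O\bigl((k+\log(1/\delta))/m\bigr)$.

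\emph{Step 1 (unrolling).} Given a CoT sample $S=((x_i,y^{(i)}))_{i=1}^m$ with $y^{(i)}=(y^{(i)}_1,\dots,y^{(i)}_T)=f_\star^{\ct,T}(x_i)$, form the expanded dataset
\[
\widehat S=\bigl((x_i\circ y^{(i)}_{<t},\,y^{(i)}_t)\bigr)_{i\in[m],\,t\in[T]}.
\]
This dataset is realizable by $f_\star\in\cF$. Let $A$ be the stable compression scheme for $\cF$, set $\hat f=A(\widehat S)$, and output $h=\hat f^{\etoe,T}$. Call this learner $B$, so $B(S)=h$.

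\emph{Step 2 (consistency).} Since $\hat f$ is consistent with $\widehat S$, an induction on $t$ shows $\hat f^{\ct,T}(x_i)=y^{(i)}$ for every $i$. At each step the prefix $\hat f$ sees is exactly $x_i\circ y^{(i)}_{<t}$, and $\hat f$ predicts $y^{(i)}_t$ there. In particular $h(x_i)=y^{(i)}_T=f_\star^{\etoe,T}(x_i)$, so $h$ has zero empirical e2e error on $S$.

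\emph{Step 3 (stable compression of size $k$ on original examples).} Let $\widehat K\subseteq\widehat S$ be the stable kernel of size at most $k$. Let $K\subseteq S$ be the set of original examples from which some point of $\widehat K$ was derived, so $|K|\le k$. For any $S'$ with $K\subseteq S'\subseteq S$, the unrolled dataset satisfies $\widehat K\subseteq\widehat{S'}\subseteq\widehat S$. Stability of $A$ then gives $A(\widehat{S'})=A(\widehat S)$, hence $B(S')=B(S)$. So $B$ is a stable sample compression scheme of size $k$ on the CoT-labeled domain, and its output is consistent in the e2e sense with every $S'$ it is run on.

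\emph{Step 4 (generalization).} Finally I would apply the Bousquet et al.\ bound, namely that a consistent stable compression scheme of size $k$ achieves error $O((k+\log(1/\delta))/m)$ with probability $1-\delta$. This yields the claimed sample complexity.

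The main obstacle is checking that this theorem transfers to the present setting. Here the training examples carry richer labels than the test-time loss, and the loss is $\mathbbm{1}[h(x)\ne f_\star^{\etoe,T}(x)]$. I expect their proof to use only three ingredients: i.i.d.\ examples, stability under removal of non-kernel examples, and zero loss on every example of the input sample. If so, it applies verbatim with examples $(x,f_\star^{\ct,T}(x))$ drawn from the pushforward of $\mathcal D$. I would re-verify this by tracing their argument, paying particular attention to whether it needs the scheme to be consistent on subsamples $S'$ too; Step 2 gives that, since it applies to any subsample.
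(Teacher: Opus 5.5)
Your proposal is correct and follows essentially the paper's route: the paper also inflates the CoT sample, takes the lifted kernel to be $U^{-1}(\kappa(U(S)))$, and checks correctness, stability and size exactly as in your Steps 2--3 before invoking the Bousquet et al.\ bound. The only difference is cosmetic. The paper outputs $\hat f^{\ct,T}$, applies that bound to the multiclass class $\cF^{\ct,T}$, and then uses that full-CoT learning implies CoT learning; you instead output $\hat f^{\etoe,T}$ with a CoT-train/e2e-test loss. Both transfers are valid, since the stable-compression argument uses only i.i.d.\ examples, output stability, and zero loss on the input sample.
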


Thus, whenever the base class admits stable compression of size comparable to its VC dimension, chain-of-thought supervision achieves the optimal dependence. Moreover, ~\cite{bousquet2020proper} showed that every class with finite Littlestone dimension admits a stable sample compression scheme of size $\Ldim(\cF)$, and Proposition~\ref{prop:stable-compression-cot-intro} therefore yields the corresponding bound with compression size $\Ldim(\cF)$.

The proposition also applies to natural geometric autoregressive classes. One concrete example is the class $\cF_{d,\lin}$ of linear autoregressors studied by~\cite{joshi2025theory}, defined by
\[
f_{w,\theta}(x)
=
\mathbf{1}\!\left[\langle w,\mathrm{tail}_d(x)\rangle \ge \theta\right],
\]
where $\mathrm{tail}_d(x)\in\{0,1\}^d$ denotes the suffix of $x$ of length at most $d$, padded with zeros on the left when $|x|<d$. For this class, maximum-margin linear classification yields a stable compression scheme of size $d+1$, and therefore Proposition~\ref{prop:stable-compression-cot-intro} gives
\[
m_{\cF_{d,\lin}}^{\mathsf{CoT},T}(\epsilon,\delta)
=
O\!\left(\frac{d+\log(1/\delta)}{\epsilon}\right),
\]
which is optimal up to universal constants. This improves the bound of~\cite{joshi2025theory}, which had quadratic dependence on $d$.

\medskip

A second direction for future research is to move beyond the restrictive finite-alphabet, single-answer setting considered in this paper. On the one hand, it is natural to consider larger, possibly infinite alphabets $\Sigma$, as well as more general loss functions that allow several outputs to be counted as correct. This is particularly relevant in settings such as theorem proving, where a prompt may encode a theorem statement and there may be many valid proofs. On the other hand, it is natural to ask whether one can relax the assumption that the base class itself is learnable. Indeed, it may happen that predicting the very next symbol is difficult, while producing a correct final answer after a longer autoregressive computation is much easier.

Both directions raise substantial new challenges. Our current analysis relies on ingredients such as uniform convergence and VC theory, which no longer apply to unbounded alphabets. Likewise, once one allows multiple correct outputs, the single-correct label classification viewpoint breaks down and should be replaced by a richer loss-based framework. At the same time, although it is natural to relax assumptions such as learnability of the base class, some replacement restrictions will likely be necessary, since otherwise pathological behavior can occur and obstruct a meaningful general theory.

The next result illustrates this point.

\begin{proposition}\label{prop:pathological-parity-intro}
There exists a base class $\cF$ such that for every even $T$, one has $\VC(\cF^{\etoe,T})=0$, and hence the class $\cF^{\etoe,T}$ is trivially PAC learnable, whereas for every odd $T$, the problem is not even learnable under chain-of-thought supervision.
\end{proposition}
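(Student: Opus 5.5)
We construct the base class explicitly so that every generator's output depends only on the parity of the length of its input relative to the prompt, arranged so that even-length chains collapse while odd-length chains expose an arbitrary function.

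\textbf{The construction.} Let $\mathcal X_0\subseteq\Sigma^\star$ be an infinite set of prompts of even length, say $\mathcal X_0=\{0^{2n}:n\ge 1\}$. For every $g:\mathcal X_0\to\{0,1\}$, define a generator $f_g$ as follows.
\begin{itemize}
\item On a prompt $x\in\mathcal X_0$ it outputs $g(x)$.
\item On a string of the form $x\circ b$ with $x\in\mathcal X_0$ and $b\in\{0,1\}$ it outputs $b$.
\item On longer extensions $x\circ w$, it outputs the first bit $w_1$ of $w$ if $|w|$ is odd, and the bit $w_1$ again if $|w|$ is even.
\item On all other strings it outputs $0$.
\end{itemize}
This version of the third rule does not yet work, so we refine it. The aim is a chain whose tokens alternate between $g(x)$ and a fixed value, so that the final token at even $T$ is constant while the final token at odd $T$ equals $g(x)$.

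\textbf{The refined rule.} Let $f_g(x\circ w)=g(x)$ when $|w|$ is even, including $w$ empty. Let $f_g(x\circ w)=0$ when $|w|$ is odd, but only on inputs whose prompt part $x$ lies in $\mathcal X_0$. Prompts are recognizable because every string of the form $x\circ w$ is handled relative to its maximal prefix in $\mathcal X_0$. To make this decomposition unique, take $\mathcal X_0=\{1\,0^{n}\,1: n\ge1\}$ and let $w$ be arbitrary. The parsing "first occurrence of a pattern $10^n1$" is then unambiguous. On strings with no such prefix, output $0$.

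\textbf{The two parity cases.} Under this rule, the $T$-step chain on any $x\in\mathcal X_0$ is $g(x),0,g(x),0,\dots$.
\begin{itemize}
\item For even $T$, the final token is $0$ for every $f_g$, on every input, including inputs outside $\mathcal X_0$, since those receive the constant $0$. The class $\cF^{\etoe,T}$ is therefore a single function and $\VC(\cF^{\etoe,T})=0$.
\item For odd $T$, the final token equals $g(x)$ on $\mathcal X_0$. The class $\cF^{\etoe,T}$ restricted to $\mathcal X_0$ consists of all functions, so it shatters an infinite set.
\end{itemize}

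\textbf{Failure of chain-of-thought learning at odd $T$.} The CoT label on $x$ is $(g(x),0,g(x),\dots)$, which is a deterministic function of $g(x)$ alone. The CoT supervision therefore carries exactly the same information as the e2e label. Any CoT learner thus yields an e2e learner for the class of all functions on the infinite set $\mathcal X_0$.

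The standard no-free-lunch lower bound rules such a learner out. Take a uniform distribution on $2m$ points of $\mathcal X_0$ and a uniformly random $g$. Then $m$ samples leave half the support unseen, so the expected error is at least $1/4$, and hence no finite sample size suffices for, e.g., $\epsilon=\delta=1/8$.

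\textbf{The main obstacle.} The delicate step is ensuring that the next-token rule is well defined on all of $\Sigma^\star$. Concretely, every chain string must parse uniquely back to its prompt, and for even $T$ the final output must be constant on every input, not only on $\mathcal X_0$. The prefix-free prompt set handles the parsing, and the default output $0$ handles the constancy, since on non-prompt inputs every token is $0$.
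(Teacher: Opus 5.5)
Your odd-$T$ argument is fine and is essentially the paper's (a uniform distribution on $2n$ prompts with random hidden bits, where the CoT label is a function of $g(x)$ alone). The even-$T$ claim, however, is false for your refined rule.

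Because you set $f_g(x\circ w)=g(x)$ for \emph{every} even $|w|$, the generator consults $g$ on strict extensions of a prompt, not only on the prompt itself. A chain started at such an extension then runs with shifted phase. Take $x\in\mathcal X_0$, even $T$, and start at $z=x\circ 0$. The $T$-th token is $f_g$ evaluated at $x\circ w$ with $|w|=1+(T-1)=T$, which is even. So $f_g^{\etoe,T}(x\circ 0)=g(x)$. Since $g$ ranges over all functions on $\mathcal X_0$, the infinite set $\{x\circ 0: x\in\mathcal X_0\}$ is shattered by $\cF^{\etoe,T}$. Hence $\VC(\cF^{\etoe,T})=\infty$ for every even $T$, the opposite of what is required. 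Your justification ``on non-prompt inputs every token is $0$'' only covers strings with no prefix in $\mathcal X_0$; it says nothing about strings that strictly extend a prompt.

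The missing idea is that the hidden parameter must be consulted only at the prompts themselves. On strict extensions, the generator should read the bit it needs off the string. This is exactly what the paper's construction does. With $Q_k=0^k1$, $A_{k,y,t}=0^k1(y0)^ty$ and $B_{k,y,t}=0^k1(y0)^{t+1}$, it sets $f_b(Q_k)=b_k$, $f_b(A_{k,y,t})=0$, $f_b(B_{k,y,t})=y$, and $0$ elsewhere. So $f_b$ depends on $b$ only at the $Q_k$, and on every other input the whole chain is $b$-independent.

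In your notation, it suffices to set $f_g(x\circ w)=w_1$ instead of $g(x)$ for even $|w|\ge 2$; your discarded first rule already had this copying idea. Along the genuine chain $w_1=g(x)$, so the CoT on $x$ is still $g(x),0,g(x),0,\dots$ and your odd-$T$ argument is unaffected. But now $f_g$ is independent of $g$ off $\mathcal X_0$. Moreover, no chain started outside $\mathcal X_0$ ever enters $\mathcal X_0$:
\begin{itemize}
\item for strict extensions of a prompt this follows from prefix-freeness;
\item for strings with no prompt prefix, only $0$'s are appended, while elements of $\mathcal X_0$ end in $1$.
\end{itemize}
Hence for even $T$ every final token is $g$-independent, and equals $0$ on $\mathcal X_0$. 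So $\cF^{\etoe,T}$ is a single function, with VC dimension $0$.
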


Thus, without additional restrictions on the base class, the dependence on the generation length $T$ can be highly irregular. In particular, even the basic question of learnability may oscillate with the parity of $T$. This shows that assumptions such as finite VC dimension are not merely technical conveniences, but are needed to obtain a meaningful structural theory.

It is also worth noting that our proof of Theorem~\ref{thm:cot-main-intro} yields a stronger guarantee than the theorem statement itself suggests. Namely, the same argument in fact learns the full chain-of-thought class $\cF^{\ct,T}$, rather than only the final output under $\ct$-supervision. In this sense, our positive result applies even to the more demanding task of recovering the entire trajectory.

At the same time, we show in Section~\ref{sec:cot-taxonomy} that for every base-class $\cF$, if $\VC(\cF)=\infty$, then the class $\cF^{\ct,T}$ is not PAC learnable. Thus, any approach that attempts to exploit chain-of-thought supervision by explicitly learning the entire trajectory can succeed only in the finite-VC setting. Our theorem resolves this case positively, while the preceding proposition shows that beyond it one cannot expect a similarly clean general theory without additional assumptions.

Another natural extension is to allow the final output itself to be a string rather than a single symbol. This would better reflect realistic autoregressive tasks, in which a prompt is followed by an intermediate chain of thought and then by a nontrivial output sequence. Extending the theory to this more general prompt--reasoning--output format is a natural direction for future work.

\section{Technical Overview} \label{sec:overview}
The goal of this section is to give a high-level overview of the proofs in the paper. We highlight the main challenges behind the different results, the key ideas used to overcome them, and the overall structure of the arguments.


\subsection{Proof overview of Theorem~\ref{thm:cot-main-intro}: CoT sample complexity}

The previous result of Joshi et al.~\cite{joshi2025theory} established a sample complexity upper bound with an additional logarithmic dependence on the generation length $T$. Their starting point, which we also adopt, is to view chain-of-thought supervision as the task of learning the \emph{entire} generated trajectory, rather than only the final output token. Thus, the problem becomes a multiclass learning problem in which each label is a binary string in $\{0,1\}^T$, and hence the number of possible labels grows as $2^T$.

The learning rule considered in~\cite{joshi2025theory} is the most basic consistent rule: given the training sample, output any hypothesis whose generated trajectories agree with all observed trajectories. To analyze this learner, they study the corresponding loss class and bound its growth function by reducing it to the behavior of the base class $\cF$ on all prefixes that may arise along the observed trajectories. Since a sample of size $m$ contributes about $mT$ such prefixes, Sauer's lemma yields a bound polynomial in $mT$, and standard uniform-convergence arguments then lead to a sample complexity upper bound with an extra $\log T$ factor. Thus, the logarithmic dependence on $T$ is closely tied to the use of a generic consistent learner together with a covering argument over all possible behaviors on these $mT$ prefixes.

To remove this dependence on $T$, we depart from the generic consistent-learning approach and instead design a more structured learning rule based on \emph{sample compression}. Roughly speaking, a sample compression scheme consists of a compression map, which selects a small labeled subsample together with a small amount of side information, and a reconstruction map, which recovers from this compressed representation a hypothesis consistent with the original sample. General results in learning theory show that compression schemes of small size imply correspondingly strong sample complexity bounds. See also the support-vector example in Figure~\ref{fig:svm-stable-compression}, which illustrates this paradigm in a familiar geometric setting.

The key observation behind our construction is that each chain-of-thought example can be \emph{inflated} into $T$ ordinary binary-labeled examples for the base class. Concretely, if $(x,y) \in \Sigma^\star \times \Sigma^T$ is labeled by some $f_\star^{\ct,T}\in \cF^{\ct,T}$, where $y=(y_1,\dots,y_T)$, then it gives rise to the examples
\[
(x\, y_{\le i-1}, y_i), \qquad i\in[T],
\]
all of which are consistent with the underlying next-token predictor $f_\star \in \cF$. Thus, from a sample labeled by $\cF^{\ct,T}$ we obtain an inflated binary sample labeled by $\cF$. This reduction allows us to bring into play the majority-vote-based sample compression algorithm of~\cite{moran2016sample}.

At a high level, we run that compression procedure on the inflated sample, and then translate the resulting compressed representation back into a compression of the original chain-of-thought sample. The main subtlety is that the compression algorithm of~\cite{moran2016sample} reconstructs its hypothesis as a majority vote of a small collection of weak learners. In our setting, in order to lift the argument back from the inflated sample to the original chain-of-thought sample, these weak learners cannot be chosen arbitrarily: each of them must arise as an ERM for the base class on a suitable subcollection of inflated examples induced by a small number of original CoT examples. This is crucial for ensuring that every weak learner can itself be encoded using only a small number of examples from the original sample, rather than from the inflated sample. The fact that adding further compatible inflated examples does not hurt the weak-learning guarantee is what makes this modification possible. This ERM-based step is a key ingredient in the proof, and is also the point at which the argument uses in an essential way the assumption that the token space $\Sigma$ is bounded. In particular, the same idea extends beyond the binary case to any bounded $\Sigma$, but does not seem to extend to unbounded token spaces.


\subsection{Proof overview of Theorem~\ref{thm:e2e-taxonomy-intro}: Taxonomy of End-to-End growth rates}

The proof proceeds in two steps. We first treat the case of monotone-subadditive rate functions~$r$ satisfying~$r(1)=1$. Here the goal is to construct a base class of VC dimension $1$ whose end-to-end VC growth realizes the prescribed rate $r(T)$. We then extend the result to general monotone-subadditive rates by combining several copies of this construction in a way that scales the entire growth function by an additive factor of $r(1)$.

\paragraph{The case $\boldsymbol{r(1)=1}$.}

We begin with a basic VC-dimension-one family that already captures the maximal possible linear growth. For every infinite binary sequence \(a\in\{0,1\}^{\mathbb N}\), define a function \(f_a:\{0,1\}^\star\to\{0,1\}\) by
\[
f_a(x):=
\begin{cases}
a_{|x|+1} & \text{if } x \text{ is a prefix of } a,\\
0 & \text{otherwise.}
\end{cases}
\]
Let
\[
\cF_{\mathrm{full}}:=\{f_a : a\in\{0,1\}^{\mathbb N}\}.
\]
Thus, every function in \(\cF_{\mathrm{full}}\) is parametrized by an infinite binary sequence, and along the unique prefix chain of that sequence it reveals the next bit, while off that chain it outputs \(0\). This class has VC dimension \(1\), and moreover
\[
\VC(\cF_{\mathrm{full}}^{\etoe,T}) = T
\qquad\text{for every }T\ge 1.
\]
The lower bound is witnessed by the chain
\[
0,0^2,\dots,0^T,
\]
which can be labeled arbitrarily after \(T\) autoregressive steps. For example, the all-$1$ labeling is realized by any sequence beginning with \(0^T1^T\), while the all-$0$ labeling is realized by the sequence \(0^{2T}\). More generally, for any prescribed labeling of \(0,0^2,\dots,0^T\), one can choose the bits in positions \(T+1,\dots,2T\) accordingly. 

For the upper bound, observe first that any set shattered by \(\cF_{\mathrm{full}}^{\etoe,T}\) must be totally ordered by the prefix relation: if two strings are incomparable, then no function in \(\cF_{\mathrm{full}}\) can label both of them by \(1\). Thus every shattered set has the form
\[
x_1 \prec x_2 \prec \cdots \prec x_m .
\]
The key additional point is that all strings in such a shattered chain must have lengths contained in an interval of size at most \(T\). Indeed, if
\[
|x_m|-|x_1| \ge T,
\]
then the value of \(f^{\etoe,T}(x_1)\) is already determined by the bit of the underlying sequence at position \(|x_1|+T\), and this bit is already encoded in the longer prefix \(x_m\). Consequently, the labels of \(x_1\) and \(x_m\) cannot be chosen independently, contradicting shattering. Therefore every shattered chain satisfies \(m\le T\), and hence
\[
\VC(\cF_{\mathrm{full}}^{\etoe,T})=T.
\]

To realize an arbitrary admissible rate \(r\) with \(r(1)=1\), we now pass to a suitable subclass of this family. The proof is organized in two steps.

First, given any set \(N\subseteq\mathbb N\), we consider the subclass obtained by taking indicators of shifted subsets of \(N\): for every \(A\subseteq N\) and shift \(s\in\mathbb N\), let \(b^{\,s+A}\) be the indicator of the set \(s+A=\{s+a: a\in A\}\), and let \(f_{s+A}\) be the corresponding autoregressive predictor. Let \(\cF(N)\) be the resulting class. Since \(\cF(N)\subseteq \cF_{\mathrm{full}}\), the base class still has VC dimension at most \(1\). The key point is that the \(T\)-iterated VC dimension of \(\cF(N)\) is governed exactly by the one-dimensional density profile of \(N\):
\[
\VC(\cF(N)^{\etoe,T})
=
\max_{u\in\mathbb N} |N\cap [u+1,u+T]|.
\]
The lower bound is witnessed by an interval \([u+1,u+T]\) where the maximum is attained. If
\[
u+t_1,\dots,u+t_m
\]
are the points of \(N\) in this interval, then the all-zero chain
\[
0^{t_1}\prec \cdots \prec 0^{t_m}
\]
can be labeled arbitrarily after \(T\) autoregressive steps by choosing an appropriate subset of those points. For the upper bound, we use the same prefix-chain argument as above: any shattered set must lie on a single prefix chain. Moreover, the relevant output positions \(|x|+T\) must all correspond to active locations of a single shifted copy \(s+A\). The same comparison between the shortest and longest strings in the chain shows that these locations lie in an interval of length \(T\), and hence the chain can involve at most as many points as \(N\) has in such an interval.

Thus the problem reduces to a purely combinatorial question: which functions can be written as
\[
T\mapsto \max_{u\in\mathbb N} |N\cap [u+1,u+T]|?
\]
The answer is exactly the class of monotone-subadditive rates with \(r(1)=1\). Indeed, every such interval-density function is monotone and subadditive, and satisfies \(r(1)=1\) whenever \(N\neq\emptyset\). Conversely, if \(r\) is monotone, subadditive, and \(r(1)=1\), then \(r\) increases by at most one at each step. Let \(t_k\) be the first time at which \(r\) reaches the value \(k\), and set
\[
N:=\{t_k : k\ge 1\}.
\]
Then \(|N\cap[1,T]|=r(T)\), and subadditivity implies that no interval of length \(T\) contains more than \(r(T)\) points of \(N\). Hence
\[
\max_{u\in\mathbb N} |N\cap [u+1,u+T]| = r(T).
\]

Combining the two steps yields a VC-dimension-one base class \(\cF\) satisfying
\[
\VC(\cF^{\etoe,T})=r(T)
\qquad\text{for all }T.
\]
\paragraph{The extension to general $\boldsymbol{r(1)}$.}

To pass from the case $r(1)=1$ to arbitrary monotone-subadditive rates, we use a direct-product idea. At an abstract level, given concept classes $\cH_1,\ldots,\cH_k$ on pairwise disjoint domains, their Cartesian product is the class whose concepts are tuples $(h_1,\ldots,h_k)$ with $h_i\in\cH_i$, defined over the disjoint union of the underlying domains, by letting each component act on its own part of the domain. Concretely, if $x$ lies in the domain of $\cH_i$, then
\[
(h_1,\ldots,h_k)(x)=h_i(x).
\] A standard and straightforward observation is that VC dimension is additive under this product:
\[
\VC(\cH_1\times\cdots\times \cH_k)=\VC(\cH_1)+\cdots+\VC(\cH_k).
\]

We apply this idea with $k=r(1)$ copies of a class realizing the normalized rate
\[
\tilde r(T):= \ceil*{\frac{r(T)}{r(1)}},
\]
which satisfies $\tilde r(1)=1$. To use the first part of the proof, we need to prove that $\tilde{r}(T)$ is a monotone-subadditive rate. This is true by the order-preserving properties of the ceiling function. Now, by the first part of the proof, there exists a class $\cF$ with $\VC(\cF^{\etoe,T})=\tilde r(T)$.
Taking a Cartesian product of $r(1)$ copies of $\cF$ then produces a class of VC dimension $r(1)$ and end-to-end growth $\Theta(r(T))$.

The final point is that this abstract Cartesian-product operation can be simulated within the autoregressive framework. To do so, we assign the different copies of the class to disjoint regions of the input space using a prefix-free code, for example the prefixes
\[
1,01,001,0001,\ldots
\]
That is, the first copy acts only on strings beginning with $1$, the second on strings beginning with $01$, and so on. Because these prefixes are prefix-free, the autoregressive evolution associated with one copy never enters the region of another. Hence the combined class behaves exactly like a Cartesian product of the component classes, and the same holds for the corresponding end-to-end classes.

\subsection{Proof overview of Theorem~\ref{thm:no-dimension-sublinear-intro}: No characterization of sublinear rates}

We now turn to the negative result showing that there is no dimension that characterizes sublinear end-to-end growth in a quantitative way. More precisely, recall that such a characterization would require a dimension $\Dim$ together with a sublinear function $M(d,T)$ such that $\Dim(\cF)<\infty$ if and only if $\VC(\cF^{\etoe,T})$ grows sublinearly in $T$, and moreover whenever $\Dim(\cF)=d$ one has the explicit upper bound
\[
\VC(\cF^{\etoe,T}) \leq M(d,T).
\]
The point of the theorem is that no such dimension can exist.

At an intuitive level, the obstruction comes directly from Theorem~\ref{thm:e2e-taxonomy-intro}. That theorem shows that even among classes with VC dimension $1$, the possible sublinear growth rates of $\VC(\cF^{\etoe,T})$ are extremely rich: in particular, one can realize rates that are still sublinear, but arbitrarily close to linear. Thus, any putative dimension-based upper bound would have to dominate \emph{all} sublinear rates arising from finite-dimensional classes. The proof shows that this is impossible.

The formal proof proceeds by a diagonalization argument in the spirit of similar impossibility results on dimensions due to Lechner and Ben-David~\cite{LechnerB24}. A useful toy analogue is the following elementary puzzle from analysis. Suppose we are given a countable family of sequences
\[
(a_n^1), (a_n^2), (a_n^3), \ldots
\]
Can such a family be universal in the sense that for every sequence $b_n$, one of the sequences $a_n^i$ eventually dominates it? That is, can it happen that for every $b_n$ there exists $i$ such that
\[
a_n^i > b_n
\qquad\text{for all sufficiently large } n?
\]
The answer is no. Indeed, given the family $(a_n^i)$, define
\[
b_n :=\max\{a_m^i : i,m \le n\}.
\]
Then, for every fixed $i$ there are infinitely many $n$ for which
\[
b_n \ge a_n^i,
\]
so no sequence in the family eventually dominates $b_n$.

Our proof follows exactly this logic. If such a dimension $\Dim$ existed, then for each value $d$ we would obtain a sublinear rate $M(d,T)$, and hence a countable family of sublinear functions indexed by $d\in\mathbb N$, which would have to eventually dominate every sublinear rate realized by a finite-dimensional class. But Theorem~\ref{thm:e2e-taxonomy-intro} allows us to construct finite-VC classes realizing essentially arbitrary monotone-subadditive sublinear rates. Applying the same diagonal idea, we build a sublinear rate that eventually exceeds every candidate bound $M(d,T)$, while still remaining sublinear. The taxonomy theorem then provides a class realizing this diagonal rate, contradicting the assumed universality of the dimension-based bound.

In fact, the same argument yields a more general impossibility statement: for any at-most-linear benchmark $f(T)$, which satisfies $\lim_{T\to\infty}f(T)=\infty$, no dimension can in general separate classes with $\VC(\cF^{\etoe,T}) = o(f(T))$ from those with $\VC(\cF^{\etoe,T}) = \Omega(f(T))$ via a quantitative upper bound depending only on the dimension. We state and prove the theorem only for the benchmark $f(T)=T$, since sublinear growth was already the main focus of the previous work of~\cite{joshi2025theory}, making it a particularly natural case to study.

\subsection{Proof overview of Theorem~\ref{thm:artdim-log-intro}: A sufficient condition for logarithmic growth}

We now turn to the positive result showing that finiteness of our refined tree-based dimension implies logarithmic end-to-end growth. The theorem shows that if $\ATdim(\cF)$ is finite, then the quantity $\VC(\cF^{\etoe,T})$ grows only logarithmically with $T$, with dependence also on $\ATdim(\cF)$ and on $\VC(\cF)$.

The proof follows the same general strategy as the proof of the linear-in-$T$ upper bound of \cite{joshi2025theory}, but replaces the crude counting argument used there by a more refined combinatorial analysis of the relevant generation trees.

Suppose that $\{x_1,\dots,x_m\}$ is shattered by $\cF^{\etoe,T}$. Then, by definition, the class induces at least $2^m$ distinct end-to-end labelings on these prompts. Therefore, if one considers all length-$T$ generations that functions in $\cF$ can produce starting from the prompts $x_1,\dots,x_m$, then there must be at least $2^m$ such distinct generations. The key point is therefore to upper bound how many such generations are possible.

A first crude bound, essentially the one used by Joshi et al., is obtained as follows. For each prompt $x_i$, consider the full binary generation tree of depth $T$, consisting of all strings obtained by appending to $x_i$ at most $T$ tokens. Across all $m$ prompts, this gives on the order of $m2^T$ strings. Any possible $T$-step generation produced by a function in $\cF$ is determined by the labels that the function assigns to these strings. Hence, the number of possible length-$T$ generations is at most the number of label patterns that $\cF$ realizes on this collection of roughly $m2^T$ strings. By the Sauer-Shelah-Perles lemma, this number grows polynomially in $m2^T$, with exponent $\VC(\cF)$. Comparing this upper bound with the lower bound $2^m$ coming from shattering yields the general upper bound of \cite{joshi2025theory}, which is linear in $T$.

Our goal is to improve this linear dependence on $T$ to logarithmic dependence under the stronger assumption that the refined tree dimension is finite. The point is that the factor $2^T$ in the argument above is highly wasteful: it counts \emph{all} nodes in the full binary generation tree, even though many of them may be irrelevant for the behavior of the class. What really matters is not the full tree, but only the subtree consisting of branches that are actually realized by functions in $\cF$.

This is exactly where the refined dimension enters. Recall that $\ATdim(\cF)$ rules out the existence of a large embedded leveled perfect subtree inside the realized generation tree. We prove a Sauer-type lemma tailored to this setting: if a depth-$T$ tree contains no such embedded leveled perfect subtree of depth $k$, then the number of leaves - and hence the number of realized branches - is at most polynomial in $T$, with exponent proportional to $k$. Thus, in the counting argument above, one can replace the crude factor $2^T$ by a much smaller quantity that is only polynomial in $T$. 
This is closely related in spirit to the Littlestone-dimension argument of~\cite{joshi2025theory}; our contribution is to show that the same counting principle can be sharpened using the more refined parameter $\ATdim(\cF)$.



Plugging this refined bound into the same comparison as before shows that the size $m$ of any shattered set can grow at most logarithmically with $T$, with dependence proportional to $\ATdim(\cF)$ and $\VC(\cF)$. This yields the desired upper bound on $\VC(\cF^{\etoe,T})$.

Thus, the proof has two parts. The first is a reduction from end-to-end shattering to counting realized generation patterns, following the general framework of \cite{joshi2025theory}. The second is a new combinatorial Sauer-type lemma showing that finite refined tree dimension forces the number of realized branches in a depth-$T$ generation tree to be only polynomial in $T$. It is this second ingredient that drives the exponential improvement in the counting argument, and ultimately yields logarithmic growth.
\medskip

We do not give separate detailed overviews for the remaining results. Proposition~\ref{prop:stable-compression-cot-intro} is proved by a variant of the same inflation-based argument used for Theorem~\ref{thm:cot-main-intro}: a stable compression scheme for $\cF$ can be lifted to one for $\cF^{\ct,T}$ of the same size, after which the conclusion follows from the known sample complexity bound for stable compression schemes. Proposition~\ref{prop:pathological-parity-intro}, in turn, is based on a more ad hoc construction exhibiting a sharp parity effect outside the VC setting. Since these arguments are either close in spirit to the ones already described or more technical than conceptual, we defer them to the formal proofs.

\section{Additional Related Work} \label{sec:related}

The work of \cite{joshi2025theory} who initiated the study of the autoregressive chain-of-thought model, is most related to our work. Their work was inspired by the time-dependent analog of this model, defined and studied in \cite{malach2023auto}. In addition to defining the model, \cite{joshi2025theory} proved some statistical and computational complexity bounds for it. They also discuss its expressivity, and the relation to the Attention mechanism.
The work of \cite{huang2025transformers} studies learnability of chain-of-thought data using transformers. \cite{tsilivis2025reinforcement} studies how learning autoregressive models benefits from reinforcement learning techniques. \cite{shalev2025reasoning} designed a learner for efficient learning from chain-of-thought data. \cite{balcan2025learning} studied the learnability of verifiers for chain-of-thought reasoning mechanisms.
\cite{altabaa2025cot} prove sample complexity bounds for learning with chain-of-thought supervision that are given in terms of a class-dependent \emph{chain-of-thought information measure} that they define.

Motivated by the connection between online learning and statistically-efficient learning of autoregressive models first identified by \cite{joshi2025theory}, the works of \cite{daniely2025online, geneson2025mistake} studied online learnability of deep models.

\section{Preliminaries} \label{sec:prel}

\subsection{Notation}
In the context of sequences, we may use $r^k$ to indicate a string of $k$ many $r$'s.
For any sequence of elements $S = s_1, \ldots s_m$, we denote the first $k$ elements of it by $S_{\leq k}$ or $S[:k]$, and the last $k$ elements by $S[-k:]$. We denote the $i^{th}$ element by $S[i]$ and the $i^{th}$ to last element by $S[-i]$. For a sequence of instances $\boldsymbol{x} = (x_1, \ldots, x_m) \in \cX^{m}$ and a sequence of labels of the same length $\boldsymbol{y} = (y_1,\ldots y_m) \in \cY^m$, we denote the associated sequence of \emph{examples} $(x_1,y_1), \ldots, (x_m,y_m)$ by $(\boldsymbol{x}, \boldsymbol{y})$. A sequence of examples is usually called a \emph{sample}. We may overload set notation such as $\in, \subset$ to sequences as well.

For simplicity of presentation, we may sometimes use asymptotic notation with an $\cF$ subscript, such as $O_\cF(\cdot), \Omega_{\cF}(\cdot), \Theta_{\cF}(\cdot)$ etc.\ to indicate that constants depending on $\cF$ are hidden.

\subsection{Learning autoregressive classes} \label{sec:model-learning}
Following \cite{joshi2025theory}, we study the sample complexity of \emph{learning} an autoregressive class in appropriate analogs of the PAC-learning model \cite{vapnik1971uniform, valiant1984theory}. In the learning model we study, a base class $\cF$ of next-token-generators $f:\cX \to \Sigma$ and a generation length $T \in \mathbb{N}$ are given. Additionally, there is an unknown distribution $D$ over $\cX$, and an unknown target function $f_\star \in \cF$. Our goal is to find a predictor $f: \cX \to \Sigma$ that minimizes the \emph{population loss} with respect to $f_\star^{\etoe,T}$:
\[
L_{D, f_\star^{\etoe,T}}(f) := \Pr_{x \sim D} \mleft[ f(x) \neq  f_\star^{\etoe,T}(x) \mright]. 
\]

There are now two natural PAC-learning analogs for autoregressive classes, both studied in \cite{joshi2025theory}. The first one, called \emph{end-to-end learnability}, amounts to standard PAC learning of the class $\cF^{\etoe,T}$. The standard definition of PAC learning is given below.

\begin{definition}[PAC learnability] \label{def:pac-learnability}
We say that $\cF$ is \emph{PAC-learnable} (or \emph{learnable}) with sample complexity $m:= m(\epsilon,\delta)$, if there exists a learning rule $\Lrn: (\cX \times \cY)^\star \to \cY^{\cX}$, such that for every distribution $D$ over $\cX$, for every $\epsilon, \delta >0$, and for every target function $f_\star \in \cF$, if $\boldsymbol{x}:=x_1, \ldots, x_m \sim_{iid} D$ and $(\boldsymbol{x}, \boldsymbol{y}):= ((x_i, f_\star(x_i))_{i=1}^m$, then
\[
L_{D, f_\star}(f_{\Lrn(\boldsymbol{x}, \boldsymbol{y})}) > \eps
\]
with probability at most $\delta$, where $f_{\Lrn(\boldsymbol{x}, \boldsymbol{y})} := \Lrn(\boldsymbol{x}, \boldsymbol{y})$ and $L_{D, f_\star}(f) := \Pr_{x \sim D} \mleft[ f(x) \neq  f_\star(x) \mright]$.
\end{definition}

We may now formally define end-to-end learnability.

\begin{definition}[End-to-End learnability] \label{def:e2e-learnability}
We say that $\cF$ with generation length $T$ is \emph{end-to-end learnable} (or \emph{$\etoe$-learnable}) with sample complexity $m:= m(\epsilon,\delta)$, if $\cF^{\etoe,T}$ is learnable with sample complexity $m$.
\end{definition}

The other model of interest is \emph{learning with Chain-of-Thought supervision}.

\begin{definition}[Learnability with Chain-of-Thought supervision] \label{def:cot-learnability}
    We say that $\cF$ with generation length $T$ is \emph{learnable with Chain-of-Thought supervision} (or \emph{$\ct$-learnable}) with sample complexity $m:= m(\epsilon,\delta)$, if there exists a learning rule $\Lrn: (\Sigma^\star \times \Sigma^T)^\star \to \Sigma^{\Sigma^\star}$, such that for every distribution $D$ over $\Sigma^\star$, for every $\epsilon, \delta >0$, and for every target function $f_\star \in \cF$, if $\boldsymbol{x}:=x_1, \ldots, x_m \sim_{iid} D$ and $(\boldsymbol{x}, \boldsymbol{y}):= ((x_i, f_\star^{\ct,T}(x_i))_{i=1}^m$, then
    \[
    L_{D, f_\star^{\etoe,T}}(f_{\Lrn(\boldsymbol{x}, \boldsymbol{y})}) > \eps
    \]
    with probability at most $\delta$, where $f_{\Lrn(\boldsymbol{x}, \boldsymbol{y})} := \Lrn(\boldsymbol{x}, \boldsymbol{y})$.
\end{definition}

The difference between the two definitions is that while Definition~\ref{def:e2e-learnability} amounts to standard PAC learning of $\cF^{\etoe,T}$, Definition~\ref{def:cot-learnability} explicitly allows the learner to observe the entire chain-of-thought that produces the final output of $f_\star^{\etoe,T}$. In other words,  Definition~\ref{def:e2e-learnability} measures learnability with $\etoe$ training and $\etoe$ test, and Definition~\ref{def:cot-learnability} measures learnability with $\ct$ training and $\etoe$ test. Thus, there are also two other natural variants that one may consider: learnability with $\etoe$ training and $\ct$ test, and learnability with $\ct$ training and $\ct$ test. While the first variant might not be very useful, the latter is in fact multiclass PAC learning of $\cF^{\ct,T}$, and is at least as difficult as Definition~\ref{def:cot-learnability}. Therefore, any upper bound for it immediately applies to the setting of Definition~\ref{def:cot-learnability}. It is thus useful to define it as well, and apply tools from multiclass PAC learning to handle it (as also observed in \cite{joshi2025theory}).

\begin{definition}[Learnability of full Chain-of-Thought] \label{def:cot-test-learnability}
    We say that $\cF$ with generation length $T$ is \emph{Fully Chain-of-Thought learnable} (or \emph{fully $\ct$-learnable}) with sample complexity $m:= m(\epsilon,\delta)$, if $\cF^{\ct,T}$ is learnable with sample complexity $m$.
\end{definition}

\subsection{Binary trees}

In this paper, a tree $\tree$ is a finite and rooted binary tree, accompanied with the following information:
\begin{enumerate}
    \item Every internal node $v$ is associated with an instance $x \in \cX$, denoted as $x(v)$.
    \item For every internal node $v$, the left outgoing edge is labeled with $0$, and the right outgoing edge with $1$.
\end{enumerate}
The set of vertices and edges of $\tree$ are denoted by $V(\tree)$ and $E(\tree)$, respectively.
Every path in the tree that starts at the root is called a prefix. A prefix that ends at a leaf is called a \emph{branch}. The set of all branches in $\tree$ is denoted $\cB(\tree)$. A prefix can be identified by a sequence of vertices starting from the root, or by just the last vertex in the prefix, or by the associated sequence of edges.
A prefix $p = v_0, \ldots, v_t$ in $\tree$ defines a sample $(x_1,y_1), \ldots, (x_t,y_t) \in \cX \times \cY$ in the natural way: $x_i = x(v_{i-1})$, and $ y_i$ is the label of the edge $(v_{i-1},v_i)$ for all $i$. We denote $\boldsymbol{x}(p) = x_1, \ldots, x_t$ and $\boldsymbol{y}(p) = y_1,\ldots y_t$. For a given vertex $v$, we use $p_\tree(v)$ (or $p(v)$, when the tree $\tree$ is fixed) to denote the path from the root to $v$. We may overload notation and use $p(v)$ to denote the sample associated with the path, when the context is clear. If there is a bijection between the set of vertices and the set of their instance labelings (which is usually the case in this paper), we may replace $v$ with its instance label in all notation. 

We now discuss two types of binary trees used in this paper.

\paragraph{Generation trees.}
Given a  generation length $T$ and an instance $x$, there is a natural way to represent all possible labels that could be given to $x$ by the autoregressive process as a decision tree $\tree(x) := \tree_{ T}(x)$. We call this tree the \emph{generation tree} of $x$ of depth $T$, or simply the \emph{generation tree} of $x$, if $T$ is fixed.  The tree is constructed as follows. The root of $\tree(x)$ is $x$. The left child of $x$ is labeled by $x0$ and likewise, the right child of $x$ is labeled by $x1$. We continue this labeling process inductively on the children of $x$, for a total number of $T$ many times. There is a bijection between the set $\{\boldsymbol{y}(b)\}$ for all branches $b$ in $\tree(x)$ and the label space $\{0,1\}^T$. Furthermore, note that a function $f^{\mathsf{CoT},T} \in \cF^{\mathsf{CoT},T}$ realizes (that is, agrees with) a branch $b$ if and only if $f^{\mathsf{CoT},T}(x) = y(b)$. The branch $b$ may be referred to as the \emph{computation path} of $f^{\mathsf{CoT},T}$ on $x$. This representation will be useful in some of the analyses conducted in this work. See a visualized example of a generation tree in Figure~\ref{fig:arl}.

\begin{figure}
    \centering
\begin{tikzpicture}[
  level distance=25mm,
  sibling distance=55mm,
  level 2/.style={sibling distance=35mm},
  every node/.style={font=\Large, inner sep=1pt},
  edge from parent/.style={draw, line width=0.8pt},
]
\node {0}
  child { node {00}
    child { node {000} }
    child { node {001} }
  }
  child { node {01}
    child { node {010} }
    child { node {011} }
  };
\end{tikzpicture}
    \caption{The tree $\tree_{T}(x)$ for $T=2, x=0$.}
    \label{fig:arl}
\end{figure}
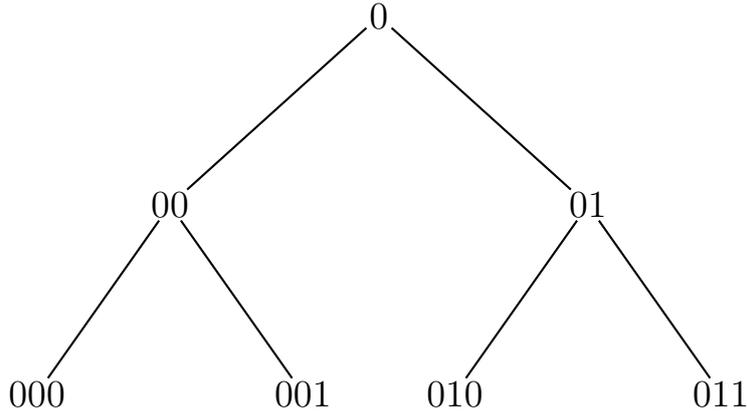




\paragraph{Leveled subtrees.}
If $\tree_1$ and $\tree_2$ are rooted binary trees, we say that $\tree_1$ is a \emph{leveled subtree} of $\tree_2$ if there exists an injective map $\iota:V(\tree_1)\to V(\tree_2)$ such that:
\begin{enumerate}
    \item for every pair of vertices $u,v\in V(\tree_1)$, the vertex $u$ is a left descendant, respectively right descendant, of $v$ in $\tree_1$ if and only if $\iota(u)$ is a left descendant, respectively right descendant, of $\iota(v)$ in $\tree_2$; and
    \item $u$ and $v$ lie on the same level in $\tree_1$ if and only if $\iota(u)$ and $\iota(v)$ lie on the same level in $\tree_2$.
\end{enumerate}

\subsection{Learning theory tools}\label{sec:prel-tools}

Some of our proofs rely on a broad collection of classic learning theory techniques developed through decades of research, including VC-theory, multiclass learning, sample compression schemes, and more.
We define all relevant terms and concepts, and state relevant known results in a self-contained manner.

\subsubsection{Binary class dimensions}
Consider a tree $\tree$ and a class $\cF: \cX \to \{0,1\}$.
Given a function $f\in \cF$, we say that $f$ \emph{realizes} a prefix $p= v_0, \ldots, v_t$ in $\tree$ if $f(\boldsymbol{x}(p)_i) = \boldsymbol{y}(p)_i$ for all $i$.
The set of all functions $f \in \cF$ that realize $p$ are called the \emph{version space} of $v_t$ (or of $p$, or of $(\boldsymbol{x}(p), \boldsymbol{y}(p))$) with respect to $\cF$, which is denoted as $V_{\cF}(v_t)$, or simply $V(v_t)$ when $\cF$ is fixed.
We say that $\cF$ \emph{shatters} $\tree$, if for every leaf $\ell \in \tree$ it holds that $V(\ell) \neq \emptyset$. The set of trees shattered by $\cF$ is denoted by $\cT(\cF)$. We now define two fundamental and useful dimensions.

\begin{definition}[Littlestone dimension]
The \emph{Littlestone dimension} of $\cF$, denoted $\LD(\cF),$ is the maximal depth of a perfect tree which is shattered by $\cF$. If there is no such maximal depth, we say that $\LD(\cF) = \infty$.
\end{definition}

The Littlestone dimension is known to characterize \emph{online learnability}, which is not in the scope of this paper.

Next, we define the VC-dimension, which is known to characterize PAC-learnability. The VC-dimension is defined as the Littlestone dimension, but restricted only to trees where each branch is labeled by the same sequence of instances. A formal definition is given below.

\begin{definition}[VC-dimension]
The \emph{VC dimension} of $\cF$, denoted $\VC(\cF)$, is the maximal depth of a perfect tree which is shattered by $\cF$, and also satisfies $\boldsymbol{x}(b_1) = \boldsymbol{x}(b_2)$ for every two branches $b_1,b_2$. If there is no such maximal depth, we say that $\VC(\cF) = \infty$.
\end{definition}
Since in the context of the VC-dimension we require that $\boldsymbol{x}(b_1) = \boldsymbol{x}(b_2)$ for every two branches $b_1,b_2$, instead of discussing trees, we may simply refer to $\boldsymbol{x}(b_1)$ as a \emph{VC-shattered} (or just \emph{shattered}) set.

We now state the optimal sample complexity bounds for PAC-learning a general function class $\cF: \cX \to \{0,1\}$ in terms of the VC-dimension.
\begin{theorem}[VC-classes sample complexity \cite{blumer1989learnability, ehrenfeucht1989general, hanneke2016optimal}] \label{thm:vc-pac-sample-complexity}
    Let $\cF:\cX \to \{0,1\}$. Then, $\cF$ is learnable if and only if $\VC(\cF) < \infty$. Furthermore, if $\VC(\cF) < \infty$ then $\cF$ is learnable with optimal sample complexity
    \[
    m(\epsilon, \delta) = \Theta \mleft( \frac{\VC(\cF) + \log(1/\delta)}{\epsilon} \mright).
    \]
\end{theorem}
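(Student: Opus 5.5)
The statement combines three classical facts, and I would prove each separately: (i) infinite VC dimension rules out learnability, (ii) a lower bound of order $(\VC(\cF)+\log(1/\delta))/\epsilon$ holds for every learner, and (iii) a matching upper bound holds. Write $d=\VC(\cF)$.

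For the lower bounds, which also give (i), I would use the standard shattered-set construction. Fix a set $\{x_1,\dots,x_d\}$ shattered by $\cF$. Put mass $1-4\epsilon$ on $x_1$ and spread $4\epsilon$ uniformly over $x_2,\dots,x_d$. Draw the target uniformly among the $2^{d}$ labelings realized by $\cF$ on this set. With $m\le c\,d/\epsilon$ samples, a constant fraction of $x_2,\dots,x_d$ is never observed in expectation. The learner's labels on those unseen points are independent of the target labels there, so it errs on about half of them, giving expected error above $\epsilon$. A reverse-Markov argument then yields failure probability bounded below by a constant.

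For the $\log(1/\delta)/\epsilon$ term, use two hypotheses that disagree on a point of mass $\epsilon$. With probability $(1-\epsilon)^m\ge\delta$ that point is never sampled, and on that event one of the two targets is misclassified. If $\VC(\cF)=\infty$, the first construction with arbitrarily large $d$ shows that no finite $m(\epsilon,\delta)$ can work. This is the ``only if'' direction.

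For the upper bound, the easy route is the $\epsilon$-net theorem: any consistent ERM succeeds once $m=O\bigl((d\log(1/\epsilon)+\log(1/\delta))/\epsilon\bigr)$. I would prove this by double sampling and symmetrization, bounding the number of patterns on $2m$ points by Sauer--Shelah--Perles as $(2em/d)^d$.

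Removing the $\log(1/\epsilon)$ factor is the main obstacle, and here I would follow Hanneke's optimal learner. First, recursively build a family of subsamples of $S$: split $S$ into a held-out block and three parts, and recurse so that each subsample omits a distinct part. Second, run ERM on each subsample and output the majority vote of these ERMs. The analysis is by induction on $|S|$. Using the $\epsilon$-net bound on the conditional distribution restricted to the region where the recursive majority errs, one shows the error region of the majority has probability $O((d+\log(1/\delta))/m)$. The delicate step is controlling, via the held-out blocks, the probability that at least two of the three sub-majorities err simultaneously.

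Since this is precisely the content of \cite{blumer1989learnability, ehrenfeucht1989general, hanneke2016optimal}, in the paper I would simply invoke those works rather than reproduce Hanneke's recursion.
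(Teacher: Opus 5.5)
Your plan is essentially the paper's. The paper gives no proof of this theorem: it cites \cite{blumer1989learnability, ehrenfeucht1989general} for the lower bound and \cite{hanneke2016optimal} for the upper bound, which is where your proposal also ends up after reasonable outlines of the shattered-set lower bound, the $\epsilon$-net bound, and Hanneke's majority-of-ERMs recursion.

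One point to tighten concerns your $\log(1/\delta)/\epsilon$ construction. It silently needs the remaining mass $1-\epsilon$ to sit on a point where the two hypotheses agree, i.e.\ the nontriviality assumption of \cite{blumer1989learnability}. That assumption holds automatically when $\VC(\cF)\ge 2$, but not otherwise, and without it the stated lower bound is actually false. For example, $\cF=\{h,1-h\}$ has $\VC(\cF)=1$ yet is learned exactly from a single example, and a one-element class has $\VC(\cF)=0$ and needs no examples at all.
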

The lower bound was proved by \cite{blumer1989learnability, ehrenfeucht1989general} and the upper bound by \cite{hanneke2016optimal}.

Finite VC-dimension gives an even stronger guarantee than PAC learnability. A typical large enough sample from a distribution $D$ on the domain $\cX$ gives a good approximation on the value of any $f \in \cF$ on an instance drawn from $D$.

\begin{theorem}[VC-classes $\epsilon$-approximation \cite{talagrand1994sharper}] \label{thm:vc-approximation}
    Let $\cF \subset \cY^\cX$ such that $\VC(\cF) < \infty$, and let $D$ be a distribution on $\cX$. Then, for all $\epsilon > 0$ there exists a multiset $X \subset \cX$ of size $|X| = O(\VC(\cF)/\epsilon^2)$ such that for all $f\in \cF$:
    \[
    \mleft \lvert D \mleft ( \{x \in \cX : f(x) = 1\} \mright) - \frac{\lvert \{x \in X : f(x) = 1\} \rvert}{|X|} \mright \rvert \leq \epsilon.
    \]
\end{theorem}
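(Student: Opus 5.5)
The plan is the probabilistic method: draw the multiset $X$ as an i.i.d.\ sample $x_1,\dots,x_m\sim D$ and show that the expected uniform deviation
\[
\Delta_m := \sup_{f\in\cF}\Bigl|\,D(\{x: f(x)=1\}) - \tfrac{1}{m}\textstyle\sum_{i=1}^m \mathbbm{1}[f(x_i)=1]\Bigr|
\]
is at most $C\sqrt{\VC(\cF)/m}$ for a universal constant $C$. Taking $m=\lceil C^2\VC(\cF)/\epsilon^2\rceil$ then gives $\mathbb{E}[\Delta_m]\le\epsilon$. Hence some realization of the sample satisfies $\Delta_m\le\epsilon$, and that realization is the desired multiset. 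Throughout I replace each $f$ by the indicator $g_f=\mathbbm{1}[f(\cdot)=1]$. The class $\{g_f\}$ is a binary class of VC dimension at most $\VC(\cF)$; when $\cY=\{0,1\}$ it is literally $\cF$. Measurability issues are ignored, as is standard.

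The first step is symmetrization. Introduce a ghost sample $x_1',\dots,x_m'$ and i.i.d.\ Rademacher signs $\sigma_i$. The standard argument, Jensen's inequality plus exchangeability of $x_i$ and $x_i'$, yields
\[
\mathbb{E}[\Delta_m]\le 2\,\mathbb{E}_{x}\,\mathbb{E}_\sigma \sup_f \tfrac1m\textstyle\sum_i \sigma_i g_f(x_i).
\]
It therefore suffices to bound the empirical Rademacher complexity of the projected class $\cF|_{x_1,\dots,x_m}\subseteq\{0,1\}^m$, uniformly over the points.

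The second step, and the main obstacle, is avoiding the extra $\log$ factor. Massart's finite-class lemma combined with Sauer--Shelah would only give $O(\sqrt{d\log m/m})$, which leads to $m=O(d\log(1/\epsilon)/\epsilon^2)$ rather than the claimed $O(d/\epsilon^2)$. To remove the logarithm I would use Dudley's entropy-integral chaining in the normalized $L_2$ metric $\rho(g,h)=\bigl(\tfrac1m\sum_i (g(x_i)-h(x_i))^2\bigr)^{1/2}$. Chaining gives
\[
\mathbb{E}_\sigma\sup_f \tfrac1m\textstyle\sum_i\sigma_i g_f(x_i)\le \frac{C'}{\sqrt m}\int_0^1\sqrt{\log N(u,\rho)}\,du.
\]
Into this I plug Haussler's packing bound. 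For a class of VC dimension $d$, any $u$-separated set in $L_1$ of an arbitrary probability measure, here the empirical one, has size at most $e(d+1)(2e/u)^d$. Since $\rho$-distance $u$ corresponds to $L_1$-distance $u^2$ for binary functions, this gives $\log N(u,\rho)=O(d\log(1/u))$. The integral $\int_0^1\sqrt{d\log(1/u)}\,du$ is $O(\sqrt d)$, with no dependence on $m$. This yields $\mathbb{E}[\Delta_m]=O(\sqrt{d/m})$.

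The third step is purely bookkeeping: set $m$ as above and extract a good sample by the averaging argument. If a high-probability version is desired, McDiarmid's inequality applied to $\Delta_m$, whose bounded differences are $1/m$, adds only an $O(\sqrt{\log(1/\delta)/m})$ term, but existence needs only the expectation. I expect essentially all the care to go into the chaining step with Haussler's packing lemma. That step is exactly where Talagrand's sharper analysis is needed, and I would simply invoke Haussler's bound as a known ingredient rather than reprove it.
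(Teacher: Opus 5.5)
The paper does not prove this statement. It imports it as a black box, citing Talagrand's sharp tail inequality for the uniform deviation of VC classes. Your argument is a correct alternative that proves exactly what is needed.

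\begin{itemize}
\item Symmetrization reduces to the empirical Rademacher average.
\item Dudley's entropy integral is taken in the $L_2$ metric of the empirical measure.
\item For $\{0,1\}$-valued functions, $\rho$-distance $u$ is $L_1$-distance $u^2$. Haussler's packing lemma then gives $\log N(u,\rho)=O(d(1+\log(1/u)))$, so the entropy integral is $O(\sqrt d)$ uniformly in $m$.
\item Then $\mathbb{E}[\Delta_m]\le\epsilon$ at $m=O(d/\epsilon^2)$, and a good realization exists.
\end{itemize}

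The citation buys a sharp high-probability tail for $\sqrt m\,\Delta_m$. Your route buys a transparent proof and shows that no such tail is needed for the existence claim. With McDiarmid it even gives the $O((d+\log(1/\delta))/\epsilon^2)$ high-probability version.

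For this reason your closing remark is misplaced: the chaining step is not where Talagrand's analysis enters. Your proof never uses it. Even the cruder packing bound obtained from Sauer--Shelah by random sampling would suffice, although it carries an extra logarithmic factor inside the $d$-th power. It still gives $\log N(u,\rho)=O(d(1+\log(1/u)))$ and hence the same $O(\sqrt{d/m})$ rate.

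Two small points to tidy up.

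\begin{itemize}
\item Symmetrizing $\sup_f|\cdot|$ yields a Rademacher average with an absolute value inside the supremum. You can reduce this to your version at the cost of a factor $2$ and an additive $1/\sqrt m$ term, using a fixed reference function.
\item For non-binary $\cY$, the inequality $\VC(\{g_f\})\le\VC(\cF)$ depends on how $\VC(\cF)$ is read, since the paper defines VC dimension only for binary classes. For example, a class taking values in $\{1,2\}$ shatters nothing under exact $\{0,1\}$-labels, yet $\{g_f\}$ can be arbitrarily rich. The statement should be understood with $\VC(\{g_f\})$ in place of $\VC(\cF)$. That is all the paper needs, since both applications in Lemma~\ref{lem:compressed-base-hypotheses} are to binary classes (a loss class and a dual class).
\end{itemize}
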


Another useful notion is the \emph{dual VC-dimension}.
\begin{definition}[Dual VC-dimension]
    Given a class $\cF:\cX \to \{0,1\}$, we can define the \emph{dual class} $\cF^\star$ obtained by ``switching" the role between instances and functions. Formally, for every $x \in \cX$ define the function $h_x: \cF \to \{0,1\}$ as
    \[
    h_x(f) = f(x)
    \]
    for all $f \in \cF$.
    Now, define
    \[
    \cF^\star := \{h_x: x \in \cX\}.
    \]
    The \emph{dual VC-dimension} of $\cF$ is  $\VC^\star(\cF):= \VC(\cF^\star)$.
\end{definition}

\subsubsection{Multiclass dimensions}

There are a few useful generalizations of the VC-dimension to the multiclass setting (where the label set $\cY$ has $|\cY| > 2$). We will use the \emph{Natarajan dimension}, denoted by $\Nat(\cdot)$, which is useful in the case where $|\cY| < \infty$ and is defined as follows.

\begin{definition}[Natarajan dimension]
We say that $S := \{x_1, \ldots, x_m\} \subset \cX$ is \emph{Natarajan-shattered} by $\cF: \cX \to \cY$ if there exist two labelings $h_1,h_2 : \cX \to \cY$ so that for all $x \in S$: $h_1(x) \neq h_2(x)$ and in addition, for all $U \subset S$, there exists $f_U \in \cF$ so that
\[
    f_U(x)
    =
    \begin{cases}
        h_1(x) & x \in U, \\
        h_2(x) & x \in S \backslash U.
    \end{cases}
\]
In simple words, any of the possible $2^n$ choices of a label $h_1(x)$ or $h_2(x)$ for every $x \in S$ should be realized by a function from $\cF$. The Natarajan dimension is the maximal size of a Natarajan-shattered set, or $\infty$ if there are such sets of arbitrarily large size.
\end{definition}
Note that when $|\cY| = 2$, a set is Natarajan-shattered if and only if it is VC-shattered, and thus $\Nat(\cF) = \VC(\cF)$.
The Natarajan dimension was shown to characterize learnability of $\cF$ as long as $\cY < \infty$ \cite{natarajan1989learning}.

A useful notion which is closely related with $\Nat(\cF)$ (and with $\VC(\cF)$, for a binary class $\cF$) is the \emph{growth function}.
\begin{definition}[Growth function]\label{def:growth}
    For $\boldsymbol{x} := (x_1, \ldots, x_m) \in \cX^m$ let $f(\boldsymbol{x}) := (f(x_1), \ldots, f(x_m))$ and $\cF(\boldsymbol{x}) := \{f(\boldsymbol{x}): f\in \cF\}$. The \emph{growth function} $\Gamma_{\cF}: \mathbb{N} \to \mathbb{N}$ is defined for every $m$ as:
    \[
    \Gamma_\cF(m) := \max_{\boldsymbol{x}\in\cX^m} |\cF(\boldsymbol{x})|.
    \]
\end{definition}
Note that $\Gamma_\cF(m) \leq |\cY|^m$. For binary classes, equality holds if and only if $m \leq \VC(\cF)$.
We sometimes use the growth function over a set $S \subset \cX$ instead of over a sequence $\boldsymbol{x}$. This is valid since we can just order the set by some arbitrary sequence.

The generalization of the Sauer-Shelah-Perles (SSP) lemma \cite{sauer1972density, shelah1972combinatorial} to the multiclass setting bounds the growth function as follows.
\begin{theorem}[Multiclass SSP \cite{haussler1995generalization}] \label{thm:ssp}
    For all $m\in \mathbb{N}$:
    \[
    \Gamma_\cF(m) \leq (e m |\cY|)^{2\Nat(\cF)}.
    \]
\end{theorem}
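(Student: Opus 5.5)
We would prove the multiclass Sauer--Shelah--Perles bound by the standard induction on the number of points, reducing it to a Pascal-type recursion. Write $k:=|\cY|$ and $d:=\Nat(\cF)$, and fix a sequence $\boldsymbol{x}=(x_1,\ldots,x_m)$. The first step is to prove the sharper combinatorial statement that for every class $\cH$ of Natarajan dimension at most $d$ on $m$ points,
\[
|\cH(\boldsymbol{x})| \le \Phi_k(m,d) := \sum_{i=0}^{d} \binom{m}{i}\binom{k}{2}^{i}.
\]
Since $|\cH(\boldsymbol{x})|$ depends only on the restriction of $\cH$ to $\{x_1,\ldots,x_m\}$, we may work directly with the set of patterns $P:=\cF(\boldsymbol{x})\subseteq \cY^m$ and induct on $m+d$. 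The base cases are immediate. If $m=0$ there is one pattern. If $d=0$, any two patterns disagree at some point, and that single point would then be Natarajan-shattered, so $|P|\le 1$.

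For the inductive step, let $\pi:P\to\cY^{m-1}$ delete the last coordinate, and fix an ordering of $\cY$. Set $G_0:=\pi(P)$. For every pair $a<b$ in $\cY$, let $G_{ab}$ be the set of patterns $q\in G_0$ such that both $(q,a)$ and $(q,b)$ lie in $P$. Consider a pattern $q\in G_0$ whose fiber $\pi^{-1}(q)$ has $t$ labels, and let $a$ be the smallest of them. Then $q$ is counted once in $G_0$ and exactly $t-1$ times in the sets $G_{ab}$ with $b>a$ in the fiber. It follows that
\[
|P|\le |G_0|+\sum_{a<b}|G_{ab}|.
\]
Clearly $\Nat(G_0)\le d$. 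The key claim is that $\Nat(G_{ab})\le d-1$. Indeed, suppose $G_{ab}$ Natarajan-shatters a set $S\subseteq\{x_1,\ldots,x_{m-1}\}$ of size $d$ with witnesses $h_1,h_2$. Extend the witnesses by $h_1(x_m)=a$ and $h_2(x_m)=b$. Every pattern of $G_{ab}$ extends by both $a$ and $b$ inside $P$, so $S\cup\{x_m\}$ would be Natarajan-shattered by $P$, a contradiction. Applying the induction hypothesis to each piece gives
\[
\Phi_k(m,d)\le \Phi_k(m-1,d)+\binom{k}{2}\Phi_k(m-1,d-1),
\]
and Pascal's identity confirms that the closed-form sum satisfies this recursion.

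Finally, we bound the sum. Assuming $d\ge 1$ (the case $d=0$ is trivial), we use $\binom{k}{2}\le k^2$ and the standard estimate $\sum_{i\le d}\binom{m}{i}\le (em/d)^d$ for $m\ge d$. This gives
\[
\Phi_k(m,d)\le k^{2d}\left(\frac{em}{d}\right)^d\le (emk)^{2d}.
\]
When $m<d$, the trivial bound $|P|\le k^m$ already suffices. We expect the main obstacle to be the bookkeeping step showing that $|P|$ is dominated by $|G_0|+\sum|G_{ab}|$ while each $G_{ab}$ loses one dimension. The anchoring at the minimal label in each fiber is exactly what makes the count tight without overcounting, and once that is in place the rest is routine.
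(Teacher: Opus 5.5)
Your proof is correct. The paper gives no proof of this statement: it is quoted from Haussler--Long. Your argument supplies a self-contained one. The pair-splitting induction gives the sharper bound $|\cF(\boldsymbol{x})|\le\sum_{i=0}^{d}\binom{m}{i}\binom{k}{2}^{i}$, which is essentially the form in which the Natarajan case is usually attributed to Haussler and Long. Your final estimate then correctly coarsens it to $(emk)^{2d}$. The key step is sound: if $G_{ab}$ shattered a $d$-set $S$, then $S\cup\{x_m\}$ would be shattered by $P$, using the witnesses extended by $a\neq b$. So is the $d=0$ base case.

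Three small things to tidy up.

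First, with $G_{ab}$ defined without anchoring, a pattern $q$ whose fiber has $t$ labels lies in $\binom{t}{2}$ of the sets $G_{ab}$, not in exactly $t-1$ of them. What you actually have is $|P|=|G_0|+\sum_q (t_q-1)\le |G_0|+\sum_{a<b}|G_{ab}|$, possibly with strict inequality. That inequality is all you need, so the remark about anchoring at the minimal label is not what makes the argument work. If you want equality, build the anchoring into the definition of $G_{ab}$; the dimension bound survives because the anchored set is a subset.

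Second, the displayed recursion has the wrong left-hand side. It should read $|P|\le\Phi_k(m-1,d)+\binom{k}{2}\Phi_k(m-1,d-1)=\Phi_k(m,d)$, with Pascal's identity supplying the equality.

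Third, if $\mathbb{N}$ includes $0$, the stated inequality is false at $m=0$ whenever $d\ge 1$: the left side is $1$ and the right side is $0$. So the statement has to be read for $m\ge 1$. Your fallback $|P|\le k^m\le (emk)^m\le (emk)^{2d}$ handles $1\le m<d$ but cannot rescue $m=0$.
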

Note that in the binary $\cY$ case, this implies $\Gamma_\cF(m) \leq (2 e m)^{2\VC(\cF)}$.

\subsubsection{Sample compression schemes}
Sample compression schemes are a powerful mechanism for deriving sample complexity bounds.
A sample $(\boldsymbol{x},\boldsymbol{y}) \in (\cX \times \cY)^{\star}$ satisfying $y_i = f(x_i)$ for some $f \in \cF$ for all $i\in [k']$ is called \emph{realizable} by the class $\cF$. All the samples we discuss are realizable.
Let $\cS_\cF(k)$ be the set of all realizable samples of size at most $k$. Likewise, let $\cS_\cF(\infty)$ be the set of all finite realizable samples.

\begin{definition}[Sample compression scheme]
A sample compression scheme for $\cF: \cX \to \cY$ with kernel size $k$ and side information represented as a finite set $I$ consists of two mappings: a \emph{compressing map} $\kappa:\cS_\cF(\infty) \to \cS_\cF(k) \times I$ and a \emph{reconstruction map} $\rho: \cS_\cF(k) \times I \to \cY^{\cX}$. The compression maps a sample $(\boldsymbol{x},\boldsymbol{y}) \in (\cX \times \cY)^{\star}$  to  a subsample of it of size at most $k$ and to information $i \in I$, which can be seen as ``instructions" for the reconstruction map as for how to use the subsample. The reconstruction maps a sample of size at most $k$ accompanied with information $i$ to a classifier $f:\cX \to \cY$. The pair $(\kappa, \rho)$ is a \emph{sample compression scheme} if for any $S \in \cS_\cF(\infty)$:
\[
f_{(\rho, \kappa, S)}(x) = y
\]
for all $(x,y) \in S$, where $f_{(\rho, \kappa, S)} := \rho(\kappa(S))$. The size of the scheme is defined as $k + \log |I|$.
\end{definition}
In simple words, compressing and then reconstructing should return a classifier that makes correct predictions on the entire original sample, for all samples. The existence of a sample compression scheme implies a sample complexity bound.

\begin{theorem}[Compression sample complexity bound \cite{littlestone1986relating}] \label{thm:compression-scheme}
    Let $(\kappa, \rho)$ be a sample compression scheme of size $k$ for $\cF$. Then $\cF$ is learnable with sample complexity
    \[
    m(\eps, \delta) = O \mleft( \frac{k \log(1/\epsilon) + \log (1/\delta)}{\eps} \mright).
    \]
\end{theorem}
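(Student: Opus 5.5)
The plan is to run the classical Littlestone--Warmuth argument. The learner is $\Lrn(S):=\rho(\kappa(S))$. Fix a distribution $D$, a target $f_\star\in\cF$, and draw a sample $S=((x_j,f_\star(x_j)))_{j=1}^m$ i.i.d.\ from $D$. Since $S$ is realizable, $\kappa(S)$ is well defined, and by the definition of a compression scheme $\rho(\kappa(S))$ is correct on all of $S$. The task is to bound the probability that some hypothesis which is both \emph{reconstructible from a small piece of $S$} and \emph{consistent with $S$} has population loss larger than $\eps$.

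First, I would index compressions by positions rather than by examples. Every output of $\kappa$ has the form $(S_J,i)$, where $J\subseteq[m]$ is a sequence of at most $k$ indices and $i\in I$. So $\Lrn(S)$ always lies in the family
\[
\{h_{J,i}:=\rho(S_J,i)\ :\ |J|\le k,\ i\in I\}.
\]
The number of pairs $(J,i)$ is at most $|I|\sum_{j\le k}m^j\le |I|(m+1)^k$. For a fixed pair $(J,i)$, the hypothesis $h_{J,i}$ depends only on the examples indexed by $J$. Conditioned on $S_J$, the remaining $m-|J|\ge m-k$ examples are still i.i.d.\ from $D$ and independent of $h_{J,i}$. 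Hence
\[
\Pr\bigl[L_{D,f_\star}(h_{J,i})>\eps \text{ and } h_{J,i}\text{ is correct on } S_{[m]\setminus J}\bigr]\le(1-\eps)^{m-k}\le e^{-\eps(m-k)}.
\]
This is the one genuinely delicate point, and I would argue it carefully by first conditioning on $S_J$ and then applying independence of the other coordinates. The bound also remains valid if $J$ contains repeated indices.

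Next, I take a union bound over all pairs $(J,i)$. Since $\Lrn(S)$ equals some $h_{J,i}$ that is correct on all of $S$, and in particular on $S_{[m]\setminus J}$, we get
\[
\Pr\bigl[L_{D,f_\star}(\Lrn(S))>\eps\bigr]\le |I|\,(m+1)^k e^{-\eps(m-k)}.
\]
It remains to choose $m$ so that the right-hand side is at most $\delta$. This holds whenever
\[
\eps m\ \ge\ \eps k+k\log(m+1)+\log|I|+\log(1/\delta).
\]
Writing $k':=k+\log|I|$ for the size of the scheme, it suffices that $\eps m\ge 2k'\log(m+1)+\log(1/\delta)$ (for $\eps\le1$).

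The main, though routine, obstacle is solving this implicit inequality in $m$. I would use the standard fact that $m\ge \frac{2a}{\eps}\log\frac{a}{\eps}$ implies $m\ge \frac{a}{\eps}\log m$, up to absolute constants. Applying it with $a=O(k')$ yields
\[
m=O\!\left(\frac{k'\log(1/\eps)+\log(1/\delta)}{\eps}\right)
\]
after absorbing the lower-order $\log k'$ term into the constants as in the stated bound. This gives the claimed sample complexity, with $k$ interpreted as the size of the scheme.
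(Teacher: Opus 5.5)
The paper gives no proof of this theorem; it only cites Littlestone--Warmuth. Your union-bound argument is the standard proof of that result. Everything up to
\[
\Pr\bigl[L_{D,f_\star}(\Lrn(S))>\eps\bigr]\le |I|\,(m+1)^k e^{-\eps(m-k)}
\]
is correct, including the conditioning on $S_J$.

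\textbf{The gap is your last sentence.} Solving your inequality gives
\[
m=O\bigl(\tfrac{k'\log(k'/\eps)+\log(1/\delta)}{\eps}\bigr).
\]
The extra term $k'\log k'/\eps$ is not lower order. For fixed $\eps$ (say $\eps=1/4$) and $k'\to\infty$, it dominates $k'\log(1/\eps)/\eps$. So it cannot be absorbed into a universal constant, and the $O(\cdot)$ in the statement is uniform in $k$. The loss comes from counting \emph{ordered} index sequences, $\sum_{j\le k}m^j$, which puts $k\log m$ in the exponent. The stated bound needs $k\log(m/k)$ instead.

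\textbf{How to close it.} Count the kernel as a \emph{set} $J\subseteq[m]$ of at most $k$ distinct positions, as in Littlestone--Warmuth, where the reconstruction depends only on the retained examples and not on their order. Then the number of pairs $(J,i)$ is at most $|I|\binom{m}{\le k}\le |I|(em/k)^k$. It therefore suffices that
\[
\eps(m-k)\ge k\log(em/k)+\log|I|+\log(1/\delta).
\]
Set $B:=\log|I|+\log(1/\delta)$, assume $\eps\le 1/2$, and take $m=\frac{C}{\eps}(k\log(1/\eps)+B)$. Using $\log(a+b)\le\log 2+\max(\log a,\log b)$ and $k\log(B/k)\le B$, one gets
\[
k\log(em/k)\le k(1+\log 2C)+2k\log(1/\eps)+B.
\]
A large enough universal $C$ then satisfies the requirement. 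This gives $m=O\bigl(\frac{k\log(1/\eps)+\log|I|+\log(1/\delta)}{\eps}\bigr)$, which is at most the stated bound; in fact $\log|I|$ enters only additively.

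If instead you allow $\rho$ to depend on the order of the kernel, as the paper's sequence-based definition literally permits, the order carries $\log(k!)\approx k\log k$ bits. In that case your argument honestly yields only the $k\log(k/\eps)$ version, and you should state that bound rather than claim the absorption. For the paper's own applications the extra $\log k$ is harmless, since constants there depend on $\cF$ or are hidden in $\tilde{O}$.
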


There is also a special kind of sample compression schemes, called \emph{stable} sample compression schemes, having better guarantees than those of Theorem~\ref{thm:compression-scheme}. For simplicity, we define stable sample compression schemes in the case where $I = \emptyset$, that is, when no side information is involved. This suffices for our usage of this concept.

\begin{definition}[Stable sample compression scheme]
    A compression scheme $(\kappa, \rho)$ is \emph{stable} if for every sample $S \in \cX \times \cY$, and for every $(x,y) \in S \backslash \kappa(S)$ it holds that
    \[
    \kappa(S \backslash (x,y)) = \kappa(S).
    \]
\end{definition}
In simple words, stability amounts to the following condition: if we take a small sample for which the compression map returns the same small sample, then also after adding examples to this small sample, the compression will return the same original small sample. For stable sample compression schemes, we have the following sample complexity bound of \cite{bousquet2020proper}, which is better\footnote{Their result is stated for classes with binary label sets, but holds for any reasonable, and in particular finite label sets.
} than the bound of Theorem~\ref{thm:compression-scheme} by a $\log \frac{1}{\epsilon}$ factor.
\begin{theorem}[Stable compression sample complexity bound \cite{bousquet2020proper}] \label{thm:stable-compression-scheme}
    Let $(\kappa, \rho)$ be a stable sample compression scheme of size $k$ for $\cF$. Then $\cF$ is learnable with sample complexity
    \[
    m(\eps, \delta) = O \mleft( \frac{k + \log (1/\delta)}{\eps} \mright).
    \]
\end{theorem}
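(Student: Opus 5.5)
This is the stable-compression bound of~\cite{bousquet2020proper}, which the paper cites rather than reproves; what follows is the route I would take to reprove it.

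\textbf{Baseline argument and where it loses.} Write $A(S):=\rho(\kappa(S))$, and let the sample be $S=(z_1,\dots,z_m)$ with $z_i=(x_i,f_\star(x_i))$. The classical argument behind Theorem~\ref{thm:compression-scheme} fixes an index set $J\subseteq[m]$ of size at most $k$. It notes that $\rho(S_J)$ has error $>\eps$ and is consistent with the other $m-k$ independent points with probability at most $(1-\eps)^{m-k}$. A union bound over the $\binom{m}{k}$ choices of $J$ then loses a factor $k\log m$. The goal is to use stability to avoid this union bound over all $k$-subsets.

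\textbf{First step: a bound on expected error via stability.} Stability gives that $A(S')=A(S)$ for every $S'$ with $\kappa(S)\subseteq S'\subseteq S$. In particular, removing any $z_i\notin\kappa(S)$ leaves the output unchanged, so $A(S)=A(S\setminus z_i)$ for all but at most $k$ indices $i$. I would use this in a leave-one-out / exchangeability argument. Take a sample of size $m+1$ and let $h_i:=A(S\setminus z_i)$. By exchangeability,
\[
\mathbb{E}\bigl[\mathrm{er}(A(S_{m}))\bigr]=\frac{1}{m+1}\,\mathbb{E}\Bigl[\sum_i \mathbf 1[h_i(x_i)\neq y_i]\Bigr].
\]
For $z_i\notin\kappa(S)$ we have $h_i=A(S)$, and $A(S)$ is consistent with $S$, so these terms vanish. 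Hence the sum is at most $k$, and the expected error is at most $k/(m+1)$, with no logarithmic factor. This is the clean part of the argument.

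\textbf{Main obstacle: upgrading to high probability.} The hard step is converting this to a bound of the form $O((k+\log(1/\delta))/\eps)$ with high probability, since a Markov bound would only give $k/(\eps\delta)$. My first attempt would be a ghost-sample argument. Draw $S$ of size $m$ together with a fresh sample $S'$ of size $m$, and write $U:=S\cup S'$. On the event $\mathrm{er}(A(S))>\eps$, with constant probability $A(S)$ errs on at least $\eps m/2$ points of $S'$.

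By stability, $A(S)=A(\kappa(S)\cup W)$ for every $W\subseteq S$. So the hypothesis depends only on the at most $k$ points $\kappa(S)$ and on which points of $U$ were placed in $S$. This should let me bound, under a uniformly random split of $U$, the probability that a hypothesis that errs on at least $\eps m/2$ points of $U$ has all those points land in $S'$. The point of this step is that stability should allow the bound to be charged to $\kappa(S)$ of the \emph{actual} split, instead of union-bounding over every $k$-subset of $U$.

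If that charging does not go through cleanly, the fallback is a dyadic/peeling scheme. Split $S$ into independent blocks of sizes $m/2,m/4,\dots$. Apply the expectation bound together with the fact that $\kappa(S)$ can intersect at most $k$ blocks, so that on the remaining blocks $A(S)$ coincides with the output on a strict subsample. Then chain the resulting independent tail bounds, so that the failure probability decays geometrically and the $\log m$ factor is absorbed into the constant, yielding $O\bigl((k+\log(1/\delta))/\eps\bigr)$. I expect this high-probability step to be the real difficulty.
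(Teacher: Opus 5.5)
There is a genuine gap: the high-probability step, which is the whole content of the theorem beyond the classical bound, is left as a hope. (The paper itself gives no proof; it imports the result from \cite{bousquet2020proper}.) Your leave-one-out bound $k/(m+1)$ on the expected error is correct. It cannot be upgraded by itself, though: Markov gives $k/(\eps\delta)$, and confidence boosting over independent chunks gives $O(k\log(1/\delta)/\eps)$ for a different learner.

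Neither of your two routes closes this gap.
\begin{itemize}
\item In the ghost-sample route you correctly note that $A(S)=A(\kappa(S)\cup W)$, but you never turn this into a bound. Under a random split of $U$, the kernel is still one of $\binom{2m}{\le k}$ candidate subsets. The step ``charge the bound to $\kappa(S)$ of the actual split instead of union-bounding'' is precisely what needs an argument, and none is given.
\item The dyadic fallback fails as described. Pigeonhole only guarantees that $\kappa(S)$ misses some blocks, but it may hit the $k$ largest ones. The best guaranteed consistency event is then on a block of size $m/2^{k+1}$, giving only $(1-\eps)^{m/2^{k+1}}$. With $k+1$ equal blocks and a union bound you get $(k+1)(1-\eps)^{m/(k+1)}$, i.e.\ $O(k\log(k/\delta)/\eps)$, a product rather than a sum. ``Chaining'' expectation bounds across blocks does not help, since the errors of hypotheses trained on sub-blocks are not the error of $A(S)$.
\end{itemize}

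The missing idea is to split the training sample itself at random, and pay for the kernel landing in the kept part multiplicatively rather than through a union bound. Let $R\subseteq S$ keep each example independently with probability $1/2$. Since $|\kappa(S)|\le k$, we have $\Pr[\kappa(S)\subseteq R\mid S]\ge 2^{-k}$. On this event, stability gives $\kappa(R)=\kappa(S)$, so $A(R)=A(S)$ is consistent with $S\setminus R$. Given the data-independent choice of kept indices, $R$ and $S\setminus R$ are independent i.i.d.\ samples. Hence
\[
2^{-k}\Pr\bigl[\mathrm{er}(A(S))>\eps\bigr]\le\Pr\bigl[\mathrm{er}(A(R))>\eps \text{ and } A(R)\text{ is consistent with } S\setminus R\bigr]\le\mathbb{E}\bigl[(1-\eps)^{|S\setminus R|}\bigr]=(1-\eps/2)^m .
\]
Therefore $\Pr[\mathrm{er}(A(S))>\eps]\le 2^k e^{-\eps m/2}\le\delta$ once $m\ge 2(k\ln 2+\ln(1/\delta))/\eps$. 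This tests a single hypothesis $A(R)$ on an independent sample, so it needs neither a union bound over kernels nor the leave-one-out step.
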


\section{End-to-End Learning} \label{sec:e2e-proofs}
In this section, we analyze $\etoe$-learning, that is, the setting described in Definition~\ref{def:e2e-learnability}. Given a base class $\cF: \Sigma^\star \to \Sigma$ and a generation length $T$, this setting amounts to PAC-learning the class $\cF^{\etoe,T}$. Therefore, in light of Theorem~\ref{thm:vc-pac-sample-complexity}, it suffices to analyze $\VC(\cF^{\etoe,T})$, and then Theorem~\ref{thm:vc-pac-sample-complexity} gives matching lower and upper bounds on the sample complexity.

In more detail, we discover the full taxonomy of the VC-dimension of the class $\cF^{\etoe,T}$ for a VC-class $\cF$. We also resolve an open question of \cite{joshi2025theory}, and show that $\etoe$-learnability with $o(T)$ sample complexity can be ensured using a notion which is more relaxed than the Littlestone dimension. On the negative side, we prove that no dimension that characterizes $\etoe$-learnability with $o(T)$ sample complexity exists.

\subsection{The e2e-learning taxonomy of VC-classes} \label{sec:e2e-taxonomy}

We say that a function $r:\mathbb{N}_+ \to \mathbb{N}_+$ is a \emph{monotone-subadditive} growth rate if:
\begin{enumerate}
    \item $r$ is monotone non-decreasing;
    \item $r$ is subadditive.
\end{enumerate}

\begin{theorem} \label{thm:e2e-taxonomy-large-vc}
    Let $r:\mathbb{N_+} \to \mathbb{N}_+$ be a monotone-subadditive rate. Then there exists a base class $\cF_{r}$ such that
    \[
    r(T) \leq \VC(\cF_{r}^{\etoe,T}) \leq r(T) + r(1) \leq  2 r(T)
    \]
    for all $T$.
\end{theorem}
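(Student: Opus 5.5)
We would follow the two-stage strategy sketched in the technical overview. First we treat monotone-subadditive rates with $r(1)=1$, using a VC-dimension-one subclass of $\cF_{\mathrm{full}}$. Then we reduce the general case to this one by a prefix-coded Cartesian product.

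\emph{Stage 1: rates with $r(1)=1$.} For $N\subseteq\mathbb N$ let $\cF(N)=\{f_{s+A}: A\subseteq N,\ s\in\mathbb N\}$, where $f_{s+A}=f_b$ for $b$ the indicator of $s+A$. Since $x$ is never a prefix of $b$ once it leaves the prefix chain of $b$, we have $f_b^{\etoe,T}(x)=b_{|x|+T}$ if $x$ is a prefix of $b$, and $f_b^{\etoe,T}(x)=0$ otherwise. We will prove
\[
\VC(\cF(N)^{\etoe,T})=\max_u|N\cap[u+1,u+T]|.
\]

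For the upper bound, let $x_1,\dots,x_m$ be shattered. The all-ones labeling needs a single $b=\mathbf 1_{s+A}$ with every $x_i$ a prefix of $b$, so the $x_i$ form a chain $x_1\prec\cdots\prec x_m$. If $|x_m|-|x_1|\ge T$, then the label of $x_1$ on any $b$ that labels $x_m$ by $1$ is the bit $x_m[|x_1|+T]$, which is fixed. This contradicts shattering. Hence the positions $|x_i|+T$ are distinct, all lie in $s+N$, and fit in a window of length $T$. Therefore $m$ is at most the maximal count of $N$ in such a window.

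For the lower bound, let $u+t_1<\dots<u+t_m$ be the points of $N$ in an optimal window, and pick a shift $s\ge T$. Take the strings $0^{s+u+t_i-T}$. For a target subset, let $A$ be the corresponding subset of these points. Then $b=\mathbf 1_{s+A}$ vanishes below position $s+u+1$, so all the strings are prefixes of $b$, and $f_b^{\etoe,T}$ outputs exactly the desired bits.

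Next, given $r$ with $r(1)=1$, monotonicity and subadditivity give $r(T+1)\le r(T)+1$. So we may let $t_k$ be the first time $r$ reaches $k$ and set $N=\{t_k\}$. Then $|N\cap[1,T]|=r(T)$.

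The step we expect to be the crux is showing that no window does better. Suppose $[u+1,u+T]$ contains $t_k,\dots,t_{k+j-1}$. Subadditivity, together with $r(t_k-1)=k-1$ (with the convention $r(0)=0$), gives
\[
k+j-1=r(t_{k+j-1})\le r(t_k-1)+r(t_{k+j-1}-t_k+1)\le k-1+r(T),
\]
so $j\le r(T)$. Thus $\VC(\cF(N)^{\etoe,T})=r(T)$ exactly.

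\emph{Stage 2: general $r$.} Set $\tilde r(T)=\ceil*{r(T)/r(1)}$. Monotonicity is preserved, and $\ceil{(a+b)/k}\le\ceil{a/k}+\ceil{b/k}$ gives subadditivity. Since $r(T)\ge r(1)$, we have $\tilde r(1)=1$. Stage 1 then yields a class $\cG$ with $\VC(\cG^{\etoe,T})=\tilde r(T)$.

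Take $k=r(1)$ copies of $\cG$, and let copy $i$ act on strings beginning with the codeword $0^{i-1}1$. Concretely, a function in the product is given by $(g_1,\dots,g_k)\in\cG^k$; on $0^{i-1}1z$ it outputs $g_i(z)$, and it outputs $0$ on strings with no codeword prefix. Appending tokens never changes the code prefix, so the $T$-step generation from $0^{i-1}1z$ equals that of $g_i$ from $z$. Hence $\cF_r^{\etoe,T}$ is exactly the Cartesian product of $k$ copies of $\cG^{\etoe,T}$ on the coded domains, plus a constant-$0$ part that no point of a shattered set can lie in.

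VC dimension is additive under disjoint-domain products, since a shattered set splits into independently shattered pieces. Therefore
\[
\VC(\cF_r^{\etoe,T})=r(1)\ceil*{r(T)/r(1)}\in[r(T),\,r(T)+r(1)-1],
\]
and the final inequality $r(T)+r(1)\le 2r(T)$ follows from monotonicity.
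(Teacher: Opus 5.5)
Your proposal is correct and follows the paper's proof essentially step for step. It uses the same class $\cF(N)$ with $\VC(\cF(N)^{\etoe,T})=\max_u|N\cap[u+1,u+T]|$, the same first-hitting-time set $N=\{t_k\}$, the normalized rate $\lceil r(T)/r(1)\rceil$, and the prefix-coded Cartesian product with additivity of VC dimension. The only differences are cosmetic: you bound a window by splitting subadditivity at $t_k-1$ rather than via $r(u+T)-r(u)\le r(T)$, and you record the slightly sharper upper bound $r(T)+r(1)-1$.
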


The main ingredient of Theorem~\ref{thm:e2e-taxonomy-large-vc} is the special case $r(1)=1$.

\begin{lemma} \label{lem:e2e-taxonomy}
    Let $r:\mathbb{N}_+ \to \mathbb{N}_+$ be a monotone-subadditive growth rate such that $r(1)=1$. Then there exists a base class $\cF$ such that
    \[
    \VC(\cF^{\etoe,T}) = r(T)
    \qquad\text{for all }T\in\mathbb{N}_+.
    \]
\end{lemma}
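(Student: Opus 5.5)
The plan follows the construction sketched in the technical overview. Given $r$ with $r(1)=1$, let $t_k$ be the first $T$ with $r(T)\ge k$, and set $N:=\{t_k:k\ge 1\}$. For $A\subseteq N$ and a shift $s\in\mathbb N$, let $b^{s+A}\in\{0,1\}^{\mathbb N}$ be the indicator of $s+A$. Let $f_{s+A}$ output $b^{s+A}_{|x|+1}$ when $x$ is a prefix of $b^{s+A}$, and $0$ otherwise. Put $\cF:=\{f_{s+A}\}$. The goal is then to prove two claims:
\[
\max_{u}|N\cap[u+1,u+T]|=r(T) \quad\text{and}\quad \VC(\cF^{\etoe,T})=\max_u|N\cap[u+1,u+T]| .
\]

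\textbf{Combinatorial step.} Since $r(1)=1$, subadditivity gives $r(T+1)\le r(T)+1$, so $r$ takes every value $1,\dots,r(T)$ by time $T$. Hence $|N\cap[1,T]|=r(T)$, and the maximum is at least $r(T)$.

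For the upper bound, suppose $t_i<\dots<t_{i+j-1}$ all lie in some interval $[u+1,u+T]$ (with $u\ge0$). Then $t_{i+j-1}-t_i\le T-1$. The key inequality to prove is $t_{k+j-1}-t_k\ge t_j-1$, i.e. that the first time $r$ reaches $k+j-1$ is at least $t_k+t_j-1$. This follows from subadditivity: $r(t_k-1+t_j-1)\le r(t_k-1)+r(t_j-1)\le (k-1)+(j-1)$. It gives $t_j\le T$, so $j\le r(T)$.

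\textbf{Learning step, lower bound.} Take $u$ attaining the maximum and let $u+t_{i}$, $i\le m$, be the points of $N$ in the window. Consider the prompts $0^{t_i-1}$. Running $T$ steps along the prefix chain of $b^{s+A}$, the final token lands at position $t_i-1+T$ (indexing so that the output for $x$ is bit $|x|+T$).

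Choose the shift $s=T-1-u$ and make sure that $b^{s+A}$ is all-zero before position $T+\min$, so that every prompt $0^{t_i-1}$ remains a prefix. The label of $0^{t_i-1}$ is then $\mathbf 1[t_i-1+T\in s+A]$, i.e. $\mathbf 1[u+t_i\in A]$. Choosing $A=$ the desired subset of these points realizes any labeling.

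I need to fix the off-by-one conventions carefully. One also has to check that the prefix condition holds, i.e. that no $1$ of $s+A$ falls before the prompt lengths. Choosing $A$ inside the window and noting $s+u+t_i\ge t_i$ handles this.

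\textbf{Learning step, upper bound (the main obstacle).} Let $x_1,\dots,x_m$ be shattered. First, the all-ones labeling forces the strings to be pairwise prefix-comparable (shattering requires $m\ge2$ to matter). A string labeled $1$ must be a prefix of the single witness sequence, and two incomparable strings cannot both be prefixes of one sequence.

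Second, for each $x_i$, some function labels $x_i$ with $1$. Since $x_i$ is a prefix of $b^{s+A}$, its label is the bit at position $|x_i|+T$. I then compare the shortest string $x_1$ with the longest $x_m$. If $|x_m|\ge|x_1|+T$, then $x_m$ already determines that bit for every function having $x_m$ as a prefix, so the labeling ($x_1\mapsto 0$, $x_m\mapsto1$ or vice versa) fails. Hence the lengths $|x_i|$ lie in a window of length $T$, and so do the positions $p_i:=|x_i|+T$.

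Finally, consider the all-ones labeling, realized by some $s+A$. Every $p_i$ is then in $s+A\subseteq s+N$, and the positions are distinct because the lengths are distinct. So $m\le\max_u|N\cap[u+1,u+T]|$.

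The delicate parts are the translation-invariance argument and the shift/prefix bookkeeping. I expect the hardest part to be ensuring that strings in a shattered set are on one chain even when, for some labelings, strings labeled $0$ are off the chain.
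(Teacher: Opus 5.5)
Your proposal is correct and follows the paper's proof essentially step for step: the same set $N=\{t_k\}$, the same class of shifted indicators $f_{s+A}$, and the same all-ones-witness prefix-chain and length-window argument for the upper bound. The only real variation is the bound $\max_u|N\cap[u+1,u+T]|\le r(T)$: you use the first-passage inequality $t_{k+j-1}-t_k\ge t_j-1$, while the paper uses $|N\cap[u+1,u+T]|=r(u+T)-r(u)\le r(T)$, which is equivalent but shorter. One bookkeeping fix: your shift $s=T-1-u$ must lie in $\mathbb{N}$, so either take $u=0$ (which attains the maximum by your combinatorial step) or use the actual window points $a\in N\cap[u+1,u+T]$ as prompt lengths. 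The paper does the latter with prompts $0^{a}$ and shift $T$; prompts $0^{a-1}$ with shift $T-1$ also work, and both choices are valid for every $u$.
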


The proof proceeds in two steps. First, we show that every set $N\subseteq\mathbb{N}$ gives rise to a class whose $T$-iterated VC dimension is the maximal size of the intersection of $N$ with an interval of length $T$. Second, we show that the functions obtained this way are precisely the monotone-subadditive rates satisfying $r(1)=1$.

For a set $N\subseteq\mathbb{N}$, define
\[
r_N(T):=\max_{u\in\mathbb{N}} |N\cap [u+1,u+T]|.
\]
Thus, $r_N(T)$ is the maximum number of points of $N$ that can lie in an interval of length $T$.

\begin{lemma}\label{lem:taxonomy-step1}
For every $N\subseteq\mathbb{N}$, there exists a base class $\cF(N)$ such that
\[
\VC(\cF(N)^{\etoe,T})=r_N(T)
\qquad\text{for all }T\in\mathbb{N}_+.
\]
\end{lemma}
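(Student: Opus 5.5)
We take the construction from the proof overview and check the two bounds directly. For an infinite sequence $b\in\{0,1\}^{\mathbb N_+}$, let $f_b(x)=b_{|x|+1}$ if $x$ is a prefix of $b$ and $f_b(x)=0$ otherwise. For $A\subseteq\mathbb N$ and $s\in\mathbb N$, let $b^{s+A}$ be the indicator sequence of $s+A$. Set
\[
\cF(N):=\{f_{b^{s+A}} : A\subseteq N,\ s\in\mathbb N\}.
\]
We assume $N\subseteq\mathbb N_+$, so that positions are at least $1$; otherwise we shift $N$ by one, which does not change $r_N$. If $N=\emptyset$, the class consists only of the all-zero function, so both sides equal $0$. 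Hence we assume $N\neq\emptyset$.

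The first step is a description of $f_b^{\etoe,T}$. If $x$ is a prefix of $b$, then each autoregressive step appends the next bit of $b$, so the generated string stays a prefix of $b$ and $f_b^{\etoe,T}(x)=b_{|x|+T}$. If $x$ is not a prefix of $b$, then neither is any extension of $x$, so $f_b$ outputs $0$ at every step and $f_b^{\etoe,T}(x)=0$.

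For the lower bound, fix $u$ attaining $r_N(T)$ and write $N\cap[u+1,u+T]=\{u+t_1<\dots<u+t_m\}$. Take the shift $s=T$ and the prompts $x_j=0^{\ell_j}$ with $\ell_j=s+u+t_j-T=u+t_j\ge 0$. For any $A\subseteq N\cap[u+1,u+T]$, the sequence $b^{s+A}$ vanishes on all positions below $s+u+t_1$. Since $\ell_j<s+u+t_1$, every $x_j$ is a prefix of $b^{s+A}$. Its label is therefore
\[
b^{s+A}_{\ell_j+T}=\mathbf 1[u+t_j\in A].
\]
The prompts do not depend on $A$, so ranging over all such $A$ realizes every labeling, and the set $\{x_1,\dots,x_m\}$ is shattered.

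For the upper bound, let $\{x_1,\dots,x_m\}$ be shattered, with $m\ge1$. Consider a function $f_b$, $b=b^{s+A}$, that labels every $x_i$ by $1$. By the first step, each $x_i$ is a prefix of $b$. So the points form a chain, $x_1\prec\dots\prec x_m$ with strictly increasing lengths, and $p_i:=|x_i|+T\in s+A\subseteq s+N$ for all $i$.

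The main point is that $|x_m|-|x_1|\le T-1$. Suppose instead that $|x_m|\ge|x_1|+T$. Any function labeling $x_m$ by $1$ has $x_m$ as a prefix of its sequence, and so also $x_1$. The label of $x_1$ is then the bit of $x_m$ at position $|x_1|+T$, which is fixed by $x_m$ alone. The two labelings that put $1$ on $x_m$ and differ on $x_1$ then cannot both be realized, contradicting shattering. Consequently the distinct numbers $p_i-s$ all belong to $N$ and all lie in the window $[p_1-s,\,p_1-s+T-1]$, which has length $T$. Taking $u=p_1-s-1\ge 0$ gives $m\le r_N(T)$.

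The only delicate step is this window argument, which is where the parameter $T$ enters. The rest is bookkeeping of offsets.
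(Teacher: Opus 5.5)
Your proof is correct and is essentially the paper's argument. It uses the same class of shifted indicator sequences and the same shattered chain $\{0^{t}: t\in N\cap[u+1,u+T]\}$ realized with shift $s=T$. The upper bound is also the same: the prefix chain plus the observation that $|x_m|-|x_1|\le T-1$. The one difference is that you anchor the window at $|x_1|$ rather than at $|x_m|$, which has the small advantage of making $u\ge 0$ explicit once $N\subseteq\mathbb{N}_+$.

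One small correction to your normalization. If $0\in N$, shifting $N$ by one can change $r_N$: for example, $N=\{0,1\}$ has $r_N(2)=1$ but $r_{N+1}(2)=2$. Instead, replace $N$ by $N\setminus\{0\}$. This leaves $r_N$ unchanged, because every window $[u+1,u+T]$ with $u\ge 0$ avoids $0$.
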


\begin{proof}
For every $A\subseteq N$ and every $s\in\mathbb{N}$, let $b^{\,s+A}\in\{0,1\}^{\mathbb{N}}$ denote the indicator sequence of the shifted set $s+A$, namely
\[
b_i^{\,s+A}=1 \iff i\in s+A.
\]
Let $f_{s+A}$ be the autoregressive next-bit predictor corresponding to the sequence $b^{\,s+A}$:
\[
f_{s+A}(x):=
\begin{cases}
    b^{\,s+A}_{|x|+1} & x \text{ is a prefix of } b^{\,s+A},\\
    0 & \text{otherwise.}
\end{cases}
\]
Define
\[
\cF(N):=\{f_{s+A}: s\in\mathbb{N},\ A\subseteq N\}.
\]

We prove that for every $T$,
\[
\VC(\cF(N)^{\etoe,T})=r_N(T).
\]

\medskip
\noindent
\textbf{Lower bound.}
Fix $T\in\mathbb{N}_+$, and choose $u\in\mathbb{N}$ such that
\[
|N\cap [u+1,u+T]|=r_N(T).
\]
Define \(I:=N\cap [u+1,u+T]\) and
consider the set of strings
\[
S:=\{0^{t}:t\in I\}.
\]
Thus, \(\lvert S\rvert = r_N(T).\)
We claim that $S$ is shattered by $\cF(N)^{\etoe,T}$.
Indeed, fix an arbitrary labeling $(y_t)_{t\in I}\in\{0,1\}^I$, and define
\[
A:=\{t : t\in I,\ y_t=1\}\subseteq N.
\]
Now consider the function $f_{T+A}\in\cF(N)$ and observe that every string $0^{t}$ with $t \in I$ is a prefix of $b^{\,T+A}$.
Therefore, the end-to-end label after $T$ steps is the bit of $b^{\,T+A}$ at position $T+t$:
\[
f_{T+A}^{\etoe,T}(0^{t})=1
\iff
T+t\in T+A
\iff
t\in A
\iff
y_t=1.
\]
Thus
\[
f_{T+A}^{\etoe,T}(0^{t})=y_t
\qquad\text{for all }t\in I.
\]
So $S$ is shattered, and therefore
\[
\VC(\cF(N)^{\etoe,T})\ge r_N(T).
\]


\medskip
\noindent
\textbf{Upper bound.}
Let $S:=\{x_1,\ldots,x_m\}$ be shattered by $\cF(N)^{\etoe,T}$. We prove that $m\le r_N(T)$.
Since $S$ is shattered, there exists some $f_{s+A}\in\cF(N)$ such that
\[
f_{s+A}^{\etoe,T}(x_i)=1
\qquad\text{for all }i\in[m].
\]
We first claim that every $x_i$ is a prefix of the sequence $b^{\,s+A}$. Indeed, if $x_i$ were not a prefix of~$b^{\,s+A}$, then by definition $f_{s+A}(x_i)=0$, and hence also $f_{s+A}^{\etoe,T}(x_i)=0$, a contradiction.
Thus, all the strings $x_1,\ldots,x_m$ are prefixes of the same infinite sequence $b^{\,s+A}$. Hence they are linearly ordered by the prefix relation. Reordering if necessary, we may assume
\begin{equation} \label{eq:taxonomy-prefix-chain}
x_1 \prec x_2 \prec \cdots \prec x_m \prec b^{\,s+A}.
\end{equation}
Since $x_i$ is a prefix of $b^{\,s+A}$, we have
\[
f_{s+A}^{\etoe,T}(x_i)=b^{\,s+A}_{|x_i|+T}.
\]
Since the left-hand side equals $1$, it follows that
\[
|x_i|+T\in s+A
\qquad\text{for all }i\in[m].
\]
Equivalently, for every $i\in[m]$ there exists $a_i\in A\subseteq N$ such that
\[
|x_i|+T=s+a_i,
\]
that is,
\[
a_i=|x_i|+T-s.
\]
We now claim that
\begin{equation} \label{eq:taxonomy-window}
|x_i|+T \in [|x_m|+1,|x_m|+T]
\qquad\text{for all }i\in[m].
\end{equation}
The upper bound is immediate, since $|x_i|\le |x_m|$.
For the lower bound, it suffices to show that
\[
|x_1|+T\ge |x_m|+1.
\]
Suppose otherwise that $|x_1|+T\le |x_m|$. Since $S$ is shattered, there exists another function $f_{s'+A'}\in\cF(N)$ such that
\[
f_{s'+A'}^{\etoe,T}(x_m)=1
\qquad\text{but}\qquad
f_{s'+A'}^{\etoe,T}(x_1)=0.
\]
Because $f_{s'+A'}^{\etoe,T}(x_m)=1$, the string $x_m$ is a prefix of $b^{\,s'+A'}$. But by~\eqref{eq:taxonomy-prefix-chain}, $x_m$ is also a prefix of $b^{\,s+A}$. Therefore the first $|x_m|$ bits of the two sequences are identical. Since $|x_1|+T\le |x_m|$, the bit at position $|x_1|+T$ is the same in both sequences. Hence
\[
f_{s'+A'}^{\etoe,T}(x_1)=f_{s+A}^{\etoe,T}(x_1),
\]
contradicting the fact that one is $0$ and the other is $1$. This proves~\eqref{eq:taxonomy-window}.

By~\eqref{eq:taxonomy-window}, for every $i\in[m]$ we have
\[
a_i=|x_i|+T-s \in [|x_m|-s+1,|x_m|-s+T].
\]
Thus all the points $a_i$ lie in an interval of length $T$.
Moreover, the $a_i$'s are distinct, because by~\eqref{eq:taxonomy-prefix-chain} the strings $x_i$ form a strict prefix chain and hence have distinct lengths.
Since all the $a_i$'s belong to $N$, we conclude that $N$ contains at least $m$ distinct elements in some interval of length $T$. By the definition of $r_N(T)$, this implies
\[
m\le r_N(T).
\]
Therefore
\[
\VC(\cF(N)^{\etoe,T})\le r_N(T).
\]
Combining the lower and upper bounds yields
\[
\VC(\cF(N)^{\etoe,T})=r_N(T)
\qquad\text{for all }T\in\mathbb{N}_+.
\]
\end{proof}

We now characterize the possible functions $r_N$.

\begin{lemma}\label{lem:taxonomy-step2}
A function $r:\mathbb{N}_+\to\mathbb{N}_+$ is of the form $r=r_N$ for some nonempty set $N\subseteq\mathbb{N}$ if and only if $r$ is monotone non-decreasing, subadditive, and satisfies $r(1)=1$. The constant zero function is realized by $N=\emptyset$.
\end{lemma}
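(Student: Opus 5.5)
The proof has two directions, and we treat the necessity direction first. Let $N\neq\emptyset$. \emph{Monotonicity} holds because an interval $[u+1,u+T_1]$ is contained in $[u+1,u+T_2]$ whenever $T_1\le T_2$, so every count realized at length $T_1$ is dominated by a count at length $T_2$. \emph{Subadditivity} follows by splitting any window $[u+1,u+T_1+T_2]$ into $[u+1,u+T_1]$ and $[u+T_1+1,u+T_1+T_2]$. Each piece is an interval of the appropriate length, so
\[
|N\cap[u+1,u+T_1+T_2]|\le r_N(T_1)+r_N(T_2).
\]
Taking the maximum over $u$ gives the claim. For $r_N(1)=1$, note that a window of length $1$ contains at most one point, and choosing $u=n-1$ for some $n\in N$ attains this value. (We take $\mathbb{N}$ to start at $0$, or else shift $N$ so that $u$ ranges appropriately; this is a bookkeeping detail.) The same reasoning shows that $N=\emptyset$ yields the zero function.

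For sufficiency, let $r$ be monotone, subadditive, with $r(1)=1$. First, $r(T+1)\le r(T)+r(1)=r(T)+1$, so $r$ increases by at most one per step and takes every value between $1$ and $\sup r$. For each $k\le\sup r$, define $t_k:=\min\{T: r(T)\ge k\}$. Then $t_1=1$, the $t_k$ are strictly increasing, and we set $N:=\{t_k\}$. Because $r$ increases by unit steps, the number of $k$ with $t_k\le T$ equals $r(T)$. Hence $|N\cap[1,T]|=r(T)$, which gives $r_N(T)\ge r(T)$ (using the window with $u=0$).

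The remaining and main step is the upper bound $|N\cap[u+1,u+T]|\le r(T)$ for every $u$. Suppose the window contains $t_j<t_{j+1}<\dots<t_{j+m-1}$, so that $t_{j+m-1}-t_j\le T-1$. We have
\[
r(t_{j+m-1})\ge j+m-1,
\]
and, by minimality of $t_j$ when $j\ge2$,
\[
r(t_j-1)\le j-1.
\]
Subadditivity applied to $t_{j+m-1}=(t_j-1)+(t_{j+m-1}-t_j+1)$ gives
\[
j+m-1\le r(t_j-1)+r(t_{j+m-1}-t_j+1)\le j-1+r(T),
\]
where the last step also uses monotonicity and $t_{j+m-1}-t_j+1\le T$. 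This yields $m\le r(T)$. The case $j=1$ is handled directly: the window then has $t_1=1$ as its smallest element and so lies inside $[1,T]$, which contains exactly $r(T)$ points. The only delicate part is this index bookkeeping, namely the conventions for $r(0)$ and whether $0\in\mathbb{N}$.
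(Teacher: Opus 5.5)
Your proposal is correct and follows essentially the same route as the paper: the same interval-splitting argument for necessity, the same set $N=\{t_k\}$ of first hitting times, and subadditivity for the upper bound. The only cosmetic difference is that the paper writes the window count directly as $r(u+T)-r(u)\le r(T)$ (implicitly taking $r(0)=0$ when $u=0$), whereas you apply subadditivity at the extreme points $t_j-1$ and $t_{j+m-1}$ and handle $j=1$ separately, which is the same computation.
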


\begin{proof}
We first prove necessity. Let $N\subseteq\mathbb{N}$, and consider $r_N$.
The function $r_N$ is monotone non-decreasing, since enlarging the interval length can only increase the maximum possible intersection size.
It is also subadditive. Indeed, every interval of length $T+S$ can be decomposed into two consecutive intervals of lengths $T$ and $S$. Therefore, for every $u\in\mathbb{N}$,
\[
|N\cap [u+1,u+T+S]|
\le
|N\cap [u+1,u+T]| + |N\cap [u+T+1,u+T+S]|.
\]
Taking the maximum over $u$ gives
\[
r_N(T+S)\le r_N(T)+r_N(S).
\]
Finally, if $N\neq\emptyset$, then $r_N(1)=1$: every interval of length $1$ contains at most one natural number, and some such interval contains an element of $N$. If $N=\emptyset$, then clearly $r_N\equiv 0$.

We now prove sufficiency. Let $r:\mathbb{N}_+\to\mathbb{N}_+$ be monotone non-decreasing and subadditive, and assume that $r(1)=1$. We will construct a set $N\subseteq\mathbb{N}$ such that
\[
r_N(T)=r(T)
\qquad\text{for all }T\in\mathbb{N}_+.
\]
Define
\[
K:=\sup_T r(T)\in \mathbb{N}\cup\{\infty\}.
\]
For each $k\in [K]$, where by convention $[K]:=\mathbb{N}$ when $K=\infty$, define
\[
t_k:=\min\{T\in\mathbb{N}_+ : r(T)\ge k\}.
\]
Let
\[
N:=\{t_k : k\in [K]\}.
\]
We claim that $r_N(T)=r(T)$ for all $T$.
First note that subadditivity and the identity $r(1)=1$ imply
\[
r(T+1)\le r(T)+r(1)=r(T)+1.
\]
Since $r$ is also monotone non-decreasing, it follows that
\[
r(T+1)-r(T)\in\{0,1\}
\qquad\text{for all }T\in\mathbb{N}_+.
\]
In particular, the values $t_k$ are all distinct.
Moreover, for every $T\in\mathbb{N}_+$ we have
\[
|\{k : t_k\le T\}|=r(T).
\]
Equivalently,
\[
|N\cap [1,T]|=r(T).
\]
This immediately yields
\[
r_N(T)\ge r(T),
\]
since the interval $[1,T]$ has length $T$ and contains exactly $r(T)$ elements of $N$.

For the reverse inequality, fix $u\in\mathbb{N}$. Since the $t_k$'s are distinct and increasing, we have
\[
|N\cap [u+1,u+T]|
=
|\{k : t_k\in [u+1,u+T]\}|
=
|\{k : t_k\le u+T\}| - |\{k : t_k\le u\}|.
\]
By the identity above, this equals
\[
r(u+T)-r(u).
\]
By subadditivity,
\[
r(u+T)-r(u)\le r(T).
\]
Thus every interval of length $T$ contains at most $r(T)$ elements of $N$, and therefore
\[
r_N(T)\le r(T).
\]
Combining the two inequalities, we conclude that
\[
r_N(T)=r(T)
\qquad\text{for all }T\in\mathbb{N}_+.
\]
This proves the sufficiency. The zero function is realized by taking $N=\emptyset$.
\end{proof}

We may now prove Lemma~\ref{lem:e2e-taxonomy}.

\begin{proof}[Proof of Lemma~\ref{lem:e2e-taxonomy}]
By Lemma~\ref{lem:taxonomy-step2}, since $r$ is monotone non-decreasing, subadditive, and satisfies $r(1)=1$, there exists a set $N\subseteq\mathbb{N}$ such that
\[
r(T)=r_N(T)
\qquad\text{for all }T\in\mathbb{N}_+.
\]
Then Lemma~\ref{lem:taxonomy-step1} yields a base class $\cF(N)$ satisfying
\[
\VC(\cF(N)^{\etoe,T})=r_N(T)=r(T)
\qquad\text{for all }T\in\mathbb{N}_+.
\]
Taking $\cF:=\cF(N)$ completes the proof.
\end{proof}

We now prove the extension of Lemma~\ref{lem:e2e-taxonomy} for the case $r(1) > 1$. The main tool we use is the \emph{cartesian product class} of domain-disjoint classes. The cartesian product class of domain-disjoint classes is defined over two classes, $\cH_1 \subset \{0,1\}^{\cX_1}$ and $\cH_2 \subset \{0,1\}^{\cX_2}$, where $\cX_1 \cap \cX_2 = \emptyset$. Let $\cX$ be any domain such that $\cX_1 \cup \cX_2 \subset \cX$. Unless stated otherwise, we use $\cX = \{0,1\}^\star$. Then the cartesian product class of $\cH_1, \cH_2$ over the domain $\cX$ is denoted by $\cH_1 \uplus \cH_2$, and defined as
\[
\cH_1 \uplus \cH_2 := \{h_v : v \in \cH_1 \times \cH_2\},
\]
where each $h_v$ is defined as:

\[
h_v(x)
:=
\begin{cases}
    v_i(x) & x \in \cX_i, i \in \{1,2\}, \\
    0      & \text{Otherwise.}
\end{cases}
\]

We use the following lemma, which is very similar to a result stated in \cite[Lemma 16]{doliwa2014recursive}. For completeness, we prove it here.

\begin{lemma} \label{lem:cartesian}
    Let $\cH_1 \subset \{0,1\}^{\cX_1}, \cH_2 \subset \{0,1\}^{\cX_2}$ where $\cX_1 \cap \cX_2 = \emptyset$. Then
    \[
    \VC(\cH_1 \uplus \cH_2) = \VC(\cH_1) + \VC(\cH_2).
    \]
\end{lemma}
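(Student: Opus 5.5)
We prove the two inequalities separately, working directly with the definition of shattering. Throughout, we write $h_v$ with $v=(h_1,h_2)\in\cH_1\times\cH_2$ for the functions of $\cH_1\uplus\cH_2$.

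\textbf{Lower bound.} Let $S_1\subseteq\cX_1$ be shattered by $\cH_1$ and $S_2\subseteq\cX_2$ be shattered by $\cH_2$, with $|S_i|=\VC(\cH_i)$. If either dimension is infinite, we instead take arbitrarily large finite shattered sets. Since $\cX_1\cap\cX_2=\emptyset$, the union $S_1\cup S_2$ has size $|S_1|+|S_2|$. Fix any labeling $y$ of $S_1\cup S_2$. Choose $h_1\in\cH_1$ realizing $y|_{S_1}$ and $h_2\in\cH_2$ realizing $y|_{S_2}$. By the definition of $h_v$, the function $h_{(h_1,h_2)}$ agrees with $h_i$ on $\cX_i$, so it realizes $y$. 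Hence $S_1\cup S_2$ is shattered, and $\VC(\cH_1\uplus\cH_2)\ge\VC(\cH_1)+\VC(\cH_2)$.

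\textbf{Upper bound.} Let $S$ be a finite set shattered by $\cH_1\uplus\cH_2$. First, $S\subseteq\cX_1\cup\cX_2$: every $h_v$ is identically $0$ outside $\cX_1\cup\cX_2$, so any point there could never receive label $1$ and could not belong to a shattered set. Write $S_i:=S\cap\cX_i$; these sets are disjoint and $|S|=|S_1|+|S_2|$. Next we show $S_1$ is shattered by $\cH_1$. Given a labeling $y_1$ of $S_1$, extend it arbitrarily to a labeling of $S$. Some $h_{(h_1,h_2)}$ realizes this extension. On $S_1\subseteq\cX_1$ we have $h_{(h_1,h_2)}=h_1$, so $h_1$ realizes $y_1$. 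The same argument shows $S_2$ is shattered by $\cH_2$. Therefore $|S|\le\VC(\cH_1)+\VC(\cH_2)$.

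Combining the two bounds gives the claimed equality, including the case where either side is infinite. The only point needing care is that points outside $\cX_1\cup\cX_2$ cannot be shattered, which follows from the constant-$0$ default in the definition of $h_v$. The remaining steps are routine bookkeeping.
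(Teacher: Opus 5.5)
Your proof is correct and follows the paper's argument essentially step for step. The lower bound realizes any labeling of $S_1\cup S_2$ by $h_{(h_1,h_2)}$, and the upper bound observes that $S\subseteq\cX_1\cup\cX_2$ and then restricts shattering to $S\cap\cX_i$; your handling of the infinite-dimensional case and your explicit extension of a labeling of $S_1$ to all of $S$ are small clarifications that the paper leaves implicit.
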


\begin{proof}
    For the lower bound, let $S_1 \subset \cX_1$ and $S_2 \subset \cX_2$ be sets shattered by $\cH_1$ and $\cH_2$, of sizes  $\VC(\cH_1)$ and $\VC(\cH_2)$, respectively. Let $S = S_1 \cup S_2$. Since $\cX_1 \cap \cX_2 = \emptyset$, we have $|S| = \VC(\cH_1) + \VC(\cH_2)$. It remains to show that $S$ is shattered by $\cH_1 \uplus \cH_2$. Let $\boldsymbol{y^{(1)}}, \boldsymbol{y^{(2)}}$ be labelings of $S_1, S_2$. Since $S_1, S_2$ are shattered, there are $h_1 \in \cH_1, h_2 \in \cH_2$ realizing $\boldsymbol{y^{(1)}}, \boldsymbol{y^{(2)}}$. Thus, $h_{h_1,h_2} \in \cH_1 \uplus \cH_2$ realizes the labeling of $S$ given by the labelings $\boldsymbol{y^{(1)}}, \boldsymbol{y^{(2)}}$.

    For the upper bound, let $S$ be a set shattered by $\cH_1 \uplus \cH_2$. First note that for any $x \notin \cX_1 \cup \cX_2$, the class $\cH_1 \uplus \cH_2$ is constant $0$, so $S \subset \cX_1 \cup \cX_2$. As $\cX_1 \cap \cX_2 = \emptyset$, we may partition $S$ to $S = S_1 \cup S_2$ where $S_1 \subset \cX_1$, $S_2 \subset \cX_2$, and $S_1 \cap S_2 = \emptyset$. By definition, for every labeling of $S$ there are $h_1 \in \cH_1, h_2 \in \cH_2$ such that $h_{h_1,h_2} \in \cH_1 \uplus \cH_2$ realized this labeling. In particular, for every labeling of $S_1$ there exists $h_1 \in \cH_1$ that realized it. Therefore, $S_1$ is shattered by $\cH_1$ and thus $S_1 \leq \VC(\cH_1)$. The same argument holds for $S_2$, and thus $|S| \leq \VC(\cH_1) + \VC(\cH_2)$.
\end{proof}

Lemma~\ref{lem:cartesian} can of course be extended to handle a product of any finite number of classes.
We may now prove Theorem~\ref{thm:e2e-taxonomy-large-vc}.

\begin{proof}[Proof of Theorem~\ref{thm:e2e-taxonomy-large-vc}]
    Let $r$ be a monotone-subadditive rate. Define the normalized rate as
    \[
    \tilde{r}(T) = \ceil*{\frac{r(T)}{r(1)}}
    \]
    for all $T$. First, we claim that $\tilde{r}$ satisfies the conditions of Lemma~\ref{lem:e2e-taxonomy}. It is clear that $\tilde{r}(1)=1$, so it remains to show that $\tilde{r}$ is non-decreasing and subadditive. Let $n,m \in \mathbb N$. So
    \[
    \tilde{r}(n + m)
    =
    \ceil*{\frac{r(n+m)}{r(1)}}
    \geq
    \ceil*{\frac{r(n)}{r(1)}}
    =
    \tilde{r}(n)
    \]
    where the inequality is since $r$ is non-decreasing. Thus, $\tilde{r}$ is non-decreasing. Furthermore:
    \[
    \tilde{r}(n + m)
    =
    \ceil*{\frac{r(n+m)}{r(1)}}
    \leq
    \ceil*{\frac{r(n) + r(m)}{r(1)}}
    \leq
    \ceil*{\frac{r(n)}{r(1)}} + \ceil*{\frac{r(m)}{r(1)}}
    =
    \tilde{r}(n) + \tilde{r}(m),
    \]
    where the first inequality is by subadditivity of $r$, and the second inequality is by the bound $\ceil*{x + y} \leq \ceil*{x} + \ceil*{y}$ for all $x,y \in \mathbb{R}_+$.
    Thus, Lemma~\ref{lem:e2e-taxonomy} applies for $\tilde{r}$, and there exists a class $G: \{0,1\}^\star \to \{0,1\}$, such that $\VC(G^{\etoe,T}) = \tilde{r}(T)$ for all $T$.

    Denote $r(1) = c$. We now create $c$ many copies of $G$ defined over disjoint domains. Towards this end, define the following set of prefix-incomparable strings:
    \[
    P : =\{p_i : i \in [c]\},
    \]
    where $p_i = 0^i 1$. Note that indeed for all $p_i,p_j \in P$, $p_i$ is not a prefix of $p_j$. For each $p_i \in P$, define the domain
    \[
    \cX_i := \{p_i x: x\in \{0,1\}^\star\},
    \]
    and the class $G_i \subset \{0,1\}^{\cX_i}$ by
    \[
    G_i := \{g^{(i)} : g \in G\}
    \]
    where $g^{(i)}$ is defined as:
    \[
    g^{(i)}(p_i x) = g(x)
    \]
    for all $x\in \{0,1\}^\star$.
    Now, let $\cF: \{0,1\}^\star \to \{0,1\}$ be the cartesian product class of $G_1, \ldots, G_c$.
    Since the autoregressive process only appends bits to the end of the input string and does not affect existing bits of the input, it is clear that $\VC(G_i^{\etoe,T}) = \VC(G^{\etoe,T})$ for all $i,T$. Now, we claim that
    \begin{equation} \label{eq:cartes-eq}
        \cF^{\etoe,T} = G_1^{\etoe,T} \uplus \ldots \uplus G_c^{\etoe,T}.
    \end{equation} 
    Having \eqref{eq:cartes-eq} in hand, we are done, since Lemma~\ref{lem:cartesian} implies that
    \[
    \VC(\cF^{\etoe,T}) = \sum_{i \in [c]} \VC(G^{\etoe,T}_i)
    =
    c \cdot \tilde{r}(T)
    =
    c \cdot \ceil*{r(T)/c},
    \]
    and thus
    \[
    r(T) \leq \VC(\cF^{\etoe,T}) \leq c (r(T)/c + 1) = r(T) + r(1) \leq 2 r(T),
    \]
    as required. The rightmost inequality follows because $r$ is nondecreasing.
    
    So, it remains to prove \eqref{eq:cartes-eq}.
    
    Let $f_v^{\etoe,T} \in \cF^{\etoe,T}$, where $v \in G_1 \times \ldots \times G_c$. Thus, for any instance $x$ we have $f_v^{\etoe,T}(x) = v_i^{\etoe}(x)$ if $x \in \cX_i$ for some $i$, and otherwise $f_v^{\etoe,T}(x) = 0$. Therefore, $f_v^{\etoe,T} \in G_1^{\etoe,T} \uplus \ldots \uplus G_c^{\etoe,T}$ by definition, and so $\cF^{\etoe,T} \subset G_1^{\etoe,T} \uplus \ldots \uplus G_c^{\etoe,T}$.
    
    Now, let $g_{v} \in G_1^{\etoe,T} \uplus \ldots \uplus G_c^{\etoe,T}$. By definition, $v \in G_1^{\etoe,T} \times \ldots \times G_c^{\etoe,T}$, and for any $x$, we have $g_v(x) = v_i(x)$ if $x \in \cX_i$ and $g_v(x) = 0$ otherwise. By definition, for every $i$ there exists $h_i \in G_i$ such that $v_i = h_i^{\etoe,T}$. For $h:= (h_1, \ldots, h_c) \in G_1 \times \ldots \times G_c$, let $f_h$ be defined for any $x$ by $f_h(x) = h_i(x)$ if $x \in \cX_i$ for some $i$, and $f_h(x) = 0$ otherwise. By definition, $f_h \in \cF$ and thus $f_h^{\etoe,T} \in \cF^{\etoe,T}$. Therefore, if $f_h^{\etoe,T} = g_v$ then we are done. Let us show that indeed $f_h^{\etoe,T} = g_v$. Let $x \in \cX_i$ for some $i$, then
    \[
    f_h^{\etoe,T}(x) = h_i^{\etoe,T}(x) = v_i(x) = g_v(x).
    \]
    Otherwise, we have
    \[
    f_h^{\etoe,T}(x) = 0 = g_v(x).
    \]
    Therefore, we also have $G_1^{\etoe,T} \uplus \ldots \uplus G_c^{\etoe,T} \subset \cF^{\etoe,T}$.
\end{proof}


\subsection{No characterization of sublinear rates exists}
Recall the definition of a characterization of sublinear rates.
\begin{definition}[Characterizing sublinear rates] \label{def:sublinear-rates}
Let $\Dim$ be any function that assigns to each base class $\cF$ either a natural number or $\infty$. We say that $\Dim$ \emph{characterizes sublinear rates} if there exists a function \(M:\mathbb{N}\times\mathbb{N}\to\mathbb{R}_{\ge 0}\) such that for every base class $\cF$, the following hold:
\begin{enumerate}
    \item $\Dim(\cF)<\infty$ if and only if the function $T\mapsto \VC(\cF^{\etoe,T})$ is sublinear in $T$;
    \item if $\Dim(\cF)<\infty$, then for every $T\ge 1$,
    \[
    \VC(\cF^{\etoe,T}) \le M(\Dim(\cF),T);
    \]
    \item for every fixed $d\in\mathbb{N}$, the function $T\mapsto M(d,T)$ is sublinear in $T$.
\end{enumerate}
\end{definition}

Below, we formally prove that no such dimension exists.
Knowing Lemma~\ref{lem:e2e-taxonomy}, this is not very surprising: Lemma~\ref{lem:e2e-taxonomy} establishes a very rich and dense landscape of possible rates. In light of Definition~\ref{def:sublinear-rates}, for any bound $M(\Dim(\cF), T)$ we may hope to hold via some dimension, we can choose a rate $r$ that contradicts this bound, while still being sublinear. Below, we formally establish this idea via a diagonalization argument.

\begin{theorem}\label{thm:no-dimension-sublinear-proof}
There is no dimension that characterizes sublinear rates according to Definition~\ref{def:sublinear-rates}.
\end{theorem}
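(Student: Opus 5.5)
We argue by contradiction, following the diagonalization sketched in the technical overview. Suppose $\Dim$ characterizes sublinear rates with witnessing function $M$. We will construct a single monotone-subadditive rate $r$ that is sublinear but not eventually dominated by any $M(d,\cdot)$. Theorem~\ref{thm:e2e-taxonomy-large-vc} then provides a class $\cF_r$ with $r(T)\le \VC(\cF_r^{\etoe,T})\le 2r(T)$. Since this growth is sublinear, condition~1 forces $\Dim(\cF_r)=d<\infty$. Condition~2 then gives $r(T)\le M(d,T)$ for all $T$, and this will contradict the construction.

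\textbf{Step 1: reduce to a countable family of sublinear bounds.} Let $g_d(T):=\max_{d'\le d} M(d',T)$. Each $g_d$ is sublinear as a finite maximum of sublinear functions. The family is pointwise nondecreasing in $d$, and $g_d\ge M(d,\cdot)$. Sublinearity of $g_d$ means that for every $k$ there is $T_{d,k}$ with $g_d(T)\le T/k$ for all $T\ge T_{d,k}$.

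\textbf{Step 2: build the diagonal rate.} We choose a rapidly increasing sequence $T_1<T_2<\cdots$ and define $r$ piecewise. Think of a nondecreasing function of the form $r(T)=\max(1,\min_j \ell_j(T))$ with concave pieces. Concretely, we take a nondecreasing sequence of slopes $1/k\to 0$ and let $r$ be the concave piecewise-linear function, rounded up, that has slope $1/k$ on $[T_k,T_{k+1}]$. A concave nondecreasing function $\phi$ with $\phi(0)\ge 0$ is subadditive, and $\lceil\phi\rceil$ remains monotone and subadditive by $\lceil x+y\rceil\le\lceil x\rceil+\lceil y\rceil$, as used in the proof of Theorem~\ref{thm:e2e-taxonomy-large-vc}. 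Since the slopes tend to $0$, $r(T)=o(T)$.

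We now have to beat each $g_d$ at infinitely many points, or at least at one point each. We choose $T_{k+1}$ large enough that two things hold. First, $r(T_{k+1})\ge T_{k+1}/(2k)$; this is achievable because the contribution $(T_{k+1}-T_k)/k$ dominates once $T_{k+1}\gg T_k$. Second, $T_{k+1}\ge T_{k,4k}$, which ensures $g_k(T_{k+1})\le T_{k+1}/(4k)$. Then $r(T_{k+1})>g_k(T_{k+1})\ge g_d(T_{k+1})\ge M(d,T_{k+1})$ for all $d\le k$.

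\textbf{Step 3: conclude.} The rate $r$ is monotone-subadditive, sublinear, and takes values in $\mathbb{N}_+$. Apply Theorem~\ref{thm:e2e-taxonomy-large-vc} to obtain $\cF_r$. Its end-to-end VC growth is between $r$ and $2r$, hence sublinear, so $\Dim(\cF_r)=d<\infty$ and $\VC(\cF_r^{\etoe,T})\le M(d,T)$. Taking $T=T_{k+1}$ with $k\ge d$ yields $M(d,T)<r(T)\le \VC(\cF_r^{\etoe,T})$, a contradiction.

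The main obstacle is Step 2: we must simultaneously keep $r$ subadditive, nondecreasing, integer-valued and sublinear, while making it large at the chosen checkpoints. The concave-piecewise-linear construction with decreasing slopes, together with ceiling rounding, is what achieves this.
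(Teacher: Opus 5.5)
Your proof is correct and follows essentially the paper's own diagonalization. The paper builds a piecewise-linear rate whose ratio $\tilde r(T)/T$ is nonincreasing, rounds it up, and forces it above $M(d,\cdot)$ at rapidly growing checkpoints. It then realizes this rate via the exact $r(1)=1$ result, Lemma~\ref{lem:e2e-taxonomy}, rather than the factor-$2$ theorem you cite.

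Your cumulative maximum $g_d=\max_{d'\le d}M(d',\cdot)$ is a small improvement, since it also covers the case $\Dim(\cF_r)=0$, which the paper's thresholds $N_d$ (chosen only for $d\ge 1$) leave unhandled. One typo: ``nondecreasing sequence of slopes'' should read ``nonincreasing''.
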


\begin{proof}
The proof is via a standard diagonalization argument. We assume towards contradiction that there exists a dimension $\Dim$ that characterizes sublinear rates with the function $M(d,T)$. 
To prove the theorem, we will construct a rate $r$ such that
\begin{enumerate}
    \item $r$ is sublinear.
    \item Lemma~\ref{lem:e2e-taxonomy} can be applied to $r$.
    \item For every $d$, $r(T_d) \gg M(d,T_d)$ for some $T_d \in \mathbb{N}$.
\end{enumerate}
Having such a rate $r$, we are done: Let $\cF$ be a base class guaranteed by Lemma~\ref{lem:e2e-taxonomy} to realize the rate $r$. Since $r$ is sublinear, we have $\Dim(\cF) = d$ for some $d \in \mathbb{N}$. Thus, it should hold that $\VC(\cF^{\etoe,{T_d}}) \leq M(d,{T_d})$. However, by Lemma~\ref{lem:e2e-taxonomy} it should also hold that $\VC(\cF^{\etoe,{T_d}}) \approx r(T_d) \gg M(d,T_d)$, which is a contradiction. It remains to construct $r$, and show the argument sketched above formally, with the constructed rate $r$.

We first construct $r$. We start by constructing a sequence $\{T_d\}_{d \geq 0}$ of values for $T$ for which the bound for $M(d,T_d)$ will be contradicted. Since the mapping $T \mapsto M(d,T)$ is sublinear, for every $d \geq 1$ there exists $N_d \geq 1$ such that for all $T \geq N_d$ we have $\frac{M(d,T)}{T} < 1/2^{d+ 3}$. Define $T_0 = 1$, and for every $d > 0$, define
\[
T_d := \max\{4 T_{d-1}, N_d\}.
\]
We can now define the rate $r$. We first define a rate $\tilde{r}: \mathbb{N}_+ \to \mathbb{R}_+$, and then we define $r$ by $r(T) = \ceil*{\tilde{r}(T)}$ for all $T$. First, let $\tilde{r}(T_0) = \tilde{r}(1) = 1$. For all $d \geq 1$, define $\tilde{r}(T_d) = T_d/2^d$. Now, for any $d \geq 0$, define $\tilde{r}$ in the range  $[T_d, T_{d+1}]$ by a linear interpolation between the points $(T_d, \tilde{r}(T_d)), (T_{d+1}, \tilde{r}(T_{d+1}))$. We first show that $\tilde{r}$ is non-decreasing and subadditive, starting with $\tilde{r}$ being non-decreasing. For all $d > 0$, we have $T_{d} \geq 4 T_{d-1}$, so
    \[
    \tilde{r}(T_d) = T_d/2^d
    \geq
    \frac{4 T_{d-1}}{2^d}
    =
    \frac{4 T_{d-1}}{2 \cdot 2^{d-1}}
    =
    2 \tilde{r}(T_{d-1})
    >
    \tilde{r}(T_{d-1}).
    \]
    Furthermore, in the range $[T_{d-1}, T_d]$, $\tilde{r}$ is defined by a linear interpolation so $\tilde{r}$ is non-decreasing.
    We now show that $\tilde{r}$ is subadditive. For all $d$, we have
    \[
    \frac{\tilde{r}(T_d)}{T_d} = \frac{T_d/2^d}{T_d} = 1/2^d.
    \]
    Using again the fact that in the range $[T_{d-1}, T_d]$, $\tilde{r}$ is defined by a linear interpolation, we get that the function $\tilde{r}(T)/T$ is non-increasing. This implies that $\tilde{r}(T)$ is subadditive. Indeed, let $T_1, T_2 \in \mathbb{N}$ so that $T_1 \geq T_2$. Then:
    \[
    \tilde{r}(T_1 + T_2)
    \leq
    (T_1 +T_2) \tilde{r}(T_1)/T_1
    =
    \tilde{r}(T_1) + T_2  \tilde{r}(T_1) / T_1
    \leq
    \tilde{r}(T_1) + T_2  \tilde{r}(T_2) / T_2
    =
    \tilde{r}(T_1) + \tilde{r}(T_2),
    \]
    We now claim that $r := \ceil*{\tilde{r}}$ is also non-decreasing and subadditive. It is clear that $r$ is non-decreasing. To see that it is subadditive, observe that
    \[
    r(T_1 +T_2)
    =
    \ceil*{\tilde{r}(T_1+T_2)}
    \leq
    \ceil*{\tilde{r}(T_1) + \tilde{r}(T_2)}
    \leq
    \ceil*{\tilde{r}(T_1)} + \ceil*{\tilde{r}(T_2)}.
    \]
    We may now deduce the claimed result. Let $\cF$ be the base class guaranteed by Lemma~\ref{lem:e2e-taxonomy} to realize $r$. Then $\Dim(\cF) = d$ for some $d \in \mathbb{N}$. Therefore:
    \begin{equation} \label{eq:no-dimension-upper}
        \VC(\cF^{\etoe,{T_d}}) \leq M(d,{T_d}) < \frac{T_d}{2^{d+3}}.
    \end{equation}
    On the other hand, by Lemma~\ref{lem:e2e-taxonomy} we have
    \begin{equation}\label{eq:no-dimension-lower}
        \VC(\cF^{\etoe,{T_d}})
        =
         r(T_d)
        =
        \ceil*{\frac{T_d}{2^d}}
        \geq
        \frac{T_d}{2^d}.
    \end{equation}
    Combining \eqref{eq:no-dimension-upper} and \eqref{eq:no-dimension-lower}, we obtain
    \[
    \frac{T_d}{2^{d}} < \frac{T_d}{2^{d+3}},
    \]
    which is clearly a contradiction. This concludes the proof.
\end{proof}

\subsection{A sufficient condition for sublinear sample complexity} \label{sec:e2e-sufficient} \label{sec:art}

\cite{joshi2025theory} proved that while for some base VC-classes $\cF$ it holds that $\VC(\cF^{\etoe,T}) = \Omega(T)$, all base classes $\cF$ satisfy $\VC(\cF^{\etoe,T}) = O( \LD(\cF) \log T)$. That is, unlike finite VC-dimension, finite Littlestone dimension ensures that the growth of $\VC(\cF^{\etoe,T})$ with respect to $T$ is at most logarithmic. Naturally, they asked if $\VC(\cF^{\etoe,T}) = o(T)$ can be ensured using a notion which is more relaxed than the Littlestone dimension. We prove that the \emph{autoregressive tree dimension} of $\cF$, denoted as $\ARL(\cF)$, provides a positive answer to their question. Let us define $\ARL(\cF)$.

\begin{definition}[Autoregressive tree dimension] \label{def:art}
Let $\cF \subseteq \{0,1\}^{\{0,1\}^\star}$. The \emph{autoregressive tree dimension} of $\cF$, denoted $\ATdim(\cF)$, is the largest integer $d$ for which there exists $x \in \{0,1\}^\star$ and a generation length $T\ge 1$ such that $\tree_T(x)$ contains a perfect leveled subtree of depth $d$ which is shattered by $\cF$. If such subtrees exist for arbitrarily large $d$, we define $\ATdim(\cF)=\infty$.
\end{definition}

The following theorem establishes that finiteness of $\ARL (\cF)$ for any VC-class $\cF$ guarantees that $\VC(\cF^{\etoe,T})$ grows at most logarithmically in $T$:

\begin{theorem} \label{thm:arl-upper-bound}
    Let $\cF$ be a class with $\max\{ \VC(\cF), \ARL(\cF) \} < \infty$.
    Then for all $T \geq 2$:
    \[
    \VC(\cF^{\etoe,T}) = O_{\cF}(\log T).
    \]
\end{theorem}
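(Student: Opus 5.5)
Following the overview, I would split the proof into a Sauer-type counting lemma for leveled subtrees and a shattering-versus-counting comparison in the style of \cite{joshi2025theory}. Let $k:=\ARL(\cF)$ and $d:=\VC(\cF)$.

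\textbf{Step 1 (combinatorial lemma).} I would show that if a rooted binary tree of depth $T$ has no perfect leveled subtree of depth $k+1$, then it has at most $\sum_{i\le k}\binom{T}{i}\le (T+1)^k$ leaves. I would argue by double induction on $(T,k)$, writing $L(T,k)$ for the maximal number of leaves.

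At the root, if only one child exists, then $L(T,k)\le L(T-1,k)$. If both children exist, consider the two subtrees $\tree_0,\tree_1$ of depth $T-1$. The key claim is that they cannot both contain perfect leveled subtrees of depth $k$ whose levels occupy the \emph{same} set of depths. If they did, joining them at the root would give a perfect leveled subtree of depth $k+1$: the same-level condition holds across the two sides because the embedded levels coincide.

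A cleaner device is to define, for a tree, the family $\mathcal{L}$ of level sets $\{\ell_1<\dots<\ell_j\}\subseteq[T]$ at which a perfect leveled subtree is realized, each $\ell_i$ being a branching depth. The analogue of Pajor's/SSP lemma should then give: number of leaves $\le |\mathcal{L}|$ (shattering-type count), while $\mathcal{L}$ contains only sets of size $\le k$, so $|\mathcal{L}|\le\sum_{i\le k}\binom{T}{i}$. The proof of leaves $\le |\mathcal{L}|$ is by induction. Here $\mathcal{L}(\tree)\supseteq \mathcal{L}(\tree_0)\cup\mathcal{L}(\tree_1)$, shifted by one level, and additionally contains $\{0\}\cup A$ for each $A\in\mathcal{L}(\tree_0)\cap\mathcal{L}(\tree_1)$. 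By inclusion–exclusion this gives $|\mathcal{L}(\tree)|\ge|\mathcal{L}(\tree_0)|+|\mathcal{L}(\tree_1)|\ge$ leaves. I expect this step to be the main obstacle: making the level-matching bookkeeping exact, including that $\emptyset\in\mathcal{L}$ and that pruned branches of different lengths are handled, for example by treating leaves at depth $T$ only.

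\textbf{Step 2 (counting generations).} Suppose $x_1,\dots,x_m$ is shattered by $\cF^{\etoe,T}$. For each $i$, the realized tree $G_{\cF,x_i,T}$ has no leveled perfect shattered subtree of depth $k+1$ by definition of $\ARL$. Hence it has at most $(T+1)^k$ branches, so $\cF$ realizes at most $(T+1)^k$ distinct chains on $x_i$. The internal nodes of the realized tree of $x_i$ number at most $T(T+1)^k$ strings.

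Let $X$ be the union over $i$ of these strings, so $|X|\le mT(T+1)^k$. The tuple $(f^{\ct,T}(x_i))_i$ is determined by the restriction of $f$ to $X$, since every prefix encountered lies in the realized tree. By Theorem~\ref{thm:ssp} with binary labels, the number of such tuples is at most $(2e\,mT(T+1)^k)^{2d}$. (The product bound $(T+1)^{km}$ alone is too weak; it is the SSP bound that is used.)

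\textbf{Step 3 (conclude).} Shattering requires $2^m\le (2emT(T+1)^k)^{2d}$, so $m\le 2d\log_2(2em)+2d(k+1)\log_2(T+1)$. Solving this standard inequality yields $m=O(d(k+1)\log T)$ for $T\ge2$, which gives $\VC(\cF^{\etoe,T})=O_\cF(\log T)$.
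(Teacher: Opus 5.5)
Your proof is correct. Steps 2--3 are essentially the paper's argument (Lemma~\ref{lem:arl-upper-bound}): restrict $\cF$ to the nodes of the realized generation trees of the shattered prompts, observe that the end-to-end labeling is determined by this restriction, and compare $2^m$ with the Sauer--Shelah--Perles bound on a set of size $m\cdot\mathrm{poly}(T)$.

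Where you differ is the proof of the Sauer-type bound $\binom{T}{\le k}$ for trees with no perfect leveled subtree of depth $k+1$. The paper (Lemma~\ref{lem:ul-embedded-ssp}) inducts on $T$ by peeling off the \emph{last} level. It writes the number of leaves as $|\cB(\tree_1)|+|\cB(\tree_2)|$, where $\tree_1$ is $\tree$ truncated at depth $T-1$ and $\tree_2\subseteq\tree_1$ keeps only the branches ending at nodes that branch in $\tree$. A depth-$k$ leveled subtree of $\tree_2$ extends to a depth-$(k+1)$ one in $\tree$, because all the new branchings sit at the common depth $T-1$; Pascal's rule then closes the induction.

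You split at the root instead. There a plain double induction on $(T,k)$ does not close, since $\tree_0$ and $\tree_1$ may both contain depth-$k$ leveled subtrees with different level sets, which you implicitly recognized. Your Pajor-style device of tracking the family $\mathcal{L}$ of realized branching-depth sets resolves this. It proves the stronger statement that the number of leaves is at most $|\mathcal{L}(\tree)|$, and the binomial bound follows because every set in $\mathcal{L}$ has size at most $k$.

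The paper's route is shorter, because the last level is automatically ``leveled.'' Yours needs a small normalization: the leaves of an embedded perfect tree can be moved to a canonical depth, so that two embeddings with the same branching-depth set in $\tree_0$ and $\tree_1$ glue into a leveled subtree of $\tree$. In exchange it yields a more informative intermediate statement, mirroring the Pajor versus Sauer proofs of the classical lemma.

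One cosmetic point: your final inequality gives $m=O(d\log d+d(k+1)\log T)$, not $O(d(k+1)\log T)$ uniformly. This is still $O_{\cF}(\log T)$ for $T\ge 2$, and the paper likewise handles small $T$ separately.
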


To give a positive answer to the question of \cite{joshi2025theory}, we need to establish that requiring $\max\{ \VC(\cF), \ARL(\cF) \} < \infty$ is indeed a strictly weaker requirement than $\LD(\cF) < \infty$. This is proved in the following proposition. 

\begin{proposition} \label{prop:e2e-relaxed}
    For any class $\cF$, we have
    \[
    \max\{\VC(\cF), \ARL(\cF)\} \leq \LD(\cF).
    \]
    Furthermore, there exists a class $\cF$ such that:
    \begin{enumerate}
        \item $\VC(\cF) = \ARL(\cF) = 1$.
        \item $\LD(\cF) = \infty$.
    \end{enumerate}
\end{proposition}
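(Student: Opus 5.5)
The inequality has two parts. For $\VC(\cF)\le\LD(\cF)$ I would use the definitions directly: a VC-shattered set is a perfect shattered tree in which all branches carry the same instance sequence, so its depth is at most $\LD(\cF)$.

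For $\ARL(\cF)\le\LD(\cF)$, suppose $\tree_T(x)$ contains a perfect leveled subtree of depth $d$ shattered by $\cF$, via an embedding $\iota$. I would build a perfect Littlestone tree of depth $d$ by labeling each internal node $u$ with the instance $\iota(u)$, a string $x y_1\cdots y_j$. The left and right children of $u$ map into the left and right subtrees of $\iota(u)$, that is, under $\iota(u)0$ and $\iota(u)1$. Shattering of the leveled subtree means that along each branch some $f$ realizes the edge labels of the generation tree. Choosing the left (right) child therefore corresponds to $f(\iota(u))=0$ (resp.\ $1$). So every root-to-leaf pattern of $d$ bits is realized by some $f$ on the instances $\iota(u)$ along that path, and the tree is Littlestone-shattered. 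The point to check is that the prescribed labels on the leveled nodes are exactly the first edges leaving $\iota(u)$ toward its children's images. This follows from the left/right descendant condition in the definition of a leveled subtree.

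For the separating example I would take a class of thresholds, which has $\VC=1$ and $\LD=\infty$. The difficulty is that thresholds on $\{0,1\}^\star$ ordered lexicographically might still produce leveled subtrees, so I would use a domain where the autoregressive process creates no branching.

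\textbf{Candidate.} Identify each string $x$ with $|x|\in\mathbb N$ and let $f_t(x)=\mathbf 1[|x|\ge t]$ for $t\in\mathbb N$.
\begin{enumerate}
\item Since $f_t$ depends only on $|x|$, this is the class of thresholds on $\mathbb N$. Hence $\VC=1$ and $\LD=\infty$, the latter by the standard binary-search tree on the integers.
\item On a prompt $x$, every trajectory has the form $0^a1^{T-a}$. So the realized generation tree is a caterpillar: at each depth only the all-zero node has two realized children. A leveled perfect subtree of depth $2$ needs two incomparable nodes on one level, each with two realized children. This is impossible, while depth $1$ is attained. Hence $\ARL=1$.
\end{enumerate}
I would verify this against the shattering-based definition: a node in the leveled subtree has two children only if both continuations are realized, and only the all-zero prefix branches.

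The main obstacle is the first part. One must make sure that the leveled-subtree embedding, which may skip levels, still yields a valid Littlestone tree. The labels along skipped edges are forced by the path in $\tree_T(x)$, but the Littlestone tree only queries the branching nodes $\iota(u)$. Realizability therefore transfers with the same functions $f$, since each $f$ realizes the whole path in $\tree_T(x)$ and in particular the edges leaving the nodes $\iota(u)$.
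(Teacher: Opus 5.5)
Your proof is correct. For the inequality you follow the paper, which simply notes that any tree witnessing $\VC(\cF)$ or $\ATdim(\cF)$ is already a Littlestone tree. You also make explicit the step the paper leaves implicit. The left/right-descendant condition places the images of the children of $u$ under $\iota(u)0$ and $\iota(u)1$. Hence any $f$ realizing the generation path to a leaf of the leveled subtree also realizes the corresponding Littlestone branch on the instances $\iota(u)$.

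For the separating example you use a genuinely different class. The paper labels the nodes of an infinite perfect binary tree by the strings $0^i$ in breadth-first order. Its class $\cF$ has one function $f_B$ per branch $B$, agreeing with $B$ and vanishing elsewhere. There $\LD(\cF)=\infty$ holds by construction, and $\VC(\cF)\le 1$ needs a small case analysis on comparable versus incomparable nodes. $\ATdim(\cF)\le 1$ follows because every string containing a $1$ is labeled $0$ by the whole class, so the right child of the root of any leveled subtree cannot branch.

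Your length thresholds $f_t(x)=\mathbf{1}[|x|\ge t]$ instead inherit $\VC=1$ and $\LD=\infty$ from standard facts about thresholds on $\mathbb{N}$. Because $f_t$ ignores the generated bits, the trajectories are exactly $0^a1^{T-a}$. The realized generation tree is therefore an explicit caterpillar whose branching nodes form a chain.

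Both examples rest on the same mechanism: once the generation commits to one side of an embedded node, the class becomes trivial on that side. Yours is more elementary and fully explicit. It is also robust to a weaker reading of `shattered', in which only the labels at the embedded nodes are constrained. There, $f_t(\iota(\mathrm{root}))=1$ forces $t\le|\iota(\mathrm{root})|$ and hence $f_t=1$ on every longer string, so the right--left leaf is never realized.
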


We note that $\max\{ \VC(\cF), \ARL(\cF) \} < \infty$ is only a sufficient condition for sublinear  growth of $\VC(\cF^{\etoe,T})$. Indeed, as shown in Lemma~\ref{lem:e2e-taxonomy}, many growth rates that are both super-logarithmic and sublinear in $T$ are attainable, and yet Theorem~\ref{thm:arl-upper-bound} implies that those classes have infinite autoregressive tree dimension.


In the following three sections, we prove Theorem~\ref{thm:arl-upper-bound} and Proposition~\ref{prop:e2e-relaxed}.

\subsubsection{Proof of Theorem~\ref{thm:arl-upper-bound}}.

For simplicity of calculations, we assume that $\ARL(\cF) \geq 1$.
To prove Theorem~\ref{thm:arl-upper-bound}, we first need to prove a Sauer-Shelah-Perles lemma (see Theorem~\ref{thm:ssp}) style bound for leveled subtrees. Note that a leveled subtree is a more restrictive notion of the standard model-theoretic definition of an \emph{embedded} tree. An embedded tree has the same definition, except that for a classic embedded tree, it is not required that the injective map defining the embedded tree preserves level relations. We will refer to standard embedded trees simply as \emph{subtrees}, in contrast with leveled subtrees, that are required to  preserve level relations. For standard embedded trees, there is a known Sauer-Shelah-Perles lemma (see e.g.\ \cite{walker2022tree}). We prove a slightly stronger version that gives the same bound for leveled subtrees. For a tree $\tree$ and a class $\cF$, let $\cB_\cF(\tree)$ be the set of branches in $\tree$ which are realized by $\cF$. Note that a set of branches is itself a subtree of the original tree.

\begin{lemma} \label{lem:ul-embedded-ssp}
    Let $d\in \mathbb{N}$, and let $\tree$ be a tree of depth $T$, such that the maximal depth of a perfect leveled subtree in $\tree$ is $d$. Then the number of leaves in $\tree$ is at most $\binom{T}{\leq d}$.
\end{lemma}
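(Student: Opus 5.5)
We prove the lemma by induction on the pair $(T,d)$, following the classical Sauer--Shelah--Perles recursion $\binom{T}{\le d}=\binom{T-1}{\le d}+\binom{T-1}{\le d-1}$. Throughout, $\tree$ is a rooted binary tree, not necessarily perfect, in which every leaf lies at depth at most $T$. We may take all leaves to be at depth exactly $T$; the case of interest, the tree of realized branches, has this form. The bound we prove is really monotone in $d$: if the maximal depth of a perfect leveled subtree is \emph{at most} $d$, then there are at most $\binom{T}{\le d}$ leaves. This is the formulation we induct on.

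The base cases are straightforward. If $T=0$, there is one leaf and $\binom{0}{\le d}=1$. If $d=0$, no vertex at depth less than $T$ can have two children. Otherwise, descending from each child to some leaf would give two leaves at a common level $T$, and together with that vertex this forms a perfect leveled subtree of depth $1$. Hence $\tree$ is a path and has one leaf.

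For the inductive step, we do not split at the root. Instead, we look at the \emph{last level}. Let $L$ be the set of vertices at depth $T-1$, and let $L_2\subseteq L$ be those with two children. Let $\tree'$ be the tree truncated at depth $T-1$. Its leaves are exactly $L$, and $\tree'$ has no perfect leveled subtree deeper than $d$, so by induction $|L|\le\binom{T-1}{\le d}$. The number of leaves of $\tree$ is $|L|+|L_2|$.

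Now let $\tree''$ be the subtree of $\tree'$ spanned by the root-to-$v$ paths for $v\in L_2$; its leaves are exactly $L_2$. We claim that $\tree''$ has no perfect leveled subtree of depth $d$. Suppose $\iota$ embeds a perfect depth-$d$ tree into $\tree''$ in a leveled way. All of its leaves map to a single level $\ell\le T-1$ of $\tree''$. Extend each image leaf downward to some vertex of $L_2$ below it; this is possible because every vertex of $\tree''$ lies above some element of $L_2$. Replacing each image leaf by this vertex of $L_2$ keeps the left/right descendant relations, and all these new leaves sit at the common level $T-1$. Then attach, below each of them, its two children at level $T$. This produces a perfect leveled subtree of depth $d+1$ in $\tree$, a contradiction. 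Applying the induction hypothesis for $(T-1,d-1)$ gives $|L_2|\le\binom{T-1}{\le d-1}$. Summing yields the claimed bound.

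The main obstacle is the embedding-extension step just described. We must check two things. First, moving the leaves of the embedded tree down to a common deeper level preserves injectivity, the left/right descendant relations, and the requirement that same-level vertices map to same-level vertices. This holds because all images at the old leaf level move uniformly to level $T-1$, and internal images are untouched. Second, we must check that the new leaves are distinct and incomparable, which holds because the original leaves were incomparable. This last-level decomposition is what makes the leveled condition (rather than merely the embedded condition) sufficient. The level-preservation requirement is exactly what the uniform extension to level $T$ respects.
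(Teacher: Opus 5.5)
Your proof is correct and is essentially the paper's argument: you truncate the last level (the paper's $\tree_1$) and restrict to the root paths of depth-$(T-1)$ vertices with two children (the paper's $\tree_2$), lift a depth-$d$ leveled subtree of the latter to depth $d+1$ by pushing its leaves down to level $T-1$ and appending both children, and close with Pascal's identity. Your ``at most $d$'' formulation with explicit base cases $T=0$ and $d=0$ handles more cleanly the bookkeeping that the paper's induction leaves implicit, since the paper states the hypothesis with ``exactly $d$'' and restricts to $T>d$.
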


\begin{proof}
    If $d \geq T$, then $\binom{T}{\leq d} = 2^T \leq 2^d$, and then the bound is trivial. So, suppose that $d < T$.
    We prove the claim for all $T > d \geq 0$ by induction on $T$. For $T = 0$, it is impossible that $d < T$, so the base case holds.

    We now prove the induction step.
    Suppose without loss of generality that all leaves of $\tree$ are at depth $T$. Let $\tree_1$ be the tree obtained from truncating level $T$ of $\tree$. There are two types of leaves of $\tree_1$: leaves that in $\tree$ have two children, and leaves that in $\tree$ have one child. Let $\tree_2$ be the subtree of $\tree_1$ obtained by removing all branches of $\tree_1$ whose leaves have one child in $\tree$. First, note that
    \begin{equation} \label{eq:sum_of_branches}
        |\cB(\tree)| = |\cB(\tree_1)| + |\cB(\tree_2)|.
    \end{equation}
    We now bound each term in the RHS of \eqref{eq:sum_of_branches}.
    By the induction hypothesis, we have
    \begin{equation} \label{eq:first_subtree_bound}
        |\cB(\tree_1)| \leq \binom{T-1}{\leq d}.
    \end{equation}
    
    We now claim that $\tree_2$ does not contain a perfect leveled subtree of depth $d$. Let us assume that it does. Now, take a leaf of this perfect leveled subtree. Starting from the node in $\tree_2$ corresponding to this leaf, we can move along a branch of $\tree_2$ to depth $T-1$, reaching a leaf of $\tree_2$. By the definition of $\tree_2$, this leaf has two children in $\tree$, which can be added to the leveled subtree, and this tree is still a subtree of $\tree$. We can do the same for all leaves of the leveled subtree, and construct a leveled subtree of $\tree$, since all nodes we add lie at the same depth $T$ in $\tree$. We got a perfect leveled subtree of $\tree$ of depth $d+1$, which is a contradiction. Therefore,  $\tree_2$ does not contain a perfect leveled subtree of depth $d$. By induction hypothesis:
     \begin{equation} \label{eq:second_subtree_bound}
        |\cB(\tree_2)| \leq \binom{T-1}{\leq d-1}.
    \end{equation}
    Using \eqref{eq:sum_of_branches}, \eqref{eq:first_subtree_bound}, \eqref{eq:second_subtree_bound} and Pascal's inequality, we obtain
    \[
    |\cB(\tree)| \leq \binom{T-1}{\leq d} + \binom{T-1}{\leq d-1} = \binom{T}{ \leq d},
    \]
    as required.
\end{proof}

\begin{lemma} \label{lem:autoregressive-ssp}
    Let $\cF$ be a class. Then for any $x\in \cX$ and any $T \in \mathbb{N}$:
    \[
    |\cB_\cF(\tree_T(x))| \leq T^{2\ARL(\cF)}.
    \]
\end{lemma}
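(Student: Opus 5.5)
The plan is to apply Lemma~\ref{lem:ul-embedded-ssp} to the subtree of $\tree_T(x)$ formed by the realized branches $\cB_\cF(\tree_T(x))$, and then convert the resulting binomial sum into a power of $T$. Write $d:=\ARL(\cF)$; if $d=\infty$ there is nothing to prove, so assume $d<\infty$. As earlier in the section we also assume $d\ge 1$.

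\textbf{Step 1: the realized subtree.} Let $\tree'$ be the subtree of $\tree_T(x)$ consisting of all prefixes of branches in $\cB_\cF(\tree_T(x))$. It has depth at most $T$, and its leaves are exactly the realized branches. In particular, every leaf of $\tree'$ lies at depth $T$ and is realized by some $f\in\cF$.

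\textbf{Step 2: relating $\tree'$ to $\ARL(\cF)$.} I would argue that any perfect leveled subtree of $\tree'$ is a perfect leveled subtree of $\tree_T(x)$ that is shattered by $\cF$, in the sense of Definition~\ref{def:art}. This is the step needing care. Take any leaf $\ell$ of the embedded perfect tree. Its image in $\tree'$ extends to some realized branch $b$ of $\tree_T(x)$, so some $f\in\cF$ has computation path $b$. The path from the root to $\ell$ in the embedded tree corresponds to a sequence of turns taken along $b$. The left/right-descendant clause of the leveled subtree definition says exactly that the edge labels at the embedded internal nodes match the turns of $b$, so $f$ realizes the labeled path to $\ell$. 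Hence every leaf has nonempty version space, and the embedded tree is shattered.

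The level-preservation clause is used only to match the exact form of Definition~\ref{def:art}. The point is that the instance at each embedded node is the generated prefix itself, namely $x$ followed by the bits so far, so realization along the embedded path is inherited from realization of the whole branch. Consequently the maximal depth $d'$ of a perfect leveled subtree of $\tree'$ satisfies $d'\le d$.

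\textbf{Step 3: counting.} Lemma~\ref{lem:ul-embedded-ssp} gives
\[
|\cB_\cF(\tree_T(x))| \le \binom{T}{\le d'} \le \binom{T}{\le d}.
\]
It then remains to show $\binom{T}{\le d}\le T^{2d}$ for $d\ge 1$.
\begin{itemize}
\item For $T\ge 2$, use $\binom{T}{\le d}\le \sum_{i=0}^d T^i \le 2T^d \le T^{2d}$. The last inequality holds since $T^d\ge 2$.
\item The cases $T\in\{0,1\}$ are checked directly. When $T=1$, the tree has at most two branches, and $d\ge 1$ is the relevant regime. When $T=0$ there is a single branch, which is the degenerate case.
\end{itemize}
I expect Step 2, the correct identification of ``shattered'' leveled subtrees with leveled subtrees of the realized tree, to be the only real obstacle. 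The counting is routine.
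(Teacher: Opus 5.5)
Your route is the same as the paper's: restrict to the realized subtree, bound its leveled depth by $\ARL(\cF)$, apply Lemma~\ref{lem:ul-embedded-ssp}, and bound $\binom{T}{\le d}$ by $T^{2d}$. Your Step~2 is correct and more careful than the paper's ``by definition''; you rightly need, and prove, only the inequality $d'\le d$. Step~3 is also fine for $T\ge 2$.

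The genuine problem is your last bullet. The cases $T\in\{0,1\}$ cannot be ``checked directly'', because the inequality is false there. For $d\ge 1$ the right-hand side is $1^{2d}=1$ at $T=1$ and $0^{2d}=0$ at $T=0$. So ``at most two branches'' and ``a single branch'' are counterexamples, not verifications. Concretely, take $\cF$ to consist of the two constant functions.
\begin{itemize}
\item $\ARL(\cF)\ge 1$: the children of $x$ in $\tree_1(x)$ form a shattered depth-$1$ leveled subtree.
\item $\ARL(\cF)\le 1$: no depth-$2$ leveled subtree is shattered, since a constant function cannot label an ancestor $0$ and a descendant $1$.
\item Yet $|\cB_\cF(\tree_1(x))|=2>1=1^{2\ARL(\cF)}$.
\end{itemize}
More generally, whenever $\ARL(\cF)\ge 1$, the root of a shattered depth-$1$ leveled subtree is a string $w$ receiving both labels from $\cF$. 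Hence $|\cB_\cF(\tree_1(w))|=2$.

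So the statement itself holds only for $T\ge 2$ (or when $\ARL(\cF)=0$). The paper's one-line proof has the same blind spot, since $\binom{T}{\le d}\le T^{2d}$ fails at $T=1$. This is harmless for the application in Lemma~\ref{lem:arl-upper-bound}, which only uses large $T$. You should either restrict your claim to $T\ge 2$, or replace $T^{2d}$ by $(T+1)^{d}$. The latter follows from Lemma~\ref{lem:ul-embedded-ssp} for every $T\ge 0$ via the injection sending a subset $\{a_1<\dots<a_k\}\subseteq[T]$ with $k\le d$ to $(a_1,\dots,a_k,0,\dots,0)\in\{0,\dots,T\}^d$. It changes nothing in the downstream logarithmic bound.
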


\begin{proof}
    Let $x\in \cX$ and $T \in \mathbb{N}$. By definition of $\ARL(\cF)$, the maximal depth of a perfect leveled subtree of $\cB_\cF(\tree_T(x))$ is $\ARL(\cF)$. By Lemma~\ref{lem:ul-embedded-ssp},
    \[
    |\cB_\cF(\tree_T(x))| \leq {\binom{T}{ \leq \ARL(\cF)}} \leq T^{2\ARL(\cF)},
    \]
    as claimed.
\end{proof}


We may now prove Theorem~\ref{thm:arl-upper-bound}.

\begin{lemma} \label{lem:arl-upper-bound}
    Let $\cF$ be a class. If $T \geq 20 (\ARL(\cF) \VC(\cF))$  then:
    \[
    \VC(\cF^{\etoe,T}) \leq  20 (\ARL(\cF) \VC(\cF)) \log T.
    \]
    In the complementing case, $\VC(\cF^{\etoe,T}) = \Tilde{O}(\ARL(\cF) \VC(\cF))$.
\end{lemma}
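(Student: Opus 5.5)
We follow the counting argument outlined in the technical overview. Write $k:=\ARL(\cF)$ and $d:=\VC(\cF)$. Let $S=\{x_1,\dots,x_m\}$ be shattered by $\cF^{\etoe,T}$. Every one of the $2^m$ labelings of $S$ is induced by the end-to-end outputs. Hence the tuple of chains of thought $(f^{\ct,T}(x_1),\dots,f^{\ct,T}(x_m))$ takes at least $2^m$ distinct values as $f$ ranges over $\cF$.

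The first step bounds the number of these tuples. For each $i$, the realized branches $\cB_\cF(\tree_T(x_i))$ form a subtree of the generation tree. The vertices of this subtree are the prefixes actually visited by some $f\in\cF$ on $x_i$. By Lemma~\ref{lem:autoregressive-ssp}, it has at most $T^{2k}$ leaves, and hence at most $T\cdot T^{2k}\le T^{2k+1}$ internal vertices. Let $X$ be the union over $i$ of the internal vertices of these realized subtrees, so that $|X|\le mT^{2k+1}$.

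The chain of thought $f^{\ct,T}(x_i)$ is determined by the restriction $f|_X$. We follow the branch from the root of $\tree_T(x_i)$, querying $f$ only at vertices that $f$ itself visits, and every such vertex lies in the realized subtree. Therefore the number of distinct tuples is at most $|\cF(X)|$. By the Sauer--Shelah--Perles bound (Theorem~\ref{thm:ssp}, binary case), this is at most $(2e\,mT^{2k+1})^{2d}$.

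Combining the two counts gives
\[
m \le 2d\log_2\!\bigl(2e\,m\,T^{2k+1}\bigr) = 2d\log_2(2e m) + 2d(2k+1)\log_2 T .
\]
The second step solves this inequality for $m$. Using $k\ge1$ (assumed in the text), the last term is at most $6dk\log_2 T$. The term $2d\log_2(2em)$ is handled by the standard fact that $m\le a\log m + b$ implies $m\le 2b + O(a\log a)$.

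Now assume $T\ge 20kd$. Then $\log T \ge \log(20kd)$, so the $O(d\log d)$ contribution is absorbed into a multiple of $dk\log T$. Concretely, suppose for contradiction that $m>20kd\log T$. Since $x\mapsto x-2d\log_2(2ex)$ is increasing for $x\ge 2d$, it suffices to check the inequality at $m_0=20kd\log T$:
\[
2d\log_2\!\bigl(2e\cdot 20kd\log T\bigr) + 6dk\log_2 T < 20kd\log T .
\]
This holds because $20kd\le T$ gives $\log(20kd\log T)\le 2\log T$, so the left side is at most $O(d)+4d\log T+6dk\log T$. This is below $20dk\log T$ for $T\ge 2$, with the base-of-logarithm constants adjusted accordingly.

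For the complementary case $T<20kd$, the same inequality yields $m=O(dk\log(dk))=\tilde O(kd)$.

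The main obstacle is the rigorous justification that $f^{\ct,T}(x_i)$ depends only on $f|_X$. This is needed because the realized subtree is defined through $\cF$ itself, so the argument is slightly circular and must be checked. An induction along the generation works: at step $j$, the current prefix is a vertex on the branch $f$ realizes, and hence lies in $X$. The other delicate part is the constant bookkeeping in the final inequality.
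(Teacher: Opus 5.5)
Your proposal is correct and follows the paper's proof: bound the realized branches on each prompt via Lemma~\ref{lem:autoregressive-ssp}, apply SSP to the union of realized prefixes, and compare the resulting count with $2^m$. Your version is somewhat more careful than the paper's in two places. You justify by induction that the chains of thought depend only on $f|_X$, and you check the final arithmetic explicitly, which the paper only asserts.

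One small slip: $x\mapsto x-2d\log_2(2ex)$ is increasing only for $x\ge 2d/\ln 2$, not for $x\ge 2d$. This is harmless, since $m_0=20kd\log T$ lies far above that threshold.
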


\begin{proof}
    The proof follows the same lines of the proof of \cite[Theorem B.1]{joshi2025theory}, with a crucial modification: we may use the bound of Lemma~\ref{lem:autoregressive-ssp} instead of the trivial bound $|\cB_\cF(\tree_T(x))| \leq 2^T$ used in \cite{joshi2025theory}, which is avoidable when $\ARL(\cF) < \infty$. We will now formally prove the lemma.

    Let $S$ be a set that is shattered by $\cF^{\etoe,T}$, and suppose that $|S| = m \geq 2$. Therefore,
    \begin{equation} \label{eq:shattered-lower-bound}
        \Gamma_{\cF^{\etoe,T}}(m) = 2^m.
    \end{equation}
    We now upper bound $\Gamma_{\cF^{\etoe,T}}(m)$ in terms of $\Gamma_{\cF}$. First, for every instance $x\in \cX$, let
    \[
    V_{\cF}(x,T) = \{ x'\in v(\tree_T(x)): p(x') \in \cS_\cF(\infty) \}.
    \]
    In words, $V_{\cF}(x,T)$ is the set of all instances $x'$ labeling the nodes of $\tree_T(x)$, such that the path from $x$ to $x'$ is realized by $\cF$. Now, define
    \[
    S_{\cF}(T) = \bigcup_{x \in S} V_{\cF}(x,T).
    \]
    In words, $S_{\cF}(T)$ is the extension of $S$ such that for every $x \in S$, we add all instances of $V_{\cF}(x,T)$ to $S_{\cF}(T)$. Now note that there exists an injective mapping $M$ from all labelings of $S$ by $\cF^{\etoe}$, denoted  $\cF^{\etoe,T}(S)$,  to all labelings of $S_{\cF}(T)$ by $\cF$, denoted $\cF(S_{\cF}(T))$. Indeed, let $\boldsymbol{\ell_T} \in \cF^{\etoe,T}(S)$, and let $f^{\etoe,T} \in \cF^{\etoe,T}$ that realizes $\boldsymbol{\ell_T}$. Then, the mapping $M$ sends $\boldsymbol{\ell_T}$ to $\boldsymbol{\ell} \in \cF(S_{\cF}(T))$, such that $\boldsymbol{\ell}$ is realized by the function $f\in \cF$ that defines $f^{\etoe,T}$ by the autoregressive process. Now, note that if $\boldsymbol{\ell_T},\boldsymbol{\ell_T}' \in \cF^{\etoe,T}(S)$ are both mapped to the same $\boldsymbol{\ell} \in \cF(S_{\cF}(T))$, then $\boldsymbol{\ell_T} = \boldsymbol{\ell_T}'$. So, we conclude that $M$ is injective and thus
    \[
    |\cF(S_{\cF}(T))| \geq |\cF^{\etoe,T}(S)|.
    \]
    Now, by Lemma~\ref{lem:autoregressive-ssp} we have that $|\cB_\cF(\tree_T(x))| \leq T^{2\ARL(\cF)}$. Since every branch of $\tree_T(x)$ contains $T+1$ nodes, we conclude that
    \[
    |V_{\cF}(x,T)| \leq (T+1)|\cB_\cF(\tree_T(x))| \leq (T+1) T^{2\ARL(\cF)} \leq T^{3\ARL(\cF)}
    \]
    for all $x$, and thus $|S_{\cF}(T)| \leq m T^{3\ARL(\cF)}$. So, we get 
    \begin{equation} \label{eq:growthS-relatesto-growthS_T}
        \Gamma_{\cF^{\etoe,T}}(m) \leq \Gamma_{\cF}(m T^{3\ARL(\cF)}).
    \end{equation}
    Now, by the SSP lemma (Theorem~\ref{thm:ssp}):

    \begin{equation}\label{eq:upperbound-by-ssp}
        \Gamma_{\cF}(m T^{3\ARL(\cF)}) \leq \mleft ( mT^{3\ARL(\cF)} \mright)^{2\VC(\cF)}.
    \end{equation}
    Combining \eqref{eq:shattered-lower-bound}, \eqref{eq:growthS-relatesto-growthS_T}, and \eqref{eq:upperbound-by-ssp}, we get:
    \[
    2^m \leq (mT)^{6 \ARL(\cF) \VC(\cF)}.
    \]
    Any $m \geq 20 \ARL(\cF) \VC(\cF) \log T$ contradicts the inequality above when $T \geq 20 \ARL(\cF) \VC(\cF)$. In the complementing case, the inequality above is incorrect already for $m = \tilde{\Omega}(\ARL(\cF) \VC(\cF))$. This concludes the proof.
\end{proof}

\begin{proof}[Proof of Theorem~\ref{thm:arl-upper-bound}]
    The theorem is immediately implied from Lemma~\ref{lem:arl-upper-bound}.
\end{proof}

\subsubsection{Proof of Proposition~\ref{prop:e2e-relaxed}}
\begin{proof}[Proof of Proposition~\ref{prop:e2e-relaxed}]
    The first part of the proposition is immediate, as any candidate tree for realizing $\VC(\cF), \ARL(\cF)$ is also a Littlestone tree.

    For the ``furthermore" part, we first construct the following infinite tree $\tree$.
    We define the following natural order on the nodes: for two nodes $u,v$: $u > v$ if and only if $u$ is below $v$, or $u$ and $v$ are in the same layer, but $u$ is to the right of $v$. If the index of $v$ in this order is $i$, then we label it by $0^i$.
    Now, for every branch $B$ in this tree we define $f_B$ to be the function that agrees with the branch, and is constant $0$ on any instance outside of the branch. We define $\cF$:
    \[
    \cF := \{f_B : B\in \tree\}.
    \]
    It is immediate that $\LD(\cF) = \infty$.
    We now show that $\VC(\cF)=  \ARL(\cF)= 1$.
    For the lower bound, note that the set $\{0\}$ is shattered, and the generation tree $\tree_1(x)$ is shattered as well.
    It remains to show $\VC(\cF) \leq 1$ and $\ARL(\cF) \leq 1$.
    We begin with bounding $\VC(\cF)$.
    Let $S = \{x_1,x_2\}$ and suppose that it is shattered. Therefore, $x_1 = 0^i, x_2=0^j$ for some $i \neq j$, otherwise all functions give $0$ to both. So both $x_1,x_2$ label nodes of $\tree$. If there is no branch B such that $x_1,x_2$ label nodes in $B$, then by definition of $\tree$, any function giving $1$ to one of them must give $0$ to the other, and hence the all $1$ labeling of $S$ is not realized. Otherwise, suppose that $x_1,x_2$ lie on the same branch $B$, then the possible labelings of $S$ are:
    \[
    (f_B(x_1), f_B(x_2)), \quad \text{and} \quad (0,0)
    \]
    by definition of $\cF$. Therefore, $S$ is not shattered, and $\VC(\cF) \leq 1$.

    It remains to show $\ARL(\cF) \leq 1$. Let $x$ be an instance and let $T \geq 2$, and let $\tree$ be a perfect subtree of depth $2$ of $\tree_T(x)$. By definition of $\tree_T(x)$, every node in the right subtree of the root has $1$ in the instance labeling it, so $\cF$ is constant $0$ on all instances labeling the vertices of the right subtree of the root. Therefore, $\tree$ cannot be shattered by $\cF$, and so  $\ARL(\cF) \leq 1$.
\end{proof}

\section{Learning with Chain-of-Thought Supervision} \label{sec:cot-proofs}

The main goal of this section is to prove sample complexity upper bounds for $\ct$-learning a base class $\cF$. In contrast with $\etoe$-learning, where we have shown in Section~\ref{sec:e2e-proofs} that a dependence on the generation length $T$ is in many cases inevitable, the  bounds we prove for $\ct$-learning are given only in terms of parameters of the base class. This improves over the bounds of \cite{joshi2025theory}, that demonstrated a logarithmic dependence on $T$.

Recall that as discussed in Section~\ref{sec:prel}, fully $\ct$-learning is at least as difficult as $\ct$-learning. Therefore, to upper bound the sample complexity of $\ct$-learning, it suffices to prove sample complexity bounds for fully $\ct$-learning, which amounts to learning the class $\cF^{\ct,T}$. This is the exact approach we take in this section: we use tools from learning theory for classes with $|\cY| > 2$ and use them to learn the class $\cF^{\ct,T}$. We also establish a lower bound for fully $\ct$-learning, to characterize the limitations of this approach.

\subsection{The CoT-learning taxonomy for VC-classes} \label{sec:cot-taxonomy}

\begin{theorem} \label{thm:cot-bound-vc}
    For every class $\cF$, if $\VC(\cF) < \infty$ then for all $T$, $\cF^{\ct,T}$ is learnable with sample complexity 
    \[
    m(\epsilon, \delta) = \tilde{O} \mleft( \frac{\VC(\cF) \VC^\star(\cF) + \log (1/\delta)}{\epsilon} \mright).
    \]
    On the other hand, if $\VC(\cF) = \infty$, then for all $T$, $\cF^{\ct,T}$ is not learnable.
\end{theorem}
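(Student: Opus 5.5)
The plan has two parts. The positive part is a sample compression scheme for $\cF^{\ct,T}$ whose size does not depend on $T$, combined with Theorem~\ref{thm:compression-scheme}. The negative part is a growth-function argument combined with the Natarajan characterization of multiclass learnability for finite label sets.

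\textbf{Upper bound.} Let $d=\VC(\cF)$ and $d^\star=\VC^\star(\cF)$. Given a realizable CoT sample $S=((x_j,y^{(j)}))_{j=1}^m$, I form the inflated binary sample
\[
S^{\mathrm{inf}}:=\{(x_j\,y^{(j)}_{\le i-1},\,y^{(j)}_i): j\in[m],\ i\in[T]\},
\]
which has at most $mT$ points and is realizable by $f_\star\in\cF$. The first observation is a simple induction along the chain. If a predictor $h:\{0,1\}^\star\to\{0,1\}$ is correct on every point of $S^{\mathrm{inf}}$, then $h^{\ct,T}(x_j)=y^{(j)}$ for all $j$. This holds even if $h\notin\cF$. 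It therefore suffices to reconstruct such an $h$ from few \emph{original} CoT examples and little side information.

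I follow the boosting-based compression of Moran--Yehudayoff, with a modified weak learner.
\begin{enumerate}
\item \emph{Weak learner.} Fix any distribution $Q$ on $S^{\mathrm{inf}}$. By Theorem~\ref{thm:vc-approximation}, applied to the loss class of $\cF$ under $Q$ with accuracy $1/3$, there is a multiset $X$ of $O(d)$ inflated points such that every $g\in\cF$ consistent with $X$ has $Q$-error at most $1/3$. I take the $O(d)$ original CoT examples from which these points arise, and output an ERM for $\cF$ on \emph{all} of their inflated points. This ERM is consistent with a superset of $X$, so it is still a weak learner. It is encoded by $O(d)$ original examples.
\item \emph{Boosting and sparsification.} Run boosting, or equivalently use the minimax theorem on the finite game between $S^{\mathrm{inf}}$ and these weak hypotheses. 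This yields a distribution over weak hypotheses whose weighted vote is correct on every point of $S^{\mathrm{inf}}$ with margin $\Omega(1)$. The weak hypotheses lie in $\cF$, which has dual VC dimension $d^\star$. Applying Theorem~\ref{thm:vc-approximation} to the dual class then sparsifies this to a plain majority of $O(d^\star)$ weak hypotheses that is still correct on all of $S^{\mathrm{inf}}$.
\end{enumerate}

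The kernel is the union of the $O(d^\star)$ groups of $O(d)$ original examples, so $O(dd^\star)$ examples in total. The side information indicates how the kernel splits into groups, at a cost of $O(dd^\star\log(dd^\star))$ bits. The reconstruction map recomputes each ERM, takes the majority vote $h$, and outputs $h^{\ct,T}$. Theorem~\ref{thm:compression-scheme} then gives the stated $\tilde O\bigl((dd^\star+\log(1/\delta))/\epsilon\bigr)$ bound.

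The main obstacle I expect is step 1. The weak learner must be certified for $Q$ by inflated points, yet be encodable by original examples. The fix is that adding compatible inflated examples never breaks the consistency guarantee of Theorem~\ref{thm:vc-approximation}, which is why I insist on a \emph{consistent} ERM over all chains involved.

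\textbf{Lower bound.} Suppose $\VC(\cF)=\infty$ and fix $T$. For every $n$, take a set $S$ of size $n$ shattered by $\cF$. The first coordinate of $f^{\ct,T}(x)$ is $f(x)$. Hence $\cF^{\ct,T}$ realizes at least $2^n$ distinct patterns on $S$, since the first-token projections alone already give $2^n$. The label set here is $\{0,1\}^T$, of size $2^T$. By Theorem~\ref{thm:ssp},
\[
2^n\le (en2^T)^{2\Nat(\cF^{\ct,T})},
\]
so $\Nat(\cF^{\ct,T})\ge n/(2\log(en2^T))\to\infty$ as $n\to\infty$. Since the label set is finite, the Natarajan characterization~\cite{natarajan1989learning} shows that $\cF^{\ct,T}$ is not learnable.
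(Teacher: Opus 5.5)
Your proposal is correct and follows essentially the same route as the paper. For the upper bound, the paper likewise inflates the CoT sample, uses as weak learners the ERMs $h_A=\ERM_\cF(U(A))$ on the full inflation of $O(\VC(\cF))$ original examples, applies the minimax theorem and then a dual $\epsilon$-approximation to obtain a majority of $O(\VC^\star(\cF))$ of them, and concludes with Theorem~\ref{thm:compression-scheme}. For the lower bound, the paper first proves the slightly more general statement that learnability of $\cF_T$ implies learnability of its first-coordinate projection $\cF_T[1]$, using the same multiclass SSP comparison, and then specializes to $\cF^{\ct,T}[1]=\cF$; this is exactly your contrapositive argument.
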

Theorem~\ref{thm:cot-bound-vc} states that $\VC(\cF) < \infty$ is a sufficient condition for having finite and independent of $T$ sample complexity for learning $\cF^{\ct,T}$. As in the case $\VC(\cF) < \infty$ the sample complexity bound of fully $\ct$-learning is this strong, one may wonder if  fully $\ct$-learning is useful even when $\VC(\cF) = \infty$, at least for some classes, perhaps with some sample complexity dependence on $T$. Unfortunately, the second part of Theorem~\ref{thm:cot-bound-vc} shows that this is not the case: infinite VC-dimension implies that fully $\ct$-learning is impossible.

The known bound $\VC^\star(\cF) \leq 2^{\VC(\cF) + 1}$ due to \cite{assouad1983densite} leads to an upper bound in terms of $\VC(\cF)$ only.

\begin{corollary}
    For every class $\cF$ with $\VC(\cF) < \infty$ and for all $T$, $\cF^{\ct,T}$ is learnable with sample complexity 
    \[
    m(\epsilon, \delta) = O \mleft( \frac{2^{\VC(\cF)} + \log (1/\delta)}{\epsilon} \mright).
    \]
\end{corollary}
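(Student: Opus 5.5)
The plan is to derive the corollary from Theorem~\ref{thm:cot-bound-vc} together with Assouad's bound $\VC^\star(\cF)\le 2^{\VC(\cF)+1}$. Write $d:=\VC(\cF)<\infty$. The first part of Theorem~\ref{thm:cot-bound-vc} gives a learner for $\cF^{\ct,T}$, uniformly in $T$, with sample complexity $\tilde O\bigl((d\,\VC^\star(\cF)+\log(1/\delta))/\epsilon\bigr)$. Substituting $\VC^\star(\cF)\le 2^{d+1}$ bounds the leading term by $d\,2^{d+1}$. Since fully $\ct$-learning dominates $\ct$-learning, this learner also serves for the supervision regime of Theorem~\ref{thm:cot-main-intro}.

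The next step is to reach the stated form $O\bigl((2^{d}+\log(1/\delta))/\epsilon\bigr)$, and this requires two simplifications.

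\emph{Dimension factors.} The quantity $d\,2^{d+1}$, together with any polylogarithmic factors in $d\,d^\star$ hidden by the $\tilde O$, is $2^{d+O(\log d)}$. I would fold this into the displayed $2^{\VC(\cF)}$ by reading the exponent up to a constant factor (equivalently, changing the base of the exponential). This is the only reading under which a literal $O(2^d)$ is consistent with the bound supplied by Theorem~\ref{thm:cot-bound-vc}.

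\emph{The $\log(1/\epsilon)$ factor.} This is the main obstacle. The compression-based proof behind Theorem~\ref{thm:cot-bound-vc} goes through Theorem~\ref{thm:compression-scheme}, which multiplies the compression size by $\log(1/\epsilon)$. My first attempt would mimic Proposition~\ref{prop:stable-compression-cot-intro}: lift a \emph{stable} compression scheme for $\cF$ through the inflation map $(x,y)\mapsto\{(x\,y_{\le i-1},y_i)\}_{i\in[T]}$, and then apply Theorem~\ref{thm:stable-compression-scheme}, which removes the $\log(1/\epsilon)$. The difficulty is producing a stable scheme of size $2^{O(d)}$ for an arbitrary VC class. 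For this I would try to make the majority-vote scheme of \cite{moran2016sample} stable by choosing its weak learners canonically, namely as ERMs on the lexicographically first sub-collections of the original examples. Because every weak learner is an ERM on inflated examples coming from a few original CoT examples, this choice should keep the reconstruction unchanged when examples outside the kernel are removed.

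If stability cannot be secured this way, I would settle for the bound $\tilde O\bigl((2^{O(d)}+\log(1/\delta))/\epsilon\bigr)$ obtained directly from Theorem~\ref{thm:cot-bound-vc}. In that case the $\log(1/\epsilon)$ factor remains, and the corollary would hold only with the $O$ read as hiding it, exactly as it is hidden by the $\tilde O$ in the theorem being specialized.
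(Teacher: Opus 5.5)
Your main line, Theorem~\ref{thm:cot-bound-vc} combined with Assouad's bound $\VC^\star(\cF)\le 2^{\VC(\cF)+1}$, is exactly the paper's one-line derivation. You are also right that the displayed $O(2^{\VC(\cF)})$ follows only under the loose reading in which the $O$ absorbs the extra factor $\VC(\cF)$, the polylogarithmic factors, and the $\log(1/\epsilon)$ from Theorem~\ref{thm:compression-scheme}. Drop the stable-compression detour, because it fails: the lexicographically first admissible collection $A_1,\dots,A_n$ is not stable, since deleting an example outside the kernel removes constraints and can make an earlier collection admissible, which changes $\kappa$. The paper also supplies no stable scheme of size $2^{O(\VC(\cF))}$ for general VC classes, so your fallback is the actual proof.
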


We now prove Theorem~\ref{thm:cot-bound-vc}, starting with the upper bound.

\subsubsection{Theorem~\ref{thm:cot-bound-vc}: upper bound proof}

We will prove the upper bound of Theorem~\ref{thm:cot-bound-vc} by constructing a \emph{sample compression scheme} of size $\tilde{O}(\VC(\cF) \VC^\star(\cF))$ for $\cF^{\ct- T}$. Having that, Theorem~\ref{thm:compression-scheme} immediately implies the upper bound of Theorem~\ref{thm:cot-bound-vc}.

Our sample compression scheme is quite similar to the compression scheme of \cite{moran2016sample}, but requires a few modifications. In particular, we use an ``inflation" of a sample of size $m$ labeled by  $f^{\ct,T} \in \cF^{\ct,T}$, to a sample of size $mT$ labeled by $f \in \cF$. Let us describe our scheme in detail.

Fix a realizable sample $S = ((x_i,y_i))_{i=1}^m$ to compress, where $x_i \in \Sigma^\star$ and $y_i \in \Sigma^T$, and denote $y_i = (y_{i,1}, \ldots, y_{i,T})$. For all $i\in[m], t\in[T]$ define the binary example
\[
u_{i,t} := ((x_i \circ y_{i, \leq t-1}), y_{i,t}),
\]
and let $U(S)$ be the \emph{inflated binary-labeled} sample of $S$:
\[
U(S):= \{u_{i,t}: i\in[m], t\in[T]\}.
\]
Note that clearly, $U(S)$ is realizable by $\cF$, since $S$ is realizable by $\cF^{\ct,T}$.
Likewise, for any subset $A\subset S$ let its inflation be:
\[
U(A) :=\{u_{i,t}: x_i\in A, t\in[T]\}.
\]
Similarly, for any subset $A' \subset U(S)$, we can reconstruct the original matching subset of $S$:
\[
U^{-1}(A') = \{(x_i,y_i) \in S: \exists t, u_{i,t} \in A' \}.
\]
Note that for all $A' \subset U(S)$, we have $|U^{-1}(A')| \leq |A'| $, since the definition of $U^{-1}(A')$ implies that there is an onto function from $A'$ to  $U^{-1}(A')$.
We now define a binary hypothesis for each $A \subset S$ which is based on an ERM (\emph{empirical risk minimizer}) for $\cF$. In our context, $\ERM_{\cF}:(\Sigma^\star \times \Sigma)^\star \to \Sigma^{\Sigma^\star}$ is a function that, for any sample $A'$ labeled by a function from $\cF$, returns an arbitrary $f\in \cF$ such that $f(x) = y$ for all $(x,y) \in A'$. For any $A \subset S$ define the hypothesis
\[
h_A := \ERM_{\cF}(U(A)).
\]
Note that since $h_A$ is consistent with $U(A)$, we have $h_A^{\ct,T}$ is consistent with $A$.
We now define the family of ``base hypotheses" used for the sample compression scheme as:
\[
\cH(S) := \cH = \{h_A : A \subset S, |A| \leq s\} \subset \cF,
\]
where $s = \Theta(\VC(\cF))$ is chosen just large enough for an $1/3$-approximation for $\cF$ to exist, which is possible due to Theorem~\ref{thm:vc-approximation}.
We now show that only $\Theta(\VC^\star(\cF))$ hypotheses from $\cH$ are in fact required to obtain the correct label of every $u_{i,t} \in U(S)$ by a majority vote predictor.
\begin{lemma} \label{lem:compressed-base-hypotheses}
    There exist $n = O(\VC^\star(\cF))$ subsets $A_1, \ldots, A_n \subset S$, each of size at most $s$, such that for every $u_{i,t} \in U(S)$:
    \[
    | j \in [n]: h_{A_j}(x_i \circ y_{i, \leq t-1}) = y_{i,t}| > n/2.
    \]
\end{lemma}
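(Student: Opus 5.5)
The plan is to follow the Moran–Yehudayoff compression argument, using von Neumann's minimax theorem to get a weak learner and then sparsifying via the dual VC dimension. The one change is that every weak hypothesis must be of the form $h_A$ with $A\subset S$ and $|A|\le s$.

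\textbf{Step 1: weak learning with hypotheses from $\cH$.} Let $p$ be any distribution on $U(S)$. We show that some $h\in\cH$ satisfies $\Pr_{u\sim p}[h \text{ errs on } u]<1/3$. Apply Theorem~\ref{thm:vc-approximation} to $p$, viewed as a distribution over the inputs of $U(S)$. Take the indicator class of disagreement with the target, $\{x\mapsto \mathbf 1[f(x)\neq f_\star(x)] : f\in\cF\}$, which has VC dimension at most $\VC(\cF)$. This gives a multiset $X'\subset U(S)$ of size $O(\VC(\cF))$ that is a $1/3$-approximation. Now let $A:=U^{-1}(X')\subset S$; it has $|A|\le|X'|\le s$. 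The key point is that $U(A)\supseteq X'$, and $h_A=\ERM_\cF(U(A))$ is consistent with $U(A)$, so it is in particular consistent with $X'$. Its empirical error on $X'$ is therefore $0$, and the approximation guarantee gives $p$-error at most $1/3$. This step is where the inflation structure matters. The ERM is run on the larger set $U(A)$ rather than on $X'$ itself, but consistency with a superset keeps it consistent with $X'$, and $h_A$ is encoded by at most $s$ original CoT examples.

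\textbf{Step 2: minimax.} Consider the zero-sum game in which the rows are the points of $U(S)$ (a finite set), the columns are the elements of $\cH$ (finite, since $S$ is finite), and the payoff is $\mathbf 1[h \text{ correct on } u]$. By Step 1, every mixed row strategy admits a pure column strategy with value above $2/3$. By the minimax theorem there is therefore a distribution $q$ on $\cH$ such that every $u_{i,t}\in U(S)$ is labeled correctly by at least a $2/3$ fraction of $q$.

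\textbf{Step 3: sparsification via the dual class.} Sample $h_{A_1},\ldots,h_{A_n}$ i.i.d.\ from $q$. For each $u=(x,y)$, the event ``$h$ is correct on $u$'' is $h(x)=y$, which is given by the dual function $h_x$ or by its complement. These events form a family over $\cH\subseteq\cF$ of VC dimension $O(\VC^\star(\cF))$. Apply Theorem~\ref{thm:vc-approximation} (or uniform convergence) to this family under $q$ with accuracy $1/10$. This shows that $n=O(\VC^\star(\cF))$ samples suffice for the empirical fraction of correct voters to be within $1/10$ of the $q$-fraction simultaneously for all $u$. Every $u_{i,t}$ then gets a correct-vote fraction of at least $2/3-1/10>1/2$, which is exactly the statement.

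The main obstacle is Step 1. There we must ensure that the weak hypothesis is a genuine ERM on an inflated union of few original examples, not on an arbitrary small subset of $U(S)$. The superset-consistency observation resolves this. The remaining steps are standard, with only bookkeeping needed on the dual-class VC dimension bound.
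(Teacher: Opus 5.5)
Your proposal is correct and follows the paper's proof step for step. First, a weak hypothesis $h_{U^{-1}(A')}$ is built from a $1/3$-approximation $A'\subset U(S)$; it is consistent with $A'$ because $U(U^{-1}(A'))\supseteq A'$, a fact the paper uses implicitly and you state explicitly. Then comes von Neumann's minimax theorem, and finally sparsification by an $\epsilon$-approximation for the dual class.

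The differences are cosmetic. You approximate via the disagreement class of $\cF$ rather than via $\cH$, and you use accuracy $1/10$ instead of $1/8$. Also, approximating with respect to $\cH^\star$ alone already suffices, since the vote fraction for label $0$ is the complement of that for label $1$.
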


\begin{proof}
    By definition of $\cH$ and Theorem~\ref{thm:vc-approximation}, for any distribution $q$ over $U(S)$, there exists $h \in \cH$ so that
    \[
    q \mleft( u_{i,t} \in U(S): h(x_i \circ y_{i, \leq t-1}) = y_{i,t} \mright) \geq 2/3.
    \]
    Indeed, as $\VC(\cH) \leq \VC(\cF)$, there is a set $A' \subset U(S)$ of size at most $s$ so that the empirical error of any base hypothesis $h \in \cH$ on $A'$ is at most $1/3$-far from its error on $q$. Since $h_{U^{-1}(A')} \in \cH$ has no errors on $A'$ and $|U^{-1}(A')| \leq |A'| \leq s$ , the above inequality is implied with $h:=h_{U^{-1}(A')}$. 
    Now, von Neumann's minimax theorem implies that there exists a distribution $p$ over $\cH$ such that for every $u_{i,t} \in U(S)$:
    \[
    p(h \in \cH: h(x_i \circ y_{i, \leq t-1}) = y_{i,t}) \geq 2/3.
    \]
    By Theorem~\ref{thm:vc-approximation} applied again, now for $\cH^\star$, there exist $A_1, \ldots, A_n \subset S$, each of size at most $s$, where $n = \Theta(\VC^\star(\cF))$, such that for all $u_{i,t} \in U(S)$:
    \begin{align*}
        \frac{|j\in [n]: h_{A_j}(x_i \circ y_{i, \leq t-1}) = y_{i,t} |}{n}
        & \geq
        p(h \in \cH: h(x_i \circ y_{i, \leq t-1}) = y_{i,t}) - 1/8 \\
        & \geq
        2/3 - 1/8 \\
        &> 0.54.
    \end{align*}
    This concludes the proof.
\end{proof}

We may now prove the following key lemma.

\begin{lemma} \label{lem:compression}
    There is a sample compression scheme for $\cF^{\ct,T}$ of size at most $\tilde{O}(\VC(\cF) \VC(\cF^\star))$.
\end{lemma}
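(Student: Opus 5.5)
The plan is to build the scheme directly from Lemma~\ref{lem:compressed-base-hypotheses}, in the style of \cite{moran2016sample}. Given a realizable sample $S$ for $\cF^{\ct,T}$, the compression map $\kappa$ first computes the subsets $A_1,\ldots,A_n\subset S$ provided by that lemma, with $n=O(\VC^\star(\cF))$ and $|A_j|\le s=\Theta(\VC(\cF))$. It then outputs the union $K:=\bigcup_j A_j$, which has at most $ns=O(\VC(\cF)\VC^\star(\cF))$ examples of $S$. As side information it records, for each $j$, which elements of $K$ form $A_j$, for instance as an ordered list of indices into $K$ under a canonical ordering. The number of possible side-information strings is at most $\binom{ns}{\le s}^{n}$, so $\log|I| = O(ns\log(ns))$. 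The total size $k+\log|I|$ is therefore $\tilde O(\VC(\cF)\VC^\star(\cF))$.

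The reconstruction map $\rho$ takes $K$ and the side information and recovers each $A_j$. The crucial point is that everything needed is available: $A_j$ consists of original CoT examples $(x_i,y_i)$ with full trajectories, so $U(A_j)$ is computable from $A_j$ alone. Hence $h_{A_j}=\ERM_\cF(U(A_j))$ is determined, provided we fix $\ERM_\cF$ as a deterministic rule (e.g.\ via a choice function on realizable samples), so that $\kappa$ and $\rho$ compute the same $h_{A_j}$. Define the binary next-token predictor $g:=\Maj(h_{A_1},\ldots,h_{A_n})$ and output $\rho(\kappa(S)):=g^{\ct,T}$.

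For correctness, fix $(x_i,y_i)\in S$ and show by induction on $t$ that $g^{\ct,T}(x_i)$ agrees with $y_i$ on its first $t$ coordinates. Suppose the generated prefix is $y_{i,\le t-1}$. The next token is then $g(x_i\circ y_{i,\le t-1})$, which equals $y_{i,t}$ because, by Lemma~\ref{lem:compressed-base-hypotheses}, more than $n/2$ of the $h_{A_j}$ predict $y_{i,t}$ on the inflated example $u_{i,t}$. Thus $g^{\ct,T}(x_i)=y_i$ for all $i$, so this is a valid compression scheme.

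The only delicate step is ensuring that the weak hypotheses are reconstructible from few \emph{original} examples, rather than from inflated ones. Lemma~\ref{lem:compressed-base-hypotheses} already handles this, since each $h_{A_j}$ is indexed by a subset of $S$. What remains is the bookkeeping that makes ERM deterministic and keeps the side-information count at $n\cdot s\log(ns)$ bits. Combined with Theorem~\ref{thm:compression-scheme}, this yields the upper bound in Theorem~\ref{thm:cot-bound-vc}.
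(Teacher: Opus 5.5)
Your proposal is correct and follows essentially the same route as the paper: the kernel is $\bigcup_j A_j$ from Lemma~\ref{lem:compressed-base-hypotheses}, with $\tilde O(sn)$ bits of side information to recover the $A_j$, and the reconstruction outputs the chain-of-thought map of the majority vote over the hypotheses $h_{A_j}=\ERM_{\cF}(U(A_j))$. You also spell out two details the paper leaves implicit, namely fixing $\ERM_{\cF}$ deterministically and the token-by-token induction showing $g^{\ct,T}(x_i)=y_i$.
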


\begin{proof}
    The compression map for a sample $S$ is given by
    \[
    \kappa(S) = \mleft( \bigcup \{A_j\}_{j=1}^n, i \mright)
    \]
    where $\{A_j\}_{i=1}^n$ are the subsets in the statement of Lemma~\ref{lem:compressed-base-hypotheses}, and $i \in I$ is a side information enabling to deduce $A_1, \ldots, A_n$ from the set $\bigcup \{A_j\}_{j=1}^n$. As explained in \cite{moran2016sample}, there exists $I$ with $\log |I| = \tilde{O}(s n)$. Therefore, the size of the compression scheme is $\tilde{O}(s n) = \tilde{O}(\VC(\cF) \VC^\star(\cF))$.

    It remains to explain the reconstruction of the labels of $S$ from $\mleft( \bigcup \{A_j\}_{j=1}^n, i \mright)$. We first use the side information to find $A_1, \ldots, A_n$. Now, define the function $h_{\Maj}$ as
    \[
    h_{\Maj}(x) = \Maj(h_{A_1}(x), \ldots,  h_{A_n}(x))
    \]
    where for any $v \in \{0,1\}^\star$, $\Maj(v)$ returns the symbol appearing in at least $n/2$ indices (break ties arbitrarily).
    By Lemma~\ref{lem:compressed-base-hypotheses}, $h_{\Maj}$ returns the correct label for every $u_{i,t} \in U(S)$. Thus, for any $(x_i, y_i) \in S$, we can reconstruct $y_i$ by reconstructing $\{u_{i,t}\}_{t \in [T]}$. So, we define the reconstruction map for $\mleft( \bigcup \{A_j\}_{j=1}^n, i \mright)$ as
    \[
    \rho\mleft( \bigcup \{A_j\}_{j=1}^n, i \mright) = h_\Maj^{\ct,T}.
    \]
    Since $h_{\Maj}$ returns the correct label for every $u_{i,t} \in U(S)$, $h_\Maj^{\ct,T}$ returns the correct label for every $x_i \in S$, which is the desired property of a sample compression scheme.
\end{proof}

\subsubsection{Theorem \ref{thm:cot-bound-vc}: Lower bound proof}

\begin{theorem} \label{thm:infty-vc-imnplies-infty-ds}
    For every base class $\cF$, if $\VC(\cF) = \infty$ then $\cF^{\ct,T}$ is not learnable for all $T$.
\end{theorem}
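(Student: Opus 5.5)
The plan is to reduce to the classical characterization of multiclass learnability with a finite label set. Here $\cY=\{0,1\}^T$ has size $2^T<\infty$, so by \cite{natarajan1989learning} the class $\cF^{\ct,T}$ is learnable if and only if $\Nat(\cF^{\ct,T})<\infty$. It therefore suffices to show that $\VC(\cF)=\infty$ implies $\Nat(\cF^{\ct,T})=\infty$ for every fixed $T$. Equivalently, for every $d$ we exhibit a Natarajan-shattered set of size roughly $d/2^{T}$, or better $d$ up to a $T$-dependent factor, which still tends to infinity.

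The key step is to pull a large VC-shattered set of $\cF$ back to prompts. Fix $T$ and a set $X=\{z_1,\dots,z_N\}$ shattered by $\cF$, with $N$ arbitrarily large. Each $z\in\Sigma^\star$ arises as a prefix inside some generation tree in many ways. The simplest one is to take the prompt $x=z$ itself, in which case the first CoT token is $f(z)$. Now pick two labelings $h_1,h_2$ of $X$ that differ everywhere, say $h_1\equiv 1$ and $h_2\equiv 0$. For each $U\subseteq X$, shattering gives some $f_U\in\cF$ with $f_U(z)=1$ iff $z\in U$. Then the first coordinate of $f_U^{\ct,T}(z)$ is $\mathbf 1[z\in U]$.

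The difficulty is that Natarajan shattering demands that the \emph{whole} string $f_U^{\ct,T}(z)$ equal one of two fixed strings $g_1(z)\neq g_2(z)$. The remaining $T-1$ tokens depend on the behaviour of $f_U$ on the extensions $z0,z1,\dots$, which shattering of $X$ does not control. I would handle this by pigeonhole over $U$. For each $z$, the CoT value under $f_U$ lies in $\{0,1\}^T$, split into strings starting with $0$ and strings starting with $1$. I would try a Ramsey/pigeonhole refinement: restrict to subsets $U$ of a smaller subset $X'\subseteq X$ on which the continuations are constant. Since the continuation patterns are only finitely many ($2^{T-1}$ per point per first bit), a product-Ramsey or density argument should find a large subset $X'$ and strings $g_1(z),g_2(z)$ such that every $U\subseteq X'$ is realized with exactly these strings.

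The cleaner route is to bound growth functions and avoid Ramsey. Projecting to the first coordinate gives $|\cF^{\ct,T}(X)|\ge 2^N$. Theorem~\ref{thm:ssp} then yields $2^N\le \Gamma_{\cF^{\ct,T}}(N)\le (eN2^T)^{2\Nat(\cF^{\ct,T})}$, so $\Nat(\cF^{\ct,T})\ge N/(2\log_2(eN2^T))$, which tends to infinity as $N\to\infty$ with $T$ fixed. I would follow this route, and the only real check is that the multiclass SSP bound of \cite{haussler1995generalization} applies with $|\cY|=2^T$, which it does.

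Hence $\Nat(\cF^{\ct,T})=\infty$ and $\cF^{\ct,T}$ is not learnable, by the Natarajan characterization for finite label sets. The main obstacle I anticipated, controlling the tail tokens, disappears in the counting argument, because extra coordinates can only increase the number of distinct patterns.
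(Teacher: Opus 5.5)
Your counting route is correct and is essentially the paper's proof: the paper also compares the growth function of the first-bit projection $\cF^{\ct,T}[1]=\cF$ with that of $\cF^{\ct,T}$, applies the multiclass SSP bound $(em2^T)^{2\Nat}$ against $2^m$ on a VC-shattered set, and concludes via Natarajan's characterization. The only difference is presentation: the paper argues contrapositively, showing that finite Natarajan dimension forces $\VC(\cF)=\tilde O(T\cdot\Nat(\cF^{\ct,T}))$, and, as you noticed yourself, the Ramsey/pigeonhole detour is not needed.
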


In order to prove this theorem, we will prove a stronger result from which Theorem~\ref{thm:infty-vc-imnplies-infty-ds} is immediately implied. For any function $f$ with range contained in $\{0,1\}^\star$, define the function $f[1]$ with the same domain as $f$ and range $\{0,1\}$ as follows. For every $x$ in the domain, let
\[
f[1](x) = (f(x))_1.
\]
That is, $f[1](x)$ is the first bit of $f(x)$. Likewise, for a function class $\cF$ over some domain $\cX$ and label set contained in $\{0,1\}^\star$, define
\[
\cF[1] := \{f[1] : f \in \cF\}.
\]
We prove the following
\begin{theorem} \label{thm:finite-ds-imply-finite-vc}
    Let $T \in \mathbb{N}$, and let $\cF_T: \cX \to \{0,1\}^T$ be a PAC-learnable function class. Then, $\cF_T[1]$ is also PAC-learnable.
\end{theorem}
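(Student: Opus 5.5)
Since $|\cY| = 2^T$ is finite, the natural route goes through Natarajan dimension, but I would argue directly with learners, which avoids any dimension bookkeeping. Let $\Lrn$ be a PAC learner for $\cF_T$ with sample complexity $m(\epsilon,\delta)$. Define a learner $\Lrn'$ for $\cF_T[1]$ that is given a sample labeled by some $g = f[1]$, $f\in\cF_T$. The difficulty is that $\Lrn'$ sees only the first bits, not the full labels $f(x_i)\in\{0,1\}^T$. I would therefore have $\Lrn'$ try all possible completions: for each of the $2^{(T-1)m}$ ways to extend the observed first bits to full $T$-bit labels, run $\Lrn$ and obtain a candidate predictor, then project it to its first bit. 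One of these completions is the true labeling by $f$, so one candidate is good with probability $\ge 1-\delta$. The issue is selecting it, and I would do this with a fresh validation sample.

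For the cleaner alternative, I would argue by contrapositive via VC and Natarajan dimension. Suppose $\VC(\cF_T[1]) = \infty$. For every $d$, take a set $\{x_1,\dots,x_d\}$ shattered by $\cF_T[1]$. For each $U\subseteq[d]$, choose $f_U\in\cF_T$ whose first bit equals $\mathbf 1[i\in U]$ on $x_i$. The restrictions of these $2^d$ functions to the $d$ points are pairwise distinct as $\{0,1\}^T$-valued patterns, so the growth function of $\cF_T$ at $d$ is at least $2^d$.

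By Theorem~\ref{thm:ssp}, $2^d \le (e d 2^T)^{2\Nat(\cF_T)}$. Since $T$ is fixed, if $\Nat(\cF_T)$ were finite this would fail for large $d$. Hence $\Nat(\cF_T)=\infty$, and by Natarajan's characterization for finite label sets (cited in the preliminaries), $\cF_T$ is not PAC learnable, a contradiction. So $\VC(\cF_T[1])<\infty$, and Theorem~\ref{thm:vc-pac-sample-complexity} gives PAC learnability of $\cF_T[1]$.

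The only step needing care is the appeal to the converse direction of the Natarajan characterization: learnability of $\cF_T$ with finite label set implies $\Nat(\cF_T)<\infty$. This is the classical lower bound, and I would take it as given, as the paper does. The rest is the counting above, which crucially uses that $T$ is fixed so that $2^T$ is a constant in the SSP bound. This also matches how Theorem~\ref{thm:infty-vc-imnplies-infty-ds} follows: $\cF^{\ct,T}[1]=\cF$ (the first CoT bit is $f(x)$), so learnability of $\cF^{\ct,T}$ would force $\VC(\cF)<\infty$.
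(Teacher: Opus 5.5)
Your contrapositive argument is correct and is essentially the paper's proof. The paper likewise uses that learnability of $\cF_T$ forces finite Natarajan dimension $d$, bounds $\Gamma_{\cF_T[1]}\le\Gamma_{\cF_T}$ via the first-bit projection, and applies the multiclass SSP bound to get $2^m\le(em2^T)^{2d}$, i.e.\ $\VC(\cF_T[1])=\tilde O(Td)$; the only differences are that you phrase this as a contradiction and make the closing appeal to Theorem~\ref{thm:vc-pac-sample-complexity} explicit, while the completion-and-validation sketch in your first paragraph is unnecessary and can be dropped.
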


\begin{proof}
    Since $\cF_T$ is PAC-learnable, it has a finite Natarajan dimension \cite{natarajan1989learning}, which we denote here by $d$.
    First, observe that for any $m \in \mathbb{N}$ and every sequence $\boldsymbol{x} \subset \cX^m$ we have 
    \begin{equation}
        |\cF_T[1](\boldsymbol{x})| \leq |\cF_T(\boldsymbol{x})|,
    \end{equation}
    since there is an onto mapping from $\cF_T(\boldsymbol{x})$ to $\cF_T[1](\boldsymbol{x})$: map a labeling from $\cF_T(\boldsymbol{x})$ to its first-bit projection, which is a labeling from $\cF_T[1](\boldsymbol{x})$. This mapping is onto by definition of $\cF_T[1]$.
    By definition of the growth function, this implies for every $m$:
    \begin{equation} \label{eq:growth-f-and-f-t}
        \Gamma_{\cF_T[1]}(m) \leq \Gamma_{\cF_T}(m).
    \end{equation}
    Now, let $S \subset \cX$ be a set shattered by $\cF_T[1]$ of size $m$.
    We will now lower bound $\Gamma_{\cF_T[1]}(m)$ and upper bound $\Gamma_{\cF}(m)$, which will imply a useful inequality.
    First, we have
    \begin{equation} \label{eq:growth-1-lower-bound}
        \Gamma_{\cF_T[1]}(m) = 2^m,
    \end{equation}
    since $S$ is of size $m$ and shattered by $\cF_T[1]$.
    Second, by the generalization of the SSP Lemma for multiclass learning (Theorem~\ref{thm:ssp}) we have:
    \begin{equation} \label{eq:growth-t-upper-bound}
        \Gamma_{\cF_T}(m) \leq (e m 2^T)^{2d}.
    \end{equation}
    Combining \eqref{eq:growth-f-and-f-t}, \eqref{eq:growth-1-lower-bound} and \eqref{eq:growth-t-upper-bound}, we obtain:
    \[
        2^m \leq (e m 2^T)^{2d}.
    \]
    Solving this for $m$ gives $m = \tilde{O}(T d)$.
\end{proof}

\begin{proof}[Proof of Theorem~\ref{thm:infty-vc-imnplies-infty-ds}]
    We apply Theorem~\ref{thm:finite-ds-imply-finite-vc} with $\cF_T = \cF^{\ct,T}$ and $\cF_T[1] = \cF$. So, learnability of $\cF^{\ct,T}$ implies learnability of $\cF$. By the contrapositive claim, if $\VC(\cF) = \infty$ then $\cF^{\ct,T}$ is not learnable.  
\end{proof}

\subsubsection{Proof of Theorem~\ref{thm:cot-bound-vc}}

\begin{proof}[Proof of Theorem~\ref{thm:cot-bound-vc}]
    The lower bound is given by Theorem~\ref{thm:infty-vc-imnplies-infty-ds}. The upper bound is implied as follows.
    By Lemma~\ref{lem:compression}, we have a sample compression scheme for $\cF^{\ct,T}$, of size $\tilde{O}(\VC(\cF) \VC^\star (\cF))$. Theorem~\ref{thm:compression-scheme} now implies the upper bound.
\end{proof}

\subsection{Upper bound for classes with stable sample compression schemes} \label{sec:cot-linear-classifiers}

In this section, we prove that for base classes with a stable sample compression scheme of size $k$, we can prove a sample complexity bound proportional to $k$ for fully $\ct$-learning, that is, for learning the class $\cF^{\ct,T}$. A main application of this approach is an improved bound for the base class of linear classifiers.  

Following \cite{joshi2025theory}, we define the class of linear classifiers in dimension $d$ over the domain $\Sigma^\star$ as follows:
\[
\cF_{d, \lin} := \{f_{\boldsymbol{w}, b}: \boldsymbol{w}\in \mathbb{R}^d, b\in \mathbb{R}\}
\]
where the function $f_{\boldsymbol{w}, b}$ is given as
\[
f_{\boldsymbol{w}, b}(x) = 1 \mleft[ \sum_{i=1}^{\min\{d, |\boldsymbol{x}|\}} \boldsymbol{w}[-i] x[-i] + b \geq 0 \mright]
\]
for all $x \in \Sigma^\star$. Note that we cannot use the standard definition of linear classifiers, since we must assume that the domain is $\Sigma^\star$ to make the autoregressive chain-of-thought generation feasible. Therefore, the function $f_{\boldsymbol{w}, b}$ takes into account only the last $d$ bits of the input $x \in \Sigma^\star$.

\begin{theorem} \label{thm:linear-classifiers-cot-bound}
    For all $d,T \in \mathbb{N}$, $\cF_{d, \lin}^{\ct,T}$ is learnable with sample complexity
    \[
    m(\epsilon, \delta) = O \mleft( \frac{d + \log (1/\delta)}{\eps} \mright).
    \]
\end{theorem}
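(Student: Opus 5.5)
We will derive the theorem from Theorem~\ref{thm:stable-compression-scheme}. The plan is to build a stable sample compression scheme of size $O(d)$ for $\cF_{d,\lin}^{\ct,T}$ by lifting a stable scheme for the base class $\cF_{d,\lin}$ through the inflation map $U(\cdot)$ used in the proof of Lemma~\ref{lem:compression}. The argument has two parts. First, we exhibit a stable compression scheme of size $O(d)$ for $\cF_{d,\lin}$ itself. Second, we show that any stable scheme of size $k$ for a base class $\cF$ induces a stable scheme of size $k$ for $\cF^{\ct,T}$, which also proves Proposition~\ref{prop:stable-compression-cot-intro}.

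For the base class, $f_{\boldsymbol w,b}$ depends only on $\mathrm{tail}_d(x)\in\{0,1\}^d$. Every realizable sample for $\cF_{d,\lin}$ therefore maps to a realizable sample of halfspaces in $\mathbb{R}^d$ with bias. The feature vectors take finitely many values, so a realizable sample is strictly separable with positive margin (after perturbing $b$). We use the maximum-margin (hard SVM) rule, normalized to handle the $\ge$ convention. Its output is unique and depends only on the support vectors. By Carathéodory-type arguments applied to the KKT conditions, one can always choose a set of at most $d+1$ support vectors that determines the solution. Removing a non-support point does not change the optimum, since the optimum of the convex program over the smaller feasible region stays feasible and optimal. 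This is the standard stable compression of \cite{bousquet2020proper}.

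One subtlety is that repeated sample points with equal features must be handled. We fix a canonical choice, for example the first occurrence under a fixed order of $\Sigma^\star$, so that $\kappa$ picks actual examples and stability holds. A cleaner route is to quote the fact from \cite{bousquet2020proper} that halfspaces in $\mathbb{R}^d$ admit a stable compression of size $d+1$.

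For the lifting step, let $(\kappa,\rho)$ be stable of size $k$ for $\cF$. Given a realizable $S$ for $\cF^{\ct,T}$, form $U(S)$ and let $K'=\kappa(U(S))$. Set $\kappa^{\ct}(S):=U^{-1}(K')$, which has size at most $k$, and define $\rho^{\ct}(A):=\bigl(\rho(\kappa(U(A)))\bigr)^{\ct,T}$. Correctness requires $\kappa(U(U^{-1}(K')))=K'$. This holds by stability, applied iteratively: $K'\subseteq U(U^{-1}(K'))\subseteq U(S)$, and removing points outside the kernel one at a time does not change $\kappa$. Hence the reconstructed base hypothesis $\rho(K')$ is consistent with all of $U(S)$, so its CoT extension labels every $x_i$ with $y_i$. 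Stability of $\kappa^{\ct}$ follows the same way. If $(x,y)\in S\setminus U^{-1}(K')$, then $U(S\setminus\{(x,y)\})$ is obtained from $U(S)$ by deleting only points outside $K'$, so $\kappa$ still returns $K'$ and $\kappa^{\ct}$ is unchanged. With $k=d+1$, Theorem~\ref{thm:stable-compression-scheme}, which holds for finite label sets $\Sigma^T$ as the paper notes, then gives $m(\epsilon,\delta)=O\bigl((d+\log(1/\delta))/\epsilon\bigr)$.

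The main obstacle is the base-class step: making the maximum-margin rule a genuine stable scheme in the sense defined here. Three issues arise. The rule must return examples rather than feature vectors, which is a problem when distinct strings share the same tail. Ties must be broken in a removal-invariant way. And the kernel must be bounded by $d+1$ uniformly. We will address all three by working with the multiset of distinct feature vectors and choosing representatives canonically, and otherwise rely on the cited result for halfspaces.
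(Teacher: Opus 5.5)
Your proposal is correct and follows the paper's proof essentially verbatim: the paper likewise takes a stable compression scheme of size $d+1$ for halfspaces (transferred to $\cF_{d,\lin}$ because each function depends only on the last $d$ bits), lifts it to $\cF_{d,\lin}^{\ct,T}$ through the inflation $U$ with $\kappa^T(S)=U^{-1}(\kappa(U(S)))$ and $\rho^T(A)=(\rho(\kappa(U(A))))^{\ct,T}$ (Theorem~\ref{thm:linear-stable-sample-compression}), and concludes with Theorem~\ref{thm:stable-compression-scheme}. Your extra care — iterating single-point stability to get invariance for every $U(A)$ with $K'\subseteq U(A)\subseteq U(S)$, and fixing canonical representatives and tie-breaking for the SVM kernel — only fills in details that the paper handles by citation.
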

This improves over the previous bound $m(\epsilon, \delta) = O \mleft( \frac{d^2 + \log (1/\delta)}{\eps} \mright)$ of \cite{joshi2025theory}.
As in the proof of Theorem~\ref{thm:cot-bound-vc}, sample compression schemes are useful also for proving Theorem~\ref{thm:linear-classifiers-cot-bound}. The main result we use is the following.
\begin{theorem} \label{thm:linear-stable-sample-compression}
    If $\cF$ has a stable sample compression scheme of size $k$, then for all $T$,  $\cF^{\ct,T}$ has a stable sample compression scheme of size $k$.
\end{theorem}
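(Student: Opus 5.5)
The plan is to lift the stable scheme for $\cF$ to $\cF^{\ct,T}$ through the inflation map $U(\cdot)$ introduced for Theorem~\ref{thm:cot-bound-vc}. Let $(\kappa,\rho)$ be the stable scheme for $\cF$ of size $k$, with no side information. For a realizable CoT sample $S=((x_i,y_i))_{i=1}^m$, the inflated sample $U(S)$ is realizable by $\cF$. Define
\[
\kappa^{\ct}(S) := U^{-1}\bigl(\kappa(U(S))\bigr),
\qquad
\rho^{\ct}(K) := \bigl(\rho(\kappa(U(K)))\bigr)^{\ct,T}.
\]
Since $U^{-1}$ does not increase cardinality, $|\kappa^{\ct}(S)|\le k$. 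For the reconstruction we write $g_S:=\rho(\kappa(U(S)))$, which is consistent with $U(S)$.

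The first and key step is to prove that $\rho(\kappa(U(K)))=g_S$ whenever $K=\kappa^{\ct}(S)$, and more generally whenever $\kappa^{\ct}(S)\subseteq K\subseteq S$. Set $C:=\kappa(U(S))$. Then $C\subseteq U(\kappa^{\ct}(S))\subseteq U(K)\subseteq U(S)$, because every inflated example in $C$ comes from some original example in $U^{-1}(C)$, and all $T$ inflations of that example lie in $U(K)$. Stability of $\kappa$ says that removing examples outside $\kappa(\cdot)$ does not change the compression. Iterating this one point at a time, we remove the points of $U(S)\setminus U(K)$, none of which lie in $C$, and obtain $\kappa(U(K))=C$. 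Hence $\rho(\kappa(U(K)))=\rho(C)=g_S$.

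The second step is correctness. Because $g_S$ agrees with every $u_{i,t}=((x_i\circ y_{i,\le t-1}),y_{i,t})$, induction on $t$ shows that the $T$-step generation of $g_S$ from $x_i$ reproduces $y_i$ exactly. Therefore $g_S^{\ct,T}$ is consistent with $S$.

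The third step is stability of $\kappa^{\ct}$. Take $(x_j,y_j)\in S\setminus\kappa^{\ct}(S)$. By definition of $U^{-1}$, none of the $u_{j,t}$ lie in $C$. Removing them one at a time keeps $\kappa$ equal to $C$, so $\kappa(U(S\setminus(x_j,y_j)))=C$. The inflated examples of the remaining points have unchanged identities, so $U^{-1}$ returns the same set, and $\kappa^{\ct}(S\setminus(x_j,y_j))=\kappa^{\ct}(S)$.

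The main obstacle I expect is bookkeeping rather than substance. We must make sure $U^{-1}(C)$ is a well-defined subsample, which requires tracking the index $i$ of each inflated point and handling duplicates; treating samples as indexed multisets handles this. We also need stability in the form ``removal of any set of non-kernel points'', which is obtained from the single-point definition by induction. With Theorem~\ref{thm:linear-stable-sample-compression} established, Theorem~\ref{thm:stable-compression-scheme} (valid for finite label sets) gives the desired sample-complexity bound. For Theorem~\ref{thm:linear-classifiers-cot-bound}, it then remains to use the stable max-margin scheme of size $d+1$ for $\cF_{d,\lin}$.
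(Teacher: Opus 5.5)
Your proposal is correct and follows the paper's proof essentially step for step. You use the same maps $\kappa^{\ct}(S)=U^{-1}(\kappa(U(S)))$ and $\rho^{\ct}(K)=(\rho(\kappa(U(K))))^{\ct,T}$, derive correctness and stability from the sandwich $\kappa(U(S))\subseteq U(K)\subseteq U(S)$ together with stability of $\kappa$, and get the size bound from $|U^{-1}(C)|\le|C|$. Your extra bookkeeping is a genuine improvement. Iterating single-point stability is made explicit. More importantly, tracking the index $(i,t)$ of each inflated example keeps $|U^{-1}(C)|\le|C|$ true even when inflated examples from different CoT examples coincide as labeled pairs; the paper only assumes this implicitly.
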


\begin{proof}
    Let $(\kappa, \rho)$ be the compression and reconstruction maps for $\cF$. For every $T$, we define mappings $(\kappa^T, \rho^T)$ for $\cF^{\ct,T}$ as follows. Fix a sample $S = ((x_i,y_i))_{i=1}^m$ for $\cF^{\ct,T}$. That is, $x_i \in \Sigma^\star, y_i \in \Sigma^T$ where for some $f \in \cF$: $f^{\ct,T}(x_i) = y_i$ for all $i$. Define
    \[
    \kappa^T(S) := U^{-1}(\kappa(U(S))).
    \]
    In words, we first inflate $S$ to $U(S)$, which is a binary sample for $\cF$. Then we can use $\kappa$ and obtain a compressed binary sample for $\cF$. Then, we construct from this compressed sample a compressed sample of $S$, by the definition of $U^{-1}$. For the reconstruction map, fix $A \subset S$, and define:
    \[
    \rho^T(A) := (\rho(\kappa(U(A))))^{\ct,T}.
    \]
    In words, for reconstruction, we first inflate $A$ to $U(A)$, then compress $U(A)$ using the original compression $\kappa$, and then use the original reconstruction $\rho$. Then, we just use the standard $\ct,T$ generator for the next-token-generator obtained from the original reconstructor $\rho$. Now, we need to prove that $(\kappa^T, \rho^T)$ is indeed a sample compression scheme for $\cF^{\ct,T}$, that is stable, and that its size is at most $k$.
    \paragraph{Correctness.}
    First, we show that $(\kappa^T, \rho^T)$ is indeed a sample compression scheme. Let $A := \kappa^T(S)$. We need to show that the function $\rho^T(A)$ is consistent with $S$. By definition of $A$, we have $A \subset S$ and $\kappa(U(S)) \subset U(A)$. Therefore, we have
    \[
    \kappa(U(S)) \subset U(A) \subset U(S).
    \]
    By stability of $(\kappa, \rho)$, the above inclusion relation implies $\kappa(U(A)) = \kappa(U(S))$. Therefore, we have
    \[
    \rho^T(A) = (\rho(\kappa(U(S))))^{\ct,T}.
    \]
    Since $(\kappa, \rho)$ is a sample compression scheme, we have
    \[
    (\rho(\kappa(U(S))))(x_i \circ y_{i, \leq t-1}) = y_{i,t}
    \]
    for all $i \in [m], t \in [T]$. So by definition:
    \[
    \rho^T(A)(x_i) = y_i
    \]
    for all $i \in [m]$.

    \paragraph{Stability.}
    Let $A$ such that $\kappa^T(S) \subset A \subset S$. Then
    \[
    \kappa(U(S)) \subset U(A) \subset U(S).
    \]
    Therefore, stability of $(\kappa, \rho)$ gives $\kappa(U(S)) = \kappa(U(A))$, which implies $\kappa^T(A) = \kappa^T(S)$, as required.

    \paragraph{Size.}
    We have
    \[
    \lvert \kappa^T(S) \rvert
    =
    \lvert U^{-1}(\kappa(U(S))) \rvert
    \leq
    \lvert \kappa(U(S)) \rvert
    \leq
    k,
    \]
    which concludes the proof.
\end{proof}

We may now prove Theorem~\ref{thm:linear-classifiers-cot-bound}.

\begin{proof}[Proof of Theorem~\ref{thm:linear-classifiers-cot-bound}]
    It was proved by \cite{vapnik1971uniform, long2020complexity} that the standard class of linear classifiers in $\mathbb{R}^d$ has a stable sample compression scheme of size $d+1$ (and no side information). As $\cF_{d, \lin}$ is essentially the same class defined only on the boolean cube in $\mathbb{R}^d$, the same result holds for $\cF_{d, \lin}$. Therefore, Theorem~\ref{thm:linear-stable-sample-compression} implies that $\cF_{d, \lin}^{\ct,T}$ has a stable sample compression scheme of size $d+1$ for all $T$. Theorem~\ref{thm:stable-compression-scheme} now implies the stated bound.
\end{proof}

\section{Non-VC classes could have no uniform bound} \label{sec:non-vc-proofs}
In Section~\ref{sec:cot-proofs}, we established that the technique of bounding the sample complexity of $\cF^{\ct- T}$  to obtain bounds for $\ct$-learning is useful if and only if $\VC(\cF) < \infty$ (Theorem~\ref{thm:cot-bound-vc}). However, this result does not imply that there is no other way to learn $\cF^{\etoe,T}$ with $\ct$ supervision, since this is an easier task than learning  $\cF^{\ct,T}$, which includes learning the entire chain-of-thought. Indeed, when learning $\cF^{\etoe,T}$ with $\ct$ supervision, the learner receives the entire chain-of-thought during training, as in learning $\cF^{\ct,T}$, but tested only on the final bit of the chain-of-thought, exactly as tested when learning $\cF^{\etoe,T}$ without $\ct$ supervision. So, while the focus on VC-classes in classic PAC-learning is well justified by VC-theory, the situation for non VC-classes in the autoregressive learning model is not as clear.

In this section, we provide some partial justification for focusing on VC-classes. We show that a general characterization of the sample complexity of learning $\cF^{\etoe,T}$ when $\VC(\cF) = \infty$ cannot exist in a very strong sense. Concretely, we prove that there exists a class $\cF$ (with infinite VC-dimension), such that for all even $T$ values it holds that $\VC(\cF^{\etoe,T}) = 0$, while for all odd $T$ values, $\cF^{\etoe,T}$ is not learnable even with CoT supervision. That is, for even $T$ values, $\cF^{\etoe,T}$ is trivially learnable (not even a single data point is required to learn it with perfect precision). In stark contrast, for odd values of $T$, $\cF^{\etoe,T}$ is not learnable even if the entire chain-of-thought is revealed to the learner during training. Formally, we prove the following

\begin{theorem} \label{thm:infinite-vc-no-characterization}
    There exists a class $\cF$ such that for all even $T$ we have $\VC(\cF^{\etoe,T}) = 0$, and for all odd $T$, $\cF$ is not $\ct$-learnable.
\end{theorem}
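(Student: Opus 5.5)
The plan is to build the class from a single ``secret bit'' per prompt. Each function stores a secret bit on a prompt family, writes it at the first generation step, and from then on generation is a deterministic function of the string itself. The parity of $T$ then decides whether the final token is that secret bit or a fixed value.

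\textbf{Construction.} For $n\ge 1$ let $x_n:=0\,1^n\,0$. The set $\{x_n\}$ is prefix-free, and a string of the form $x_n w$ with $w\in\{0,1\}^\star$ determines $n$ and $w$ uniquely. For every $g:\mathbb{N}_+\to\{0,1\}$ define $f_g$ as follows.
\begin{itemize}
\item $f_g(x_n)=g(n)$.
\item $f_g(x_n w)=0$ if $|w|$ is odd.
\item $f_g(x_n w)=w[1]$ if $|w|\ge 2$ is even.
\item $f_g(z)=0$ for every $z$ not of the form $x_n w$.
\end{itemize}
Let $\cF:=\{f_g\}$. All members of $\cF$ agree everywhere except on the prompts $x_n$ themselves, and there they can take arbitrary values. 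In particular $\VC(\cF)=\infty$, which is consistent with the theorem.

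\textbf{Even $T$.} I will check that every $f_g^{\etoe,T}$ is the same function, so the class is a singleton and $\VC(\cF^{\etoe,T})=0$. The final token is produced by applying $f_g$ to a string of length $|z|+T-1$, where $z$ is the prompt. Split into cases by $z$.
\begin{itemize}
\item If $z$ is not of the form $x_nw$, no extension of it is of that form either, so the output is $0$.
\item If $z=x_n w$ with $|w|$ even (including $w$ empty), the final query is $x_n w'$ with $|w'|=|w|+T-1$ odd, so the output is $0$.
\item If $z=x_n w$ with $|w|$ odd, then $|w'|\ge 2$ is even, so the output is $w'[1]=w[1]$. This bit is already part of the prompt, hence the same for every $g$.
\end{itemize}
The point of the design is that the secret bit $g(n)$ is consulted only at the empty suffix. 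Any prompt with a nonempty suffix therefore carries its own ``secret'' in its first suffix bit.

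\textbf{Odd $T$.} On the prompt $x_n$, the first generated token is $g(n)$. The final query is $x_n w'$ with $|w'|=T-1$ even. If $T=1$ the output is $g(n)$ directly; otherwise it is $w'[1]=g(n)$. So $f_g^{\etoe,T}(x_n)=g(n)$ for all $n$. Moreover, the whole chain $f_g^{\ct,T}(x_n)$ is a fixed function of $(n,g(n))$, so CoT supervision on $x_n$ reveals only $g(n)$ and nothing about $g(n')$ for $n'\ne n$.

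The non-learnability then follows from the standard no-free-lunch argument. Given $m$, take $D$ uniform on $\{x_1,\dots,x_{2m}\}$ and draw $g$ uniformly at random. At most $m$ of the indices $n$ appear in the sample, and each unseen label $g(n)$ is an independent fair coin given the sample. Hence the expected error of any learner is at least $1/4$. An averaging argument then gives a fixed $g$ for which the error is at least $1/8$ with constant probability, contradicting CoT-learnability for small $\epsilon,\delta$.

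The main obstacle is the even-$T$ case: every prompt $x_nw$ must have a $g$-independent answer. This is exactly why the value at even-length suffixes is $w[1]$ rather than $g(n)$. Once that choice is made, the verification above is a short case check.
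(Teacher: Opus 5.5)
\paragraph{Gap in the even-$T$ case.} Your construction fails here. The step ``if $z$ is not of the form $x_nw$, no extension of it is of that form either'' is false. A proper prefix of some $x_n$ is not of that form, yet generation can extend it to $x_n$.

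Concretely, take $z=01^n$. It contains only one $0$, while every $x_mw$ contains at least two, so every $f_g$ outputs the default $0$ on it. The string then becomes $01^n0=x_n$, and the next query is $f_g(x_n)=g(n)$. After that your rules produce $0,g(n),0,g(n),\dots$, so the chain from $01^n$ is $0,g(n),0,g(n),\dots$. Hence $f_g^{\etoe,T}(01^n)=g(n)$ for every even $T$; already $T=2$, $z=01$ gives $f_g^{\etoe,2}(01)=g(1)$. Since $g$ is arbitrary, the infinite set $\{01^n:n\ge 1\}$ is shattered by $\cF^{\etoe,T}$. So $\VC(\cF^{\etoe,T})=\infty$ for all even $T$, not $0$.

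\paragraph{The missing idea and the fix.} The prompts that carry the secret bit must be unreachable by generation from any other string. A chain can only enter $x_n$ from a proper prefix of $x_n$. On such a prefix every $f_g$ outputs the default $0$, so the chain just appends zeros. It therefore suffices that every $x_n$ ends with $1$, the opposite of the default symbol; this is exactly why the paper uses $Q_k=0^k1$.

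With, say, $x_n:=0^n1$ ($n\ge1$, still prefix-free), your rules work as intended. Output $0$ on odd-length suffixes and $w[1]$ on even nonempty suffixes. A prompt $z$ not of the form $x_nw$ then either starts with $1$, so no extension is of that form, or equals $0^j$, whose chain is $0^T$. Either way the output is $0$. Your remaining even-$T$ cases and your odd-$T$ no-free-lunch argument go through unchanged.

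After this fix your construction is essentially the paper's. The only difference is that the paper defines $f_b$ just on the strings $A_{k,y,t}$ and $B_{k,y,t}$ that actually occur along chains from $Q_k$, whereas you define the rule on all $x_nw$. That difference is harmless.
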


While strong, Theorem~\ref{thm:infinite-vc-no-characterization} provides only a partial justification for focusing on VC-classes, as there might be some other dimension that captures learnability of $\cF^{\etoe,T}$ (with or without $\ct$ supervision) for all $T$. Determining whether such a dimension exists remains open.

Let us define the class $\cF$ used in the proof of Theorem~\ref{thm:infinite-vc-no-characterization}. For every infinite string $b = b_0,b_1,.... \in \{0,1\}^\mathbb{N}$, define the function $f_b$ as follows.
For any $k \geq 1, y \in \{0,1\}$ and $t \geq 0$, denote:
\[
Q_k := 0^k1, \quad A_{k,y,t} := 0^k1 (y0)^t y, \quad B_{k,y,t} := 0^k 1 (y 0 )^{t+1}.
\]
Now, define
\[
f_b(x) :=
\begin{cases}
    b_k & x=Q_k \text{ for some } k, \\
    0   & x = A_{k,y,t} \text{ for some } k,y,t, \\
    y   & x = B_{k,y,t} \text{ for some } k,y,t, \\
    0   & \text{Otherwise.}
\end{cases}
\]
Let
\[
\cF := \{f_b: b \in \{0,1\}^\mathbb{N}\}.
\]

Now, let us prove the part of Theorem~\ref{thm:infinite-vc-no-characterization} about even $T$ values.
\begin{lemma}\label{lem:infinite-vc-no-characterization-even}
    We have $\VC(\cF^{\etoe,T}) = 0$ for all even $T$.
\end{lemma}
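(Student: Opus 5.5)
The plan is to show that for even $T$ the function $f_b^{\etoe,T}$ does not depend on $b$ at all. Then $\cF^{\etoe,T}$ consists of a single function, no point can receive both labels, and $\VC(\cF^{\etoe,T})=0$. To get there, I will trace the autoregressive trajectory from an arbitrary prompt $x$ and check that the only place $b$ is ever consulted is the very first step from a prompt of the form $Q_k$. I will also check that, for even $T$, the value consulted there does not affect the last token.

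First I would verify that $f_b$ is well defined, i.e.\ that the forms $Q_k$, $A_{k,y,t}$, $B_{k,y,t}$ are pairwise distinct and determine $(k,y,t)$. The prefix $0^k1$ fixes $k$. A string $Q_k$ ends in $1$ right after the prefix. Otherwise the suffix after $0^k1$ has odd length for $A$ and even positive length for $B$, and its first bit fixes $y$.

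Next I would record the transition structure under one apply-and-append step, which is the heart of the argument.
\begin{itemize}
\item From $A_{k,y,t}$ the generator outputs $0$ and moves to $A_{k,y,t}\circ 0=B_{k,y,t}$.
\item From $B_{k,y,t}$ it outputs $y$ and moves to $A_{k,y,t+1}$.
\item From $Q_k$ it outputs $y:=b_k$ and moves to $A_{k,y,0}$.
\end{itemize}
From an "otherwise" string $x$ the output is $0$. I claim $x\circ 0$ is again "otherwise", and would prove it by ruling out each special form.
\begin{itemize}
\item $x\circ0\neq Q_k$, since $Q_k$ ends in $1$.
\item If $x\circ 0=B_{k,y,t}$, then $x=A_{k,y,t}$, a contradiction.
\item If $x\circ0=A_{k,0,t}$, then $x=Q_k$ when $t=0$, or $x=B_{k,0,t-1}$ when $t\ge1$, again a contradiction.
\end{itemize}
This case check is the only step needing care, and I expect it to be the main (though mild) obstacle.

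With the transitions in hand, the chains of thought are as follows.
\begin{itemize}
\item From an otherwise string the chain is $0^T$.
\item From $A_{k,y,t}$ the chain is $0,y,0,y,\dots$, where $y$ is read off $x$ itself.
\item From $B_{k,y,t}$ the chain is $y,0,y,0,\dots$.
\item From $Q_k$ the chain is $b_k,0,b_k,0,\dots$.
\end{itemize}
For even $T$ the last token is $0$, $y$, $0$, $0$ respectively. None of these depend on $b$; in particular the $b$-dependent tokens from $Q_k$ occur only at odd positions. Hence all $f_b^{\etoe,T}$ coincide, so no point is shattered and $\VC(\cF^{\etoe,T})=0$ for all even $T$.
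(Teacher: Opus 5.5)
Your proof is correct and follows the paper's argument: the chain from $Q_k$ is $(b_k0)^{T/2}$, which ends in $0$, and from every other prompt the trajectory does not depend on $b$, so all $f_b^{\etoe,T}$ coincide. Your transition analysis also checks explicitly that a trajectory starting outside the $Q_k$'s never passes through some $Q_{k'}$, a step the paper's one-line remark (that $f_b(x)$ is independent of $b$ for such $x$) leaves implicit.
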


\begin{proof}
    Let $x \in \Sigma^\star$. We prove that $\{x\}$ is not shattered, by showing that $f_b^{\etoe,T}(x) = f^{\etoe,T}_{b'}(x)$ for all $b,b'$. Let us verify that the claim holds in every possible case. If $x = Q_k$ for some $k$, then
    \[
    f_b^{\ct,T}(x) = (b_k0)^{\frac{T}{2}}, \quad \text{and} \quad f_{b'}^{\ct,T}(x) = (b'_k0)^{\frac{T}{2}}.
    \]
    Therefore, $f_b^{\etoe,T}(x) = f_{b'}^{\etoe,T}(x) = 0$. For any other $x \in \Sigma^\star$, the definition of $f_b(x)$ is independent of $b$ and thus $f_b^{\ct,T}(x) = f_{b'}^{\ct,T}(x)$, which immediately implies $f_b^{\etoe,T}(x) = f_{b'}^{\etoe,T}(x)$.
\end{proof}

Now, we prove the part of Theorem~\ref{thm:infinite-vc-no-characterization} about odd $T$ values. The proof is very similar to the classic lower bound proof of the fundamental theorem of PAC learning, which establishes that classes of infinite VC-dimension are not learnable. In the classic lower bound, the proof relies on the idea that learning a part of a shattered set does not expose any information on the other parts of it. The same idea applies here: If the target base function is $f_b$, then learning a part of $b$ does not expose any information on the other parts of it, since all $b\in \{0,1\}^{\mathbb{N}}$ are possible.

\begin{lemma}\label{lem:infinite-vc-no-characterization-odd}
    For all odd $T$, $\cF$ is not $\ct$-learnable.
\end{lemma}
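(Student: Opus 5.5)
We will run the classical no-free-lunch argument on the prompts $Q_1,Q_2,\ldots$. The first step is to compute the chain of thought of $f_b$ on $Q_k$ for odd $T$. Starting from $Q_k=0^k1$, the first token is $b_k$. The current string is then $0^k1b_k = A_{k,b_k,0}$, so the next token is $0$, giving $B_{k,b_k,0}=0^k1(b_k0)^1$. That string outputs $b_k$, giving $A_{k,b_k,1}$, and so on. By induction,
\[
f_b^{\ct,T}(Q_k) = (b_k0)^{(T-1)/2}b_k ,
\]
so the final token is $f_b^{\etoe,T}(Q_k)=b_k$ when $T$ is odd.

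The key observation is that the whole chain on $Q_k$ depends on $b$ only through the single bit $b_k$. Likewise, every other prompt $x$ has a trajectory independent of $b$. Indeed, once we leave $Q_k$, all strings visited are of the $A$, $B$ or ``otherwise'' type, and on these $f_b$ does not depend on $b$. The only remaining case is a prompt of the form $x=Q_k$ itself, which is already covered.

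Hence a CoT-labeled sample drawn from a distribution supported on $\{Q_1,\ldots,Q_{2m}\}$ reveals exactly the bits $b_k$ for the indices $k$ that appear in the sample, and nothing else. This holds for every sample, regardless of which other bits of $b$ are set.

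The second step is the lower bound. Suppose a CoT-learner uses $m$ examples. Take $D$ uniform on $\{Q_1,\ldots,Q_{2m}\}$ and draw $b$ uniformly at random. Conditioned on the sample, at least $m$ of the points are unseen, and their bits are independent uniform coins, independent of the learner's output. So the expected error of any learner is at least $1/4$. Averaging then gives a fixed $b$ for which the expected error is at least $1/4$. Since the error is bounded by $1$, this forces $\Pr[\text{error}>1/8]\ge 1/7$. This contradicts learnability with $\eps=1/8$ and $\delta<1/7$, for every sample size $m$, so $\cF$ is not $\ct$-learnable. Because the output is tested only on $f_b^{\etoe,T}(Q_k)=b_k$, the argument applies directly to the CoT-training, e2e-test setting of Definition~\ref{def:cot-learnability}.

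The only delicate step is the case check in the first step: we must confirm that the strings $A_{k,y,t}$, $B_{k,y,t}$ and $Q_k$ are pairwise distinct. The $A$ strings have odd length after the prefix $0^k1$ while the $B$ strings have even length, so no string is assigned two values by the definition. Given this, the trajectory computation and the independence claim from the first step follow, and the rest is the standard argument.
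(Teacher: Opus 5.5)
Your proof is correct and follows essentially the paper's argument: a no-free-lunch construction with $D$ uniform on $\{Q_1,\ldots,Q_{2m}\}$ and a uniformly random $b$, exploiting that the CoT label on $Q_k$ depends on $b$ only through $b_k$ and that $f_b^{\etoe,T}(Q_k)=b_k$ for odd $T$. The differences are minor. You verify explicitly the trajectory $(b_k0)^{(T-1)/2}b_k$ and the well-definedness of $f_b$, which the paper takes for granted. You also conclude via expected error $\ge 1/4$ plus a reverse-Markov bound, whereas the paper uses binomial symmetry to get error $\ge 1/4$ with probability at least $1/2$.
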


\begin{proof}
    Fix an arbitrary learner $\Lrn$ for $\ct$-learning $\cF$ with generation length $T$, where $T$ is odd. Let $n \in \mathbb{N}$ be the sample size used by $\Lrn$. We will show that there exists a realizable distribution $D$ over $\cX$ and a target function $f_b \in \cF$ such that
    \[
    \Pr_{x_1, \ldots, x_n \sim D^n} \mleft[\Pr_{x \sim D}[f_{\Lrn}(x) \neq f_b^{\etoe,T}(x)] \geq 1/4 \mright] \geq 1/2,
    \]
    which implies that $\cF$ is not $\ct$-learnable with generation length $T$. Let $m:= 2n$, and let $D$ be the uniform distribution over $\{Q_1, \ldots, Q_m\}$. We choose the target concept at random: draw $b_1, \ldots, b_m \sim \Ber(1/2)$ and let the target concept be $f_b$ where $b := b_1 \ldots b_m 0^{\mathbb{N}}$.

    Fix a realized training sample $S = (x_1, f_b^{\ct -T}(x_1)), \ldots, (x_n, f_b^{\ct -T}(x_n))$  and let $I:= \{k \in [m]: Q_k \in S\}$ be the set of indices $k$ where $Q_k$ appears in $S$. Note that $|I| \leq n= m/2$, and thus the set $J := [m] \backslash I$ is of size at least $n = m/2$. The crucial fact is that since $f_b^{\etoe,T}(Q_k) = b_k$ and $b_k \sim \Ber(1/2)$ iid, for all $k \in J$ we have:
    \[
    \Pr[f_b^{\etoe,T}(Q_k) = 0|S] = \Pr[b_k = 0|S] = 1/2.
    \]
    That is, $f_b^{\etoe,T}(Q_k) = 0$ with probability exactly half, even conditioned on $S$.
    Therefore, conditioned on $S$ and any internal randomness of the learner we have
    \[
    \Pr[f_{\Lrn}(Q_k) \neq f_b^{\etoe,T}(Q_k) | S, \Lrn] = 1/2
    \]
    for all $k \in J$. Now let $B$ be the number of ``bad" $k$ values, that is,
    \[
    B:= \lvert \{k \in J: f_{\Lrn}(Q_k) \neq f_b^{\etoe,T}(Q_k) \} \rvert.
    \]
    So, $B| S, \Lrn \sim \Bin (|J|, 1/2)$. Since the binomial distribution is symmetric, we have:
    \begin{equation} \label{eq:infinite-vc-no-characterization-odd-bad-bound}
        \Pr[B \geq |J|/2 | S, \Lrn] \geq 1/2.
    \end{equation}
    Now, since $D$ is uniform over $x_1, \ldots, x_m$, we have
    \[
    \Pr_{x \sim D}[f_{\Lrn}(x) \neq f_b^{\etoe,T}(x)] = B/m.
    \]
    So, we have:
    \[
    \Pr_{x \sim D} \mleft[f_{\Lrn}(x) \neq f_b^{\etoe,T}(x) | B \geq |J|/2  \mright] \geq \frac{|J|/2}{m} \geq \frac{n/4}{m} = 1/4.
    \]
    From \eqref{eq:infinite-vc-no-characterization-odd-bad-bound}, the event $B \geq |J|/2$ occurs with probability at least $1/2$, conditioned on every $S$ and $\Lrn$, and so applying the law of total probability concludes the proof.
\end{proof}

\begin{proof}[Proof of Theorem~\ref{thm:infinite-vc-no-characterization}]
    The theorem immediately follows from Lemma~\ref{lem:infinite-vc-no-characterization-even} and Lemma~\ref{lem:infinite-vc-no-characterization-odd}.
\end{proof}

\section{Acknowledgments}
We thank Gal Vardi for presenting the time-invariant autoregressive learning model to us, and for helpful discussions and explanations on the work \cite{joshi2025theory}.
We also thank Zachary Chase, Amit Daniely, Nathan Srebro and Elchanan Mossel for insightful discussions on the problems in the scope of this work.

Steve Hanneke acknowledges support by grant no.\ 2024243 from the United States - Israel Binational Science Foundation (BSF).

Idan Mehalel is supported by the European Research Council (ERC) under the European Union’s Horizon 2022 research and innovation program (grant agreement No. 101041711), the Israel Science Foundation (grant number 2258/19), and the Simons Foundation (as part of the Collaboration on the Mathematical and Scientific Foundations of Deep Learning).

Shay Moran is a Robert J.\ Shillman Fellow; he acknowledges support by Israel PBC-VATAT, by the Technion Center for Machine Learning and Intelligent Systems (MLIS), and by the the European Union (ERC, GENERALIZATION, 101039692). Views and opinions expressed are however those of the author(s) only and do not necessarily reflect those of the European Union or the European Research Council Executive Agency. Neither the European Union nor the granting authority can be held responsible for them.

\bibliographystyle{alphaurl}
\bibliography{bibShay.bib}

@book{vapnik:74,
author = {V. Vapnik and A. Chervonenkis},
title = {Theory of Pattern Recognition},
publisher = {Nauka, Moscow},
year = 1974
}

@book{shalev2014understanding,
  title={Understanding machine learning: From theory to algorithms},
  author={Shalev-Shwartz, Shai and Ben-David, Shai},
  year={2014},
  publisher={Cambridge university press}
}

@article{vapnik1971uniform,
  title={On the uniform convergence of relative frequencies of events to their probabilities},
  author={Vapnik, Vladimir N. and Chervonenkis, Alexey Ya.},
  journal={Theory of Probability \& Its Applications},
  volume={16},
  number={2},
  pages={264--280},
  year={1971},
  publisher={SIAM}
}

@article{joshi2025theory,
  title={A Theory of Learning with Autoregressive Chain of Thought},
  author={Joshi, Nirmit and Vardi, Gal and Block, Adam and Goel, Surbhi and Li, Zhiyuan and Misiakiewicz, Theodor and Srebro, Nathan},
  journal={arXiv preprint arXiv:2503.07932},
  year={2025}
}

@article{walker2022tree,
  title={Tree dimension and the Sauer-Shelah dichotomy},
  author={Walker, Roland},
  journal={arXiv preprint arXiv:2203.12211},
  year={2022}
}

@inproceedings{assouad1983densite,
  title={Densit{\'e} et dimension},
  author={Assouad, Patrick},
  booktitle={Annales de l'institut Fourier},
  volume={33},
  number={3},
  pages={233--282},
  year={1983}
}

@article{littlestone1986relating,
  title={Relating data compression and learnability},
  author={Littlestone, Nick and Warmuth, Manfred},
  year={1986},
  publisher={Citeseer}
}

@article{moran2016sample,
  title={Sample compression schemes for VC classes},
  author={Moran, Shay and Yehudayoff, Amir},
  journal={Journal of the ACM (JACM)},
  volume={63},
  number={3},
  pages={1--10},
  year={2016},
  publisher={ACM New York, NY, USA}
}

@inproceedings{bousquet2020proper,
  title        = {Proper Learning, Helly Number, and an Optimal SVM Bound},
  author       = {Bousquet, Olivier and Hanneke, Steve and Moran, Shay and Zhivotovskiy, Nikita},
  booktitle    = {Proceedings of the Thirty Third Conference on Learning Theory},
  series       = {Proceedings of Machine Learning Research},
  volume       = {125},
  pages        = {582--609},
  year         = {2020},
  editor       = {Abernethy, Jacob and Agarwal, Shivani},
  publisher    = {PMLR},
  note         = {COLT 2020 (9--12 July 2020)},
  url          = {https://proceedings.mlr.press/v125/bousquet20a.html},
  eprint       = {2005.11818},
  archivePrefix= {arXiv},
  primaryClass = {cs.LG}
}

@article{haussler1995generalization,
  title={A generalization of Sauer's lemma},
  author={Haussler, David and Long, Philip M},
  journal={Journal of Combinatorial Theory, Series A},
  volume={71},
  number={2},
  pages={219--240},
  year={1995},
  publisher={Elsevier}
}

@article{natarajan1989learning,
  title={On learning sets and functions},
  author={Natarajan, Balas K},
  journal={Machine Learning},
  volume={4},
  number={1},
  pages={67--97},
  year={1989},
  publisher={Springer}
}

@article{valiant1984theory,
  title={A theory of the learnable},
  author={Valiant, Leslie G},
  journal={Communications of the ACM},
  volume={27},
  number={11},
  pages={1134--1142},
  year={1984},
  publisher={ACM New York, NY, USA}
}

@article{blumer1989learnability,
  title={Learnability and the Vapnik-Chervonenkis dimension},
  author={Blumer, Anselm and Ehrenfeucht, Andrzej and Haussler, David and Warmuth, Manfred K},
  journal={Journal of the ACM (JACM)},
  volume={36},
  number={4},
  pages={929--965},
  year={1989},
  publisher={ACM New York, NY, USA}
}

@article{ehrenfeucht1989general,
  title={A general lower bound on the number of examples needed for learning},
  author={Ehrenfeucht, Andrzej and Haussler, David and Kearns, Michael and Valiant, Leslie},
  journal={Information and Computation},
  volume={82},
  number={3},
  pages={247--261},
  year={1989},
  publisher={Elsevier}
}

@article{hanneke2016optimal,
  title={The optimal sample complexity of PAC learning},
  author={Hanneke, Steve},
  journal={Journal of Machine Learning Research},
  volume={17},
  number={38},
  pages={1--15},
  year={2016}
}

@article{sauer1972density,
  title={On the density of families of sets},
  author={Sauer, Norbert},
  journal={Journal of Combinatorial Theory, Series A},
  volume={13},
  number={1},
  pages={145--147},
  year={1972},
  publisher={Elsevier}
}

@article{shelah1972combinatorial,
  title={A combinatorial problem; stability and order for models and theories in infinitary languages},
  author={Shelah, Saharon},
  journal={Pacific Journal of Mathematics},
  volume={41},
  number={1},
  pages={247--261},
  year={1972},
  publisher={Mathematical Sciences Publishers}
}

@inproceedings{long2020complexity,
  title={On the complexity of proper distribution-free learning of linear classifiers},
  author={Long, Philip M and Long, Raphael J},
  booktitle={Algorithmic Learning Theory},
  pages={583--591},
  year={2020},
  organization={PMLR}
}

@article{malach2023auto,
  title={Auto-regressive next-token predictors are universal learners},
  author={Malach, Eran},
  journal={arXiv preprint arXiv:2309.06979},
  year={2023}
}

@article{huang2025transformers,
  title={Transformers provably learn chain-of-thought reasoning with length generalization},
  author={Huang, Yu and Wen, Zixin and Singh, Aarti and Chi, Yuejie and Chen, Yuxin},
  journal={arXiv preprint arXiv:2511.07378},
  year={2025}
}

@article{tsilivis2025reinforcement,
  title={How reinforcement learning after next-token prediction facilitates learning},
  author={Tsilivis, Nikolaos and Malach, Eran and Ullrich, Karen and Kempe, Julia},
  journal={arXiv preprint arXiv:2510.11495},
  year={2025}
}

@article{shalev2025reasoning,
  title={From Reasoning to Super-Intelligence: A Search-Theoretic Perspective},
  author={Shalev-Shwartz, Shai and Shashua, Amnon},
  journal={arXiv preprint arXiv:2507.15865},
  year={2025}
}

@article{geneson2025mistake,
  title={Mistake-bounded online learning with operation caps},
  author={Geneson, Jesse and Li, Meien and Tang, Linus},
  journal={arXiv preprint arXiv:2509.03892},
  year={2025}
}

@article{daniely2025online,
  title={Online Learning of Neural Networks},
  author={Daniely, Amit and Mehalel, Idan and Mossel, Elchanan},
  journal={arXiv preprint arXiv:2505.09167},
  year={2025}
}

@article{balcan2025learning,
  title={On Learning Verifiers for Chain-of-Thought Reasoning},
  author={Balcan, Maria-Florina and Blum, Avrim and Li, Zhiyuan and Sharma, Dravyansh},
  journal={arXiv preprint arXiv:2505.22650},
  year={2025}
}

@article{altabaa2025cot,
  title={CoT Information: Improved Sample Complexity under Chain-of-Thought Supervision},
  author={Altabaa, Awni and Montasser, Omar and Lafferty, John},
  journal={arXiv preprint arXiv:2505.15927},
  year={2025}
}

@inproceedings{LechnerB24,
  author       = {Tosca Lechner and
                  Shai Ben{-}David},
  editor       = {Shipra Agrawal and
                  Aaron Roth},
  title        = {Inherent limitations of dimensions for characterizing learnability
                  of distribution classes},
  booktitle    = {The Thirty Seventh Annual Conference on Learning Theory, June 30 -
                  July 3, 2023, Edmonton, Canada},
  series       = {Proceedings of Machine Learning Research},
  pages        = {3353--3374},
  publisher    = {{PMLR}},
  year         = {2024},
  url          = {https://proceedings.mlr.press/v247/lechner24a.html},
  bibsource    = {dblp computer science bibliography, https://dblp.org}
}

@article{doliwa2014recursive,
  title={Recursive teaching dimension, VC-dimension and sample compression},
  author={Doliwa, Thorsten and Fan, Gaojian and Simon, Hans Ulrich and Zilles, Sandra},
  journal={The Journal of Machine Learning Research},
  volume={15},
  number={1},
  pages={3107--3131},
  year={2014},
  publisher={JMLR. org}
}

@article{talagrand1994sharper,
  title={Sharper bounds for Gaussian and empirical processes},
  author={Talagrand, Michel},
  journal={The Annals of Probability},
  pages={28--76},
  year={1994},
  publisher={JSTOR}
}

\end{document}

\subsection{The e2e-learning taxonomy of VC-classes} \label{sec:e2e-taxonomy}


We say that a function $r:\mathbb{N} \to \mathbb{R_+}$ is a \emph{well-behaved at-most-linear} growth rate if:
\begin{enumerate}
    \item $r(1)\geq 1$;
    \item $r(T)$ is monotone non-decreasing and unbounded.
    \item $r(T)/T$ is monotone non-increasing.
\end{enumerate}

\begin{theorem} \label{thm:e2e-taxonomy-large-vc}
    Let $r:\mathbb{N} \to \mathbb{R}_+$ be a well-behaved at-most-linear growth rate, and denote $c := \floor*{r(1)}$. Then there exists a base class $\cF_c$ such that $\VC(\cF_c) = c$ and $\VC(\cF_c^{\etoe,T}) = \Theta(r(T))$ for all $T$.
\end{theorem}

The main ingredient of Theorem~\ref{thm:e2e-taxonomy-large-vc} is the special case $c=1$. 

\begin{lemma} \label{lem:e2e-taxonomy}
    Let $r:\mathbb{N} \to \mathbb{R}_+$ be a well-behaved at-most-linear growth rate such that $r(1) = 1$. Then there exists a base class $\cF$ such that $\VC(\cF) = 1$ and $\VC(\cF^{\etoe,T}) = \Theta(r(T))$ for all $T$.
\end{lemma}

The first step is to define a class parametrized by a fixed infinite subset $N \subset \mathbb{N}$. The identity of $N$ will be determined in the sequel, and will depend on $r$. For any function $a: N \to \{0,1\}$ define the infinite binary sequence $b_a \in \{0,1\}^\mathbb{N}$ as
\[
b_t^a
:=
\begin{cases}
    a(t) & t\in N, \\
    0    & t \notin N
\end{cases}
\]
for all $a \in \mathbb{N}$. In words, $b^a$ copies $a$ in the indices of $N$, and it is constant $0$ elsewhere. Now define the base function $f_a:\{0,1\}^\star \to \{0,1\}$ as
\[
f_a(x)
:=
\begin{cases}
    b^a_{|x| + 1} & x \text{ is a prefix of } b^a, \\
    0    & \text{otherwise.}
\end{cases}
\]
In words, when $f_a$ is fed with a prefix of $b_a$, it returns the next bit of $b_a$, and otherwise it returns $0$. So $f_a$ is ``informative" only when given input $x$ which is a prefix of $b_a$ and such that $|x|+1 \in N$. Now, define the base class
\[
\cF := \{f_a\}_{a: N \to \{0,1\}}.
\]

We now define the two key quantities
\[
L(T):= \max_{s \in \mathbb{N}, s \geq T} \lvert N \cap [s + 1, s+ T ] \rvert,
\quad
U(T):= \max_{s \in \mathbb{N}} \lvert N \cap [s + 1, s+ T ] \rvert.
\]

Note that $L(T),U(T)$ are almost the same: $L(T)$ is slightly more restrictive and allows only $s \geq T$. The quantity $\lvert N \cap [s + 1, s+ T ] \rvert$ counts how many times $N$ intersects with a ``generation window of length $T$, when starting at input of length $s$. The main idea is that those quantities essentially control $\VC(\cF^{\etoe,T})$, since only inputs that are all prefixes of a single $b^a$ can form a shattered set, as will soon be formally established.

We now prove desired properties of $\cF:= \cF(N)$ for fixed $N$. In the sequel, we will show how to choose $N$ according to $r$.

\begin{lemma} \label{lem:e2e-vc}
    We have
    \begin{enumerate}
        \item $\VC(\cF) = 1$.
        \item For all $T \geq 1$: $L(T) \leq \VC(\cF^{\etoe,T}) \leq \max\{1, U(T) \}$.
    \end{enumerate}
\end{lemma}

\begin{proof}
\noindent \textbf{First item.}
    We will show that $\VC(\cF) \leq 1$, which is in fact the important part of the first item. However, one can also prove that $\VC(\cF) \geq 1$. Let $x,y \in \{0,1\}^\star$ be two distinct strings. We consider the two possible cases. If none of them is a prefix of the other, Then no $f_a$ can label both with $1$, and then $\{x,y\}$ is not shattered. So, suppose that $y$ is a prefix of $x$. Let $a$ such that $f_a(x) = 1$. Then, by definition, the value of $f_a(y)$ is determined by the $(|y|+1)^{th}$ bit of the string $x$. Therefore, for any $a'$ with $f_{a'}(x) = 1$, the value of $f_{a'}(y)$ is identical. Thus $\{x,y\}$ is not shattered. Therefore, $\VC(\cF) \leq 1$.
    
\medskip  
\noindent \textbf{Second item: upper bound.}
    Fix $T \in \mathbb{N}$, and let $S = \{x_1, \ldots, x_m \}$ be shattered by $\cF^{\etoe,T}$. If $m=1$ we are done, so suppose that $m > 1$. So, there exists $a: N \to \{0,1\}$ such that $f_a^{\etoe,T}(x_i) = 1$ for all $i\in [m]$. Note that for all $i$, $x_i \prec b^a$, where the notation $x \prec y$ means that $x$ is a prefix of $y$. Indeed, otherwise for all $y$, $x_i \circ y$ is not a prefix of $a$ and thus $f_a(x_i \circ y) = 0$ for all $y$, which implies that $f_a^{\etoe,T}(x_i) = 0$. So, suppose without loss of generality that $x_1 \prec \ldots \prec x_m \prec b^a$. Now, we claim that for all $i$:
    \begin{equation} \label{eq:lem-e2e-cor-taxonomy-upper-help}
        |x_i| + T \in N \cap \mleft[|x_m|+1, |x_m|+T \mright].
    \end{equation}
    First, $f_a^{\etoe,T}(x_i) = 1$ implies that $|x_i| + T \in N$, and the upper bound $|x_i| + T \leq |x_m| + T$ is immediate as well by the fact $x_i \prec x_m$. So, it remains to prove that $|x_i| + T \geq |x_m| + 1$. We assume that $m > 1$, so there exists $a' \neq a$ such that
    \[
    f_{a'}^{\etoe,T}(x_m) = f_{a}^{\etoe,T}(x_m)  =  1, \text{ and} \quad  f_{a'}^{\etoe,T}(x_i) = 0.
    \]
    Since $f_{a'}^{\etoe,T}(x_m) = f_{a}^{\etoe,T}(x_m)  =  1$, we have $x_m \prec b^a$ and also $x_m \prec b^{a'}$, which means that $b^a_t = b^{a'}_t$ for all $t \leq |x_m|$. Therefore, if $|x_i| + T \leq |x_m|$, then it is implied that   $f_{a'}^{\etoe,T}(x_i) = f_{a}^{\etoe,T}(x_i)$, which contradicts the assumption $f_{a'}^{\etoe,T}(x_i) = 0$, since $f_{a}^{\etoe,T}(x_i) = 1$. So, it follows that  $|x_i| + T > |x_m|$, implying \eqref{eq:lem-e2e-cor-taxonomy-upper-help}. Since the lengths of the different strings $x_i$ are distinct, we deduce
    \[
    m \leq \lvert N \cap \mleft[|x_m|+1, |x_m|+T \mright] \rvert \leq U(T),
    \]
    which proves the upper bound.
    
\medskip    
\noindent \textbf{Second item: lower bound.}
    Fix $T \geq 1$, and $s \geq T$, and denote
    \[
    N \cap [s + 1, s+T ] = \{n_1, \ldots, n_m\}.
    \]
    For each $j\in [m]$, define $x_j := 0^{n_j - T}$ which is a well defined non-empty string, since $n_j \geq T+1$. We will show that the set $S:= \{x_1, \ldots, x_m\}$ is shattered by $\cF^{\etoe,T}$. Fix a labeling $y_1, \ldots, y_m \in \{0,1\}^m$. Define $a:N \to \{0,1\}$ by
    \[
    a(t)
    =
    \begin{cases}
        y_j & t= n_j: j \in [m]. \\
        0 & \text{Otherwise}.
    \end{cases}
    \]
    Let us show that $f_a^{\etoe,T}(x_j) = y_j$ for all $j \in [m]$. By definition of $x_j$ and $a$, it suffices to show that the first $n_j - T$ bits of $b^a$ are $0$. Suppose that $n_m$ is the maximal value in $N \cap [s + 1, s+T ]$, so it suffices to show that the first $n_m - T$ bits of $b^a$ are $0$. Let $t \leq n_m - T$, so $t \leq s + T - T = s$. Therefore, $t \notin \{n_1, \ldots, n_m\}$ by definition. Therefore, $b_t^a = 0$ by definition, implying that indeed $f_a^{\etoe,T}(x_j) = y_j$ for all $j \in [m]$. It is implied that $S$ is shattered and thus
    \[
    \VC(\cF^{\etoe,T}) \geq m = \mleft \lvert N \cap [s + 1, s+T ] \mright \rvert.
    \]
    Taking the maximum over all $s \geq T$ yields the stated lower bound on $\VC(\cF^{\etoe,T})$.
\end{proof}

We now link between the choice of $N$ and the given rate $r$. The set $N$ will be constructed from widely separated blocks. For every integer $L$, we construct a block with uniformly spaced $r(L)$ many indices chosen from $[L]$, as follows:
\[
B(L) := \mleft \{ \ceil*{ j \frac{ L}{r(L)}} : j \in [r(L)] \mright \}.
\]
Note that this definition assumes for simplicity that $r(L) \in \mathbb{N}$. When this is not the case, the results does not change by more than a constant fraction, so the results stated in Lemma~\ref{lem:e2e-taxonomy} are not affected.

\begin{lemma} \label{lem:e2e-cor-taxonomy-uniform-blocks}
    We have $|B(L)| = r(L)$. Furthermore, for every $T \leq L$, and for every $u \in \mathbb{Z}$ it holds that
    \[
    |B(L) \cap [u+1, u+T]| \leq r(T) + 1.
    \]
\end{lemma}
\begin{proof}
    It is immediate from the definition and the assumption $r(L) \leq L$ that $|B(L)| = r(L)$. The furthermore part of the lemma is the more interesting part, and it follows from the assumption that the function $r(T)/T$ is non-increasing. Let us formally prove it. Fix $u,T$. Let us bound the total number of $j \in [r(L)]$ such that 
    \[
    u+1 \leq \ceil*{ j \frac{ L}{r(L)}} \leq u+T.
    \]
    Any $j$ satisfying the above satisfies
    \[
    u \leq  j \frac{ L}{r(L)} \leq u+T,
    \]
    which in turn gives
    \[
    \frac{ r(L)}{L} u \leq  j \leq \frac{ r(L)}{L} (u+T).
    \]
    So, the total length of the interval $j$ can lie in is at most
    \[
    \frac{ r(L)}{L} (u+T) - \frac{ r(L)}{L} u = \frac{ r(L)}{L} T.
    \]
    The number of integers lying at an interval of length $\frac{ r(L)}{L} T$ is at most $\frac{ r(L)}{L} T + 1.$ Therefore, we have
    \[
    |B(L) \cap [u+1, u+T]| \leq \frac{ r(L)}{L} T + 1,
    \]
    and so it remains to show that $\frac{ r(L)}{L} T \leq r(T)$, which is equivalent to
    \[
    \frac{ r(L)}{L} \leq \frac{r(T)}{T}.
    \]
    Since $T \leq L$ and the function $r(T)/T$ is non-increasing, the inequality above holds.
\end{proof}

We now choose the specific set $N$ used in the proof, as a function of the given rate $r$. We construct $N$ from an infinite set of widely separated blocks. For every $m \geq 1$, we choose $L$ from the previous lemma to be $L_m := 2^m$. This choice is convenient, but many choices of a scale for $L_m$ will work, as long as it is increasing and at most exponential. We now recursively define infinitely many widely separated blocks of the form $B(L_m)$, from which $N$ is constructed. For the base case, let $A_1 = 4L_1$.  Suppose that $A_1, \ldots, A_m$ are defined, and we'll define $A_{m+1}$.  Let $S_m := \sum _{j \leq m} r(L_j)$. Since we assumed that $r$ is unbounded, the number
\[
g_m := \min \{n \in \mathbb{N}: r(n) \geq S_m\}
\]
exists. Now, define
\[
A_{m+1} := A_m + L_m + \max\{ 1, g_m, 4 L_{m+1} - A_m - L_m \}.
\]
Now, we construct the blocks from the sets $B(L_m)$, with an added bias of $A_m$ used to separate them:
\[
N := \bigcup_m (A_m + B(L_m)).
\]

We now lower and upper bound $L(T), U(T)$.

\begin{lemma} \label{lem:e2e-main}
    For all $T$:
    \[
    \frac{1}{2} r(T) \leq L(T) \leq U(T) \leq 3r (T).
    \]
\end{lemma}

\begin{proof}
    The bound $L(T) \leq U(T)$ holds by definition, so it remains to prove the lower bound on~$L(T)$ and the upper bound on $U(T)$.
    
    \medskip
    \noindent\textbf{Lower bound.}
    Fix $T \geq 1$. Let $m$ such that $L_m \leq T \leq L_{m+1} = 2 L_m$. Note that $(A_m + B(L_m)) \subset [A_m + 1, A_m + L_m]$ and since $L_m \leq T$, we have $(A_m + B(L_m)) \subset [A_m + 1, A_m + T]$. Now, note that
    \[
    A_m \geq 4 L_m > T.
    \]
    Indeed, the first inequality holds by definition of $A_m$, and the second inequality holds since $T \leq L_{m+1} = 2 L_m < 4 L_m$. Therefore,  $A_m$ is a value of $s$ that the maximization in the definition of $L(T)$ is taken over. Thus, we have
    \[
    L(T) \geq |A_m + B(L_m)| = |B(L_m)| = r(L_m) \geq \frac{1}{2} r(2 L_m) \geq \frac{1}{2} r(T).
    \]
    To see the second inequality, note that since the function $r(T)/T$ is non-increasing, we have
    \[
    r(L_m)/L_m \geq r(2L_m)/(2 L_m),
    \]
    which is precisely the second inequality.
    The final inequality holds since $T \leq 2 L_m$ and $r$ is non-decreasing. This proves the lower bound on $L(T)$.

    \medskip
    \noindent\textbf{Upper bound.}
    Fix $T \geq 1$ and $s\in\mathbb{N}$. We will show that $\lvert N \cap [s+1, s+T]| \leq 3 r(T)$. If $ N \cap [s+1, s+T] = \emptyset$ we are done. Otherwise, let $m$ be the largest index such that $(A_m + B(L_m)) \cap [s+1, s+T] \neq \emptyset$. Note that there must be such maximal $m$, since the interval $[s+1, s+T]$ contains a finite number of integers, and each of the blocks that $N$ is constructed from begins at an integer $A_m$, where $A_{m+1} > A_m$ for all $m$. Now, we have two cases. In the first case, $m$ is the only index such that $(A_m + B(L_m)) \cap [s+1, s+T] \neq \emptyset$. In this case, we have
    \[
     \lvert N \cap [s+1, s+T]| = |(A_m + B(L_m)) \cap [s+1, s+T]|.
    \]
    If $T \leq L_m$, then
    \[
    |(A_m + B(L_m)) \cap [s+1, s+T]| = |B(L_m) \cap [s - A_m + 1, s - A_m + T]| \leq r(T) + 1,
    \]
    where the inequality is due to Lemma~\ref{lem:e2e-cor-taxonomy-uniform-blocks}.
    In the complementing case $L_m < T$, we trivially have
    \[
    |(A_m + B(L_m)) \cap [s+1, s+T]| \leq |B(L_m)| = r(L_m) \leq r(T), 
    \]
    since $r$ is non-decreasing. So, either way, we have
    \begin{equation} \label{eq:lem-e2e-cor-taxonomy-bounds-help-1}
        |N \cap [s+1, s+T]| \leq r(T) + 1.
    \end{equation}
    Now, in the second case, there is another index $j < m$ such that $(A_j + B(L_j)) \cap [s+1, s+T] \neq \emptyset$. In this case, we have
    \begin{equation} \label{eq:lem-e2e-cor-taxonomy-bounds-help-2}
        \lvert N \cap [s+1, s+T]| \leq |(A_m + B(L_m)) \cap [s+1, s+T]| + \sum_{j < m} r(L_j),
    \end{equation}   
    since $|(A_j + B(L_j)) \cap [s+1, s+T]| \leq |B(L_j)| = r(L_j)$.
    We now bound the terms in the right-hand-side of \eqref{eq:lem-e2e-cor-taxonomy-bounds-help-2}. We already bounded the first term, so it remains to bound $\sum_{j < m} r(L_j)$. By assumption, the interval $[s+1, s+T]$ intersects with both $A_m + B(L_m)$ and $A_j + B(L_j)$ for some $j < m$. Therefore,
    \[
    s+T > A_m, \text{ and} \quad s < A_{m-1} + L_{m-1},
    \]
    since $s+T$ must be reached after  reaching the first point of $A_m + B(L_m)$, and $s$ must be reached before leaving the block $A_{m-1} + B(L_{m-1})$. The two inequalities imply:
    \[
    T > A_m - s > A_m - A_{m-1} - L_{m-1}.
    \]
    By the recursive definition of $A_m$, we have:
    \begin{align*}
    A_m - A_{m-1} - L_{m-1}
    &=
    A_{m-1} +L_{m-1} + \max\{1, g_{m-1}, 4 L_m - A_{m-1} - L_{m-1}\} - A_{m-1} - L_{m-1} \\
    &= \max\{1, g_{m-1}, 4 L_m - A_{m-1} - L_{m-1}\},
    \end{align*}
    so $A_m - A_{m-1} - L_{m-1} \geq 1$ and thus $r(A_m - A_{m-1} - L_{m-1})$ is defined.
    Also, since $r$ is non-decreasing, $T > A_m - A_{m-1} - L_{m-1}$ and the above inequality  implies:
    \[
    r(T) \geq r(A_m - A_{m-1} - L_{m-1}) \geq r(g_{m-1}) \geq \sum _{j < m} r(L_j),
    \]
    where the final inequality is by definition of $g_{m-1}$. This bounds the second term of the right-hand-side of \eqref{eq:lem-e2e-cor-taxonomy-bounds-help-2} as
    \[
    \sum _{j < m} r(L_j) \leq  r(T).
    \]
    So overall, we have
    \[
     \lvert N \cap [s+1, s+T]| \leq r(T) + 1 +  r(T) \leq 3 r(T).
    \]
    As the above holds for all $s$, we obtain
    \[
    U(T) \leq 3 r(T),
    \]
    which concludes the proof.
\end{proof}

We may now prove the theorem.
\begin{proof}[Proof of Lemma~\ref{lem:e2e-taxonomy}]
    Combining Lemma~\ref{lem:e2e-vc} and Lemma~\ref{lem:e2e-main}, we have
    \[
    \frac{1}{2} r(T) \leq L(T) \leq \VC(\cF^{\etoe,T})  \leq \max \{1, U(T) \} \leq 3 r(T)
    \]
    for all $T$.
\end{proof}

We now prove the extension of Lemma~\ref{lem:e2e-taxonomy} for the case $c > 1$. The main tool we use is the \emph{cartesian product class} of domain-disjoint classes. The cartesian product class of domain-disjoint classes is defined over two classes, $\cH_1 \subset \{0,1\}^{\cX_1}$ and $\cH_2 \subset \{0,1\}^{\cX_2}$, where $\cX_1 \cap \cX_2 = \emptyset$. Let $\cX$ be any domain such that $\cX_1 \cup \cX_2 \subset \cX$. Then the cartesian product class of $\cH_1, \cH_2$ is denoted by $\cH_1 \uplus \cH_2$, and defined as
\[
\cH_1 \uplus \cH_2 := \{h_v : v \in \cH_1 \times \cH_2\},
\]
where each $h_v$ is defined as:

\[
h_v(x)
:=
\begin{cases}
    v_i(x) & x \in \cX_i, i \in \{1,2\}, \\
    0      & \text{Otherwise.}
\end{cases}
\]

We use the following lemma, which is very similar to a result stated in \cite[Lemma 16]{doliwa2014recursive}. For completeness, we prove it here.

\begin{lemma} \label{lem:cartesian}
    Let $\cH_1 \subset \{0,1\}^{\cX_1}, \cH_2 \subset \{0,1\}^{\cX_2}$ where $\cX_1 \cap \cX_2 = \emptyset$. Then
    \[
    \VC(\cH_1 \uplus \cH_2) = \VC(\cH_1) + \VC(\cH_2).
    \]
\end{lemma}

\begin{proof}
    For the lower bound, let $S_1 \subset \cX_1$ and $S_2 \subset \cX_2$ be sets shattered by $\cH_1$ and $\cH_2$, of sizes  $\VC(\cH_1)$ and $\VC(\cH_2)$, respectively. Let $S = S_1 \cup S_2$. Since $\cX_1 \cap \cX_2 = \emptyset$, we have $|S| = \VC(\cH_1) + \VC(\cH_2)$. It remains to show that $S$ is shattered by $\cH_1 \uplus \cH_2$. Let $\boldsymbol{y^{(1)}}, \boldsymbol{y^{(2)}}$ be labelings of $S_1, S_2$. Since $S_1, S_2$ are shattered, there are $h_1 \in \cH_1, h_2 \in \cH_2$ realizing $\boldsymbol{y^{(1)}}, \boldsymbol{y^{(2)}}$. Thus, $h_{h_1,h_2} \in \cH_1 \uplus \cH_2$ realizes the labeling of $S$ given by the labelings $\boldsymbol{y^{(1)}}, \boldsymbol{y^{(2)}}$.

    For the upper bound, let $S$ be a set shattered by $\cH_1 \uplus \cH_2$. First note that for any $x \notin \cX_1 \cup \cX_2$, the class $\cH_1 \uplus \cH_2$ is constant $0$, so $S \subset \cX_1 \cup \cX_2$. As $\cX_1 \cap \cX_2 = \emptyset$, we may partition $S$ to $S = S_1 \cup S_2$ where $S_1 \subset \cX_1$, $S_2 \subset \cX_2$, and $S_1 \cap S_2 = \emptyset$. By definition, for every labeling of $S$ there are $h_1 \in \cH_1, h_2 \in \cH_2$ such that $h_{h_1,h_2} \in \cH_1 \uplus \cH_2$ realized this labeling. In particular, for every labeling of $S_1$ there exists $h_1 \in \cH_1$ that realizes it. Therefore, $S_1$ is shattered by $\cH_1$ and thus $|S_1| \leq \VC(\cH_1)$. The same argument holds for $S_2$, and thus $|S| \leq \VC(\cH_1) + \VC(\cH_2)$.
\end{proof}

Lemma~\ref{lem:cartesian} can of course be extended to handle a product of any finite number of classes.
We may now prove Theorem~\ref{thm:e2e-taxonomy-large-vc}.

\begin{proof}[Proof of Theorem~\ref{thm:e2e-taxonomy-large-vc}]
    Let $r$ be an at-most-linear well-behaved rate. Define the normalized rate as
    \[
    \tilde{r}(T) = \frac{r(T)}{r(1)}
    \]
    for all $T$. So $\tilde{r}$ is also an at-most-linear well-behaved rate, and now we also have $\tilde{r}(1) = 1$. Thus, Lemma~\ref{lem:e2e-taxonomy} applies for $\tilde{r}$, and there exists a class $G: \{0,1\}^\star \to \{0,1\}$, such that $\VC(G) = 1$ and for all $T$, $\VC(G^{\etoe,T}) = \Theta(\tilde{r}(T))$.

    For simplicity of presentation, suppose that $r(1) \in \mathbb{N}$, and so $c=r(1)$. We now create $c$ many copies of $G$ defined over disjoint domains. Towards this end, define the following set of prefix-incomparable strings:
    \[
    P : =\{p_i : i \in [c]\},
    \]
    where $p_i = 0^i 1$. Note that indeed for all $p_i,p_j \in P$, $p_i$ is not a prefix of $p_j$. For each $p_i \in P$, define the domain
    \[
    X_i := \{p_i x: x\in \{0,1\}^\star\},
    \]
    and the class $G_i \subset \{0,1\}^{\cX_i}$ by
    \[
    G_i := \{g^{(i)} : g \in G\}
    \]
    where $g^{(i)}$ is defined as:
    \[
    g^{(i)}(p_i x) = g(x)
    \]
    for all $x\in \{0,1\}^\star$.
    Now, let $\cF: \{0,1\}^\star \to \{0,1\}$ be the cartesian product class of $G_1, \ldots, G_c$.
    Since clearly $\VC(G_i) = \VC(G)$ for all $i$, Lemma~\ref{lem:cartesian} implies that
    \[
    \VC(\cF) = \sum_{i \in [c]} \VC(G_i) = c.
    \]
    Since the autoregressive process only applies bits at the end of the input string and does not affect existing bits of the input, it is also clear that $\VC(G_i^{\etoe,T}) = \VC(G^{\etoe,T})$ for all $i,T$. Now, we claim that
    \begin{equation} \label{eq:cartes-eq}
        \cF^{\etoe,T} = G_1^{\etoe,T} \uplus \ldots \uplus G_c^{\etoe,T}.
    \end{equation} 
    Having \eqref{eq:cartes-eq} in hand, we are done, since Lemma~\ref{lem:cartesian} implies that
    \[
    \VC(\cF^{\etoe,T}) = \sum_{i \in [c]} \VC(G^{\etoe,T}_i)
    =
    c \cdot \Theta(\tilde{r}(T))
    =
    c \cdot \Theta(r(T)/c)
    =
    \Theta(r(T)),
    \]
    as required.
    
    So, it remains to prove \eqref{eq:cartes-eq}.
    
    Let $f_v^{\etoe,T} \in \cF^{\etoe,T}$, where $v \in G_1 \times \ldots \times G_c$. Thus, for any instance $x$ we have $f_v^{\etoe,T}(x) = v_i^{\etoe}(x)$ if $x \in \cX_i$ for some $i$, and otherwise $f_v^{\etoe,T}(x) = 0$. Therefore, $f_v^{\etoe,T} \in G_1^{\etoe,T} \uplus \ldots \uplus G_c^{\etoe,T}$ by definition, and so $\cF^{\etoe,T} \subset G_1^{\etoe,T} \uplus \ldots \uplus G_c^{\etoe,T}$.
    
    Now, let $g_{v} \in G_1^{\etoe,T} \uplus \ldots \uplus G_c^{\etoe,T}$. By definition, $v \in G_1^{\etoe,T} \times \ldots \times G_c^{\etoe,T}$, and for any $x$, we have $g_v(x) = v_i(x)$ if $x \in \cX_i$ and $g_v(x) = 0$ otherwise. By definition, for every $i$ there exists $h_i \in G_i$ such that $v_i = h_i^{\etoe,T}$. For $h:= (h_1, \ldots, h_c) \in G_1 \times \ldots \times G_c$, let $f_h$ be defined for any $x$ by $f_h(x) = h_i(x)$ if $x \in \cX_i$ for some $i$, and $f_h(x) = 0$ otherwise. By definition, $f_h \in \cF$ and thus $f_h^{\etoe,T} \in \cF^{\etoe,T}$. Therefore, if $f_h^{\etoe,T} = g_v$ then we are done. Let us show that indeed $f_h^{\etoe,T} = g_v$. Let $x \in \cX_i$ for some $i$, then
    \[
    f_h^{\etoe,T}(x) = h_i^{\etoe,T}(x) = v_i(x) = g_v(x).
    \]
    Otherwise, we have
    \[
    f_h^{\etoe,T}(x) = 0 = g_v(x).
    \]
    Therefore, we also have $G_1^{\etoe,T} \uplus \ldots \uplus G_c^{\etoe,T} \subset \cF^{\etoe,T}$.
\end{proof}